\DeclareMathOperator{\KL}{KL}
\DeclareMathOperator{\TV}{TV}
\newcommand{\myvec}[1]{\bm{\mathit{#1}}}
\newcommand{\diff}{\mathop{} \! \mathrm{d}}
 \journalname{Vietnam Journal of Mathematics}
\begin{document}

\title{A Survey on Surrogate Approaches to Non-negative Matrix Factorization
}


\author{Pascal Fernsel         \and
        Peter Maass 
}


\institute{P. Fernsel \at
              Center for Industrial Mathematics, University
              of Bremen, D-28334 Bremen, Germany \\
              Tel.: +49-421-218-63814\\
              \email{pfernsel@math.uni-bremen.de}           
           \and
           P. Maass \at
              Center for Industrial Mathematics, University
              of Bremen, D-28334 Bremen, Germany \\
              Tel.: +49-421-218-63 801 \\
              Fax.: 49-421-218-98 63 801 \\
              \email{pmaass@math.uni-bremen.de}
}

\date{}

\maketitle

\begin{abstract}
Motivated by applications in hyperspectral imaging we investigate methods for approximating a high-dimensional non-negative matrix $\myvec{Y}$ by a product of two lower-dimensional, non-negative matrices $\myvec{K}$ and $\myvec{X}.$ This so-called non-negative matrix factorization is based on defining suitable Tikhonov functionals, which combine a discrepancy measure for $\myvec{Y}\approx\myvec{KX}$ with penalty terms for enforcing additional properties of $\myvec{K}$ and $\myvec{X}$.\newline
The minimization is based on alternating minimization with respect to $\myvec{K}$ or $\myvec{X}$, where in each iteration step one replaces the original Tikhonov functional by a locally defined surrogate functional. The choice of surrogate functionals is crucial: It should allow a comparatively simple minimization and simultaneously its first order optimality condition should lead to multiplicative update rules, which automatically preserve non-negativity of the iterates.\newline
We review the most standard construction principles for surrogate functionals for Frobenius-norm and Kullback-Leibler discrepancy measures. We extend the known surrogate constructions by a general framework, which allows to add a large variety of penalty terms.\newline
The paper finishes by deriving the corresponding alternating minimization schemes explicitely and by applying these methods to MALDI imaging data.
\keywords{Non-negative matrix factorization \and Multi-parameter regularization \and Surrogate approaches \and MM algorithms \and Orthogonal NMF \and Supervised NMF \and MALDI mass spectrometry}
\end{abstract}

\section{Introduction} \label{sec:Introduction}
Matrix factorization methods for large scale data sets have seen increasing scientific interest recently due to their central role for a large variety of machine learning tasks. The main aim of such approaches is to obtain a low-rank approximation of a typically large data matrix by factorizing it into two smaller matrices.  One of the most widely used matrix factorization method is the principal component analysis (PCA), which uses the singular value decomposition (SVD) of the given data matrix. \newline
In this work, we review the particular case of non-negative matrix factorization (NMF), which is favorable for a range of applications where the data under investigation naturally satisfies a non-negativity constraint. These include dimension reduction, data compression, basis learning, feature extraction as well as higher level tasks such as classification or clustering \cite{demol,leuschner18,LD13,Phon-Amnuaisuk13}. PCA based approaches without any non-negativity constraints would not lead to satisfactory results in this case since possible negative entries of the computed matrices cannot be easily interpreted for naturally non-negative datasets.\newline
Typically, the NMF problem is formulated as a minimization problem. The corresponding cost function includes a suitable discrepancy term, which measures the difference between the data matrix and the calculated factorization, as well as penalty terms to tackle the non-uniqueness of the NMF, to deal with numerical instabilities but also to provide the matrices with desirable properties depending on the application task. The NMF cost functions are commonly non-convex and require tailored minimization techniques to ensure the minimization but also the non-negativity of the matrix iterates. This leads us to the so called surrogate minimization approaches, which are also known as majorize-minimization algorithms \cite{hunter,lange,zhang}. Such surrogate methods have been investigated intensively for some of the most interesting discrepancy measures and penalty terms \cite{demol,defrise,fevotte,lange,leeseung,TF12,zhang}. The idea is to replace the original cost function by a so called surrogate functional, such that its minimization induces a monotonic decrease of the objective function. It should be constructed in such a way that it is easier to minimize and that the deduced update rules should preserve the non-negativity of the iterates, which typically leads to alternating, multiplicative update rules.\newline
It appears, that these constructions are obtained case-by-case employing different analytical approaches and different motivations for their derivation. The purpose of this paper, first of all, is to give a unified approach to surrogate constructions for NMF discrepancy functionals. This general construction principle is then applied to a wide class of functionals obtained by different combinations of divergence measures and penalty terms, thus extending the present state of the art for surrogate based NMF-constructions.\newline 
Secondly, one needs to develop minimization schemes for these functionals. Here we develop concepts for obtaining multiplicative minimization schemes, which automatically preserve non-negativity without the need for further projections.\newline
Finally, we exemplify some characteristic properties of the different functionals with MALDI imaging data, which are particularly high-dimensional and challenging hyperspectral data sets.\newline
The paper is organized as follows. Section \ref{sec:NMF} introduces the basic definition of the considered NMF problems. Section \ref{sec:Surrogate_functionals} gives an overview about the theory of surrogate functionals as well as the construction principles. This is then exemplified in Section \ref{sec:surrogates_for_NMF_functionals} for the most important cases of discrepancy terms, namely the Frobenius norm and the Kullback-Leibler divergence as well as for a variety of penalty terms. Section \ref{sec:Surrogate_based_NMF_algorithms} discusses alternating minimization schemes for these general functionals with the aim to obatin non-negativity-preserving, multiplicative iterations. Finally, Section \ref{sec:MALDI_Imaging} contains numerical results for MALDI imaging data.

\subsection{Notation} \label{subsec:notation}
Throughout this work, we will denote matrices in bold capital Latin or Greek letters (e.g. $ \myvec{Y, K, \Psi, \Lambda} $) while vectors will be written in small bold Latin or Greek letters (e.g. $ \myvec{c, d, \beta, \zeta} $). The entries of matrices and vectors will be indicated in a non-bold format to distinguish between the $ i $-th entry $ x_i $ of a vector $ \myvec{x} $ and $ n $ different vectors $ \myvec{x}_j $ for $ j=1,\dots,n. $ In doing so, we write for the entry of a matrix $ \myvec{M} $ in the $ i $-th row and the $ j $-th column $ M_{ij} $ and the $ i $-th entry of a vector $ \myvec{x} $ the symbol $ x_i. $ The same holds for an entry of a matrix product: the $ ij $-th entry of the matrix product $ \myvec{MN} $ will be indicated as $ (MN)_{ij}.$\newline 
Furthermore, we will use a dot notation to indicate rows and columns of matrices. For a matrix $ \myvec{M} $ we will write $ \myvec{M}_{\bullet, j} $ for the $ j $-th column and $ \myvec{M}_{i, \bullet} $ for the $ i $-th row of the matrix.\newline
What is more, we will use $ \Vert \cdot \Vert $ for the usual Euclidean norm, $ \Vert \myvec{M}\Vert_1\coloneqq \sum_{ij} \vert M_{ij}\vert $ for the 1-norm and $ \Vert \myvec{M}\Vert_F$ for the Frobenius norm of a matrix $ \myvec{M}. $\newline
Besides that, we will use equivalently the terms \textit{function} and \textit{functional} for a mapping into the real numbers.\newline
Finally, the dimensions of the matrices in the considered NMF problem are reused in this work and will be introduced in the following section.
\section{Non-negative Matrix Factorization} \label{sec:NMF}
Before we introduce the basic NMF problem, we give the following definition to clarify the meaning of a non-negative matrix.
\begin{definition}
	A matrix $ \myvec{M}\in \mathbb{R}^{m\times n} $ is called \textbf{non-negative} if $ \myvec{M}\in \mathbb{R}_{\geq0}^{m\times n}, $ where $ \mathbb{R}_{\geq 0}\coloneqq \{ x\in \mathbb{R}  :  x\geq 0 \}. $
\end{definition}
The non-negativity of an arbitrary matrix $ \myvec{M} $ will be abbreviated for simplicity as $ \myvec{M} \geq 0 $ in the later sections of this work.\newline
The basic NMF problem requires to approximately decompose a given non-negative matrix $\myvec{Y}\in \mathbb{R}_{\geq 0}^{n\times m}$ into two smaller non-negative matrix factors  $\myvec{K}\in \mathbb{R}_{\geq 0}^{n\times p}$ and $\myvec{X}\in \mathbb{R}_{\geq 0}^{p\times m},$ such that $p \ll \min(n,m)$ and
\begin{equation*}
\myvec{Y}\approx \myvec{KX}.
\end{equation*}
For an interpretation let us assume, that we are given $m$ data vectors $\myvec{Y}_{\bullet, j} \in \mathbb{R}^n$ for $ j=1, \dots, m, $ which are stored column-wise in the matrix $\myvec{Y}.$ Similarly for $k=1,\dots,p$ we denote by $\myvec{K}_{\bullet, k}$, respectively $\myvec{X}_{k, \bullet}$, the column vectors of $\myvec{K}$, respectively the row vectors of $\myvec{X}.$ We then obtain the following approximation for the column vectors $\myvec{Y}_{\bullet, j} $ as well as the row vectors $\myvec{Y}_{i,\bullet}: $
\begin{align*}
\myvec{Y}_{\bullet, j} &\approx \sum_{k=1}^{p} \myvec{K}_{\bullet, k} X_{kj}, \\
\myvec{Y}_{i,\bullet} &\approx \sum_{k=1}^{p} K_{ik}\myvec{X}_{k,\bullet},\\
\myvec{Y}\approx \myvec{KX} &= \sum_{k=1}^{p} \myvec{K}_{\bullet, k} \myvec{X}_{k, \bullet}.
\end{align*}
Note that the product $ \myvec{K}_{\bullet, k} \myvec{X}_{k, \bullet} $ on the right hand side of the third equation yields rank-one matrices for every $ k. $\newline
By these representations, we can regard the rows $\myvec{X}_{k,\bullet}$ as a low-dimensional set of basis vectors, which are tailored for approximating the high-dimensional data vectors, i.e. NMF solves the task of basis learning with non-negativity constraints.\newline
Following the interpretation given above, we can also regard NMF as a basis for compression. $\myvec{K}$ and $\myvec{X}$ are determined by storing $(n+m) \cdot p$ coefficients, as opposed to $n\cdot m$ coefficients for $\myvec{Y}$. The columns of $\myvec{K}$ can be regarded as characteristic components of the given data set $\{\myvec{Y}_{\bullet, j}\}_j$. If these data vectors are input for a classification task, one can use the $p$ correlation values with the column vectors of $\myvec{K}$ as features for constructing the classification scheme, which yields efficient and qualitatively excellent classifications, see \cite{leuschner18,Phon-Amnuaisuk13,TCP11}.\newline
The standard variational approach for constructing an NMF is to define a suitable discrepancy measure $D(\cdot,\cdot)$ between $\myvec{Y} $ and $ \myvec{KX} $ and to minimize the resulting functional. Despite their seemingly simple structure NMF problems are ill-posed, non-linear and non-convex, i.e. they require stabilization techniques as well as tailored approaches for minimization. In this paper, we consider discrepancy measures based on divergences \cite{hennequin}.

\begin{definition}[Divergence]
	Let $ \Omega $ be an arbitrary set. A \textbf{divergence} $ D $ is a map $ D: \Omega \times \Omega \to \mathbb{R}, $ which fulfills the following properties:
	\begin{itemize}
		\item[(i)] $ D(x, y) \geq 0 \quad \forall (x, y)\in \Omega \times \Omega $ \label{itm:def:Divergenz:1}
		\item[(ii)] $ D(x, y) = 0 \Leftrightarrow x=y$ \label{itm:def:Divergenz:2}
	\end{itemize}
\end{definition}

\begin{definition}[$ \beta $-divergence] \label{def:beta-divergence}
	The $ \beta $\textbf{-divergence} $ d_\beta : \mathbb{R}_{>0} \times \mathbb{R}_{>0} \rightarrow \mathbb{R}_{\geq 0} $ for $ \beta \in \mathbb{R} $ is defined as
	\begin{align}\label{eq:def:beta-Divergenz:Skalar}
	d_\beta (x, y) \coloneqq \begin{cases}
	\dfrac{x^\beta}{\beta (\beta-1)} + \dfrac{y^\beta}{\beta} - \dfrac{xy^{\beta-1}}{\beta - 1} \quad &\text{for} \ \beta\in \mathbb{R}\setminus \{0, 1\}, \\
	x \log \left (\dfrac{x}{y} \right ) - x + y \quad &\text{for} \ \beta=1, \\
	\dfrac{x}{y} - \log \left (\dfrac{x}{y}\right ) - 1 \quad &\text{for} \ \beta=0.
	\end{cases}
	\end{align}
	Furthermore, we define accordingly $ D_\beta: \mathbb{R}_{>0}^{n\times m} \times \mathbb{R}_{>0}^{n\times m} \rightarrow \mathbb{R}$ for arbitrary $ m, n\in \mathbb{N} $ as
	\begin{equation}\label{eq:def:beta-Divergenz:Matrix}
	D_\beta (\myvec{M}, \myvec{N}) = \sum_{i=1}^{n} \sum_{j=1}^{m} d_\beta(M_{ij}, N_{ij}).
	\end{equation}
\end{definition}
The corresponding matrix divergences are defined componentwise, i.e. $\beta =2$ yields the Frobenius norm and $\beta=1$ the Kullback-Leibler divergence.\newline
These discrepancy measures are typically amended by so called penalty terms for stabilization and for enforcing additional properties such as sparsity or orthogonality.
This yields the following general minimization task.

\begin{definition}[NMF Minimization Problem] \label{def:NMF-Minimumproblem}
	For a data matrix $ \myvec{Y}\in \mathbb{R}_{\geq 0}^{n\times m}, $ we consider the following generalized NMF minimization task
	\begin{equation}\label{eq:def:NMF-Minimumproblem:Minimumproblem}
	\min_{\myvec{K} \geq 0, \myvec{X}\geq 0 } D_\beta (\myvec{Y}, \myvec{KX}) + \sum_{\ell=1}^{L} \alpha_{\ell} \varphi_\ell (\myvec{K}, \myvec{X}).
	\end{equation}
	The functional
	\begin{equation}\label{eq:def:NMF-Minimumproblem:Kostenfunktional}
	F(\myvec{K}, \myvec{X}) \coloneqq D_\beta (\myvec{Y}, \myvec{KX}) + \sum_{\ell=1}^{L} \alpha_{\ell} \varphi_\ell (\myvec{K}, \myvec{X})
	\end{equation}
	is called the \textbf{cost functional}. Furthermore, we call
	\begin{itemize}
		\item[(i)] $ D_\beta (\myvec{Y}, \myvec{KX}) $ the \textbf{discrepancy term}, \label{itm:def:NMF-Minimumproblem:Diskrepanzterm}
		\item[(ii)] $ \alpha_{\ell} $ the \textbf{regularization parameters} or \textbf{weights} \label{itm:def:NMF-Minimumproblem:Gewichtungen}
		\item[(iii)] and $ \varphi_\ell (\myvec{K}, \myvec{X}) $ the \textbf{penalty terms}. \label{itm:def:NMF-Minimumproblem:Strafterme}
	\end{itemize}
\end{definition}
The functional in \eqref{eq:def:NMF-Minimumproblem:Kostenfunktional} is typically non-convex in $(\myvec{K},\myvec{X})$. Hence, algortihms based on alternating minimization with respect to $\myvec{K}$ or $\myvec{X}$ are favourable, i.e.

\begin{align}
\myvec{K}^{[d+1]} &= \arg\min_{\myvec{K}\geq 0}  F(\myvec{K}, \myvec{X}^{[d]}), \label{eq:AlternatingMinimzation1} \\
\myvec{X}^{[d+1]} &= \arg\min_{\myvec{X}\geq 0}  F(\myvec{K}^{[d+1]}, \myvec{X}), \label{eq:AlternatingMinimzation2}
\end{align}
where the index $ d $ denotes the iteration index of the corresponding matrices.\newline
This yields simpler, often convex  restricted problems with respect to either $\myvec{K}$ or $\myvec{X}$. Considering for example the minimization of  the NMF functional with Frobenius norm and without any penalty term, yields a high dimensional linear system $\myvec{K}^\intercal \myvec{Y} = \myvec{K}^\intercal\myvec{K} \myvec{X}$, which, however, would need to be solved iteratively.\newline
Instead, so called surrogate methods for computing NMF decompositions have been proposed recently and are introduced in the next section. They also consider alternating minimization steps for $\myvec{K}$ and $\myvec{X}$, but they replace the restricted minimization problems in \eqref{eq:AlternatingMinimzation1} and \eqref{eq:AlternatingMinimzation2} by simpler minimization tasks, which are obtained by locally replacing $F$ by surrogate functionals for $\myvec{K}$ and $\myvec{X}$ separately.
\section{Surrogate Functionals} \label{sec:Surrogate_functionals}
In this section, we discuss general surrogate approaches for minimizing general non-convex functionals, which are then exemplified for specific NMF functionals in later sections.\newline
Let us consider a general functional $F: \Omega \rightarrow \mathbb{R}$ where $\Omega \subset \mathbb{R}^N$ and the minimization problem
$$\min_{\myvec{x} \in \Omega} F (\myvec{x}). $$
We will later add suitable conditions guaranteeing the existence of minimizers or at least the existence of stationary points. Surrogate concepts replace this task by solving a sequence of comparatively simpler and convex surrogate functionals, which can be minimized efficiently. These methods are also commonly referred to as \textit{surrogate minimization} (or \textit{maximization}) \textit{algorithms} (SM) or also as \textit{MM algorithms}, where the first M stands for majorize and the second M for minimize (see also \cite{hunter,lange,zhang}). Such approaches have been demonstrated to be very useful in many fields of inverse problems, in particular for hyperspectral imaging \cite{demol}, medical imaging applications such as transmisson tomography \cite{defrise,fessler00} as well as MALDI imaging and tumor typing applications \cite{leuschner18}.\newline
Replacing a non-convex functional by a series of convex problems is the main motivation for such surrogate approaches. However, if constructed appropriately, they can also be used to replace non-differentiable functionals by a series of differentiable problems and they can be tailored such that gradient-descent methods for minimization yield multiplicative update rules which automatically incorporate non-negativity constraints without further projections.\newline
From this point on it is important to note that possible zero denominators during the derivation of the NMF algorithms as well as in the multiplicative update rules themselves will not be discussed explicitly throughout this work. Usually, this issue is handled in practice by adding a small positive constant in the denominator during the iteration scheme. In fact, the instability of NMF algorithms due to the convergence of some entries in the matrices to zero is not sufficiently discussed in the literature and still needs proper solution techniques. We will not focus on this problem and turn now to the basic definition and properties of surrogate functionals.
\subsection{Definitions and Basic Properties} \label{subsec:Definitions and basic properties}
As in \cite{leeseung}, we use the following definition of a surrogate functional.
\begin{definition}[Surrogate Functional] \label{def:Surrogat-Funktionen}
	Let $ \Omega\subseteq \mathbb{R}^N $ denote an open set and $ F: \Omega\rightarrow \mathbb{R}$ a functional defined on $\Omega$. Then $ Q_F : \Omega \times \Omega \rightarrow \mathbb{R} $ is called a \textbf{surrogate functional} or a \textbf{surrogate} for $ F, $ if it satisfies the following conditions:
	\begin{itemize}
		\item[(i)] $ Q_F(\myvec{x}, \myvec{a}) \geq F(\myvec{x})$ for all $ \myvec{x}, \myvec{a} \in \Omega $ \label{itm:def:Surrogat-Funktionen 1}
		\item[(ii)] $ Q_F(\myvec{x}, \myvec{x}) = F(\myvec{x}) $ for all $ \myvec{x}\in \Omega $ \label{itm:def:Surrogat-Funktionen 2}
	\end{itemize}
\end{definition}
This is the most basic definition, which does not require any  convexity or differentiability of the functional. However, it already allows to prove that the iteration
\begin{equation}
\myvec{x}^{[d+1]} \coloneqq \arg \min_{\myvec{x}\in \Omega} Q_F(\myvec{x}, \myvec{x}^{[d]})
\end{equation}
yields a sequence which  monotonically decreases $F$.
\begin{lemma}[Monotonic Decrease by Surrogate Functionals] \label{lem:Monotoner Abfall durch Surrogat-Funktion}
	Let $ \Omega\subseteq \mathbb{R}^N $ denote an open set, $ F: \Omega\rightarrow \mathbb{R}$ a given function  and $ Q_F $ a surrogate functional for $F$. Assume that $ \arg \min_{\myvec{x}\in \Omega} Q_F(\myvec{x}, \myvec{a}) $ is well defined for all $\myvec{a}\in \Omega $. Define the iterated updates by
	\begin{equation}\label{eq:lem:Monotoner Abfall durch Surrogat-Funktion:Regel}
	\myvec{x}^{[d+1]} \coloneqq \arg \min_{\myvec{x}\in \Omega} Q_F(\myvec{x}, \myvec{x}^{[d]})
	\end{equation}
	with $ \myvec{x}^{[0]} = \arg \min_{\myvec{x}\in \Omega} Q_F(\myvec{x}, \myvec{a}) $ for an arbitrary $ \myvec{a}\in \Omega. $ Then, $ F(\myvec{x}^{[d]})$ is a monotonoically decreasing sequence, i.e.
	\begin{equation}\label{eq:lem:Monotoner Abfall durch Surrogat-Funktion:Monotoner Abfall}
	F(\myvec{x}^{[d+1]})	\leq F(\myvec{x}^{[d]}).
	\end{equation}
\end{lemma}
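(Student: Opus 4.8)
The plan is to establish the one-step inequality \eqref{eq:lem:Monotoner Abfall durch Surrogat-Funktion:Monotoner Abfall} through a short chain of three comparisons that link $F(\myvec{x}^{[d+1]})$ and $F(\myvec{x}^{[d]})$ via the surrogate $Q_F$ evaluated with its second argument frozen at $\myvec{x}^{[d]}$. The guiding observation is that the update rule \eqref{eq:lem:Monotoner Abfall durch Surrogat-Funktion:Regel} pins the second slot of $Q_F$ to $\myvec{x}^{[d]}$ and minimizes only over the first slot, so the new objective value can be sandwiched between two surrogate values sharing that same second argument.

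First I would invoke property (i) of Definition \ref{def:Surrogat-Funktionen} with $\myvec{x}=\myvec{x}^{[d+1]}$ and $\myvec{a}=\myvec{x}^{[d]}$, which converts the function value into a surrogate value and gives the upper bound $F(\myvec{x}^{[d+1]})\leq Q_F(\myvec{x}^{[d+1]},\myvec{x}^{[d]})$. Next I would use the defining minimization in \eqref{eq:lem:Monotoner Abfall durch Surrogat-Funktion:Regel}: since $\myvec{x}^{[d+1]}$ minimizes $\myvec{x}\mapsto Q_F(\myvec{x},\myvec{x}^{[d]})$ over all of $\Omega$, its value is in particular no larger than the value at the feasible competitor $\myvec{x}^{[d]}$, yielding $Q_F(\myvec{x}^{[d+1]},\myvec{x}^{[d]})\leq Q_F(\myvec{x}^{[d]},\myvec{x}^{[d]})$. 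Finally, property (ii) collapses the diagonal surrogate value back to the original function, $Q_F(\myvec{x}^{[d]},\myvec{x}^{[d]})=F(\myvec{x}^{[d]})$. Concatenating these three relations produces exactly the claimed estimate, and since the argument uses nothing specific to the index, it holds for every $d$, so the sequence $F(\myvec{x}^{[d]})$ is monotonically decreasing.

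I do not expect a genuine analytical obstacle: the argument is purely order-theoretic and invokes neither differentiability nor convexity, in keeping with the remark that Definition \ref{def:Surrogat-Funktionen} is the most basic one. The only point demanding care is that every object in the chain be well defined, and this is precisely what the hypothesis guarantees, namely that $\arg\min_{\myvec{x}\in\Omega}Q_F(\myvec{x},\myvec{a})$ exists for every $\myvec{a}\in\Omega$; this makes both the initialization $\myvec{x}^{[0]}$ and each iterate $\myvec{x}^{[d+1]}$ meaningful, so each term appearing in the three comparisons is legitimate. Hence the whole difficulty reduces to assembling the sandwich correctly, and no estimation beyond the two defining surrogate properties is required.
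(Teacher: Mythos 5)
Your proposal is correct and coincides with the paper's own proof: both establish the chain $F(\myvec{x}^{[d+1]}) \leq Q_F(\myvec{x}^{[d+1]}, \myvec{x}^{[d]}) \leq Q_F(\myvec{x}^{[d]}, \myvec{x}^{[d]}) = F(\myvec{x}^{[d]})$ using property (i), the minimizing property of $\myvec{x}^{[d+1]}$, and property (ii), in that order. No gap, and no difference in approach.
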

\begin{proof}
	The monotone decrease   \eqref{eq:lem:Monotoner Abfall durch Surrogat-Funktion:Monotoner Abfall} follows directly from the defining properties of surrogate functionals, see Definition \ref{itm:def:Surrogat-Funktionen 1}: We obtain
	\begin{equation*}
	F(\myvec{x}^{[d+1]}) \leq Q_F(\myvec{x}^{[d+1]}, \myvec{x}^{[d]}) \overset{(\star)}{\leq} Q_F(\myvec{x}^{[d]}, \myvec{x}^{[d]}) = F(\myvec{x}^{[d]}),
	\end{equation*}
	where $ (\star) $ follows from the definition of $ \myvec{x}^{[d+1]} $ in \eqref{eq:lem:Monotoner Abfall durch Surrogat-Funktion:Regel}. \qed
\end{proof}
\begin{figure}
	\centering
	\includegraphics[width=\textwidth]{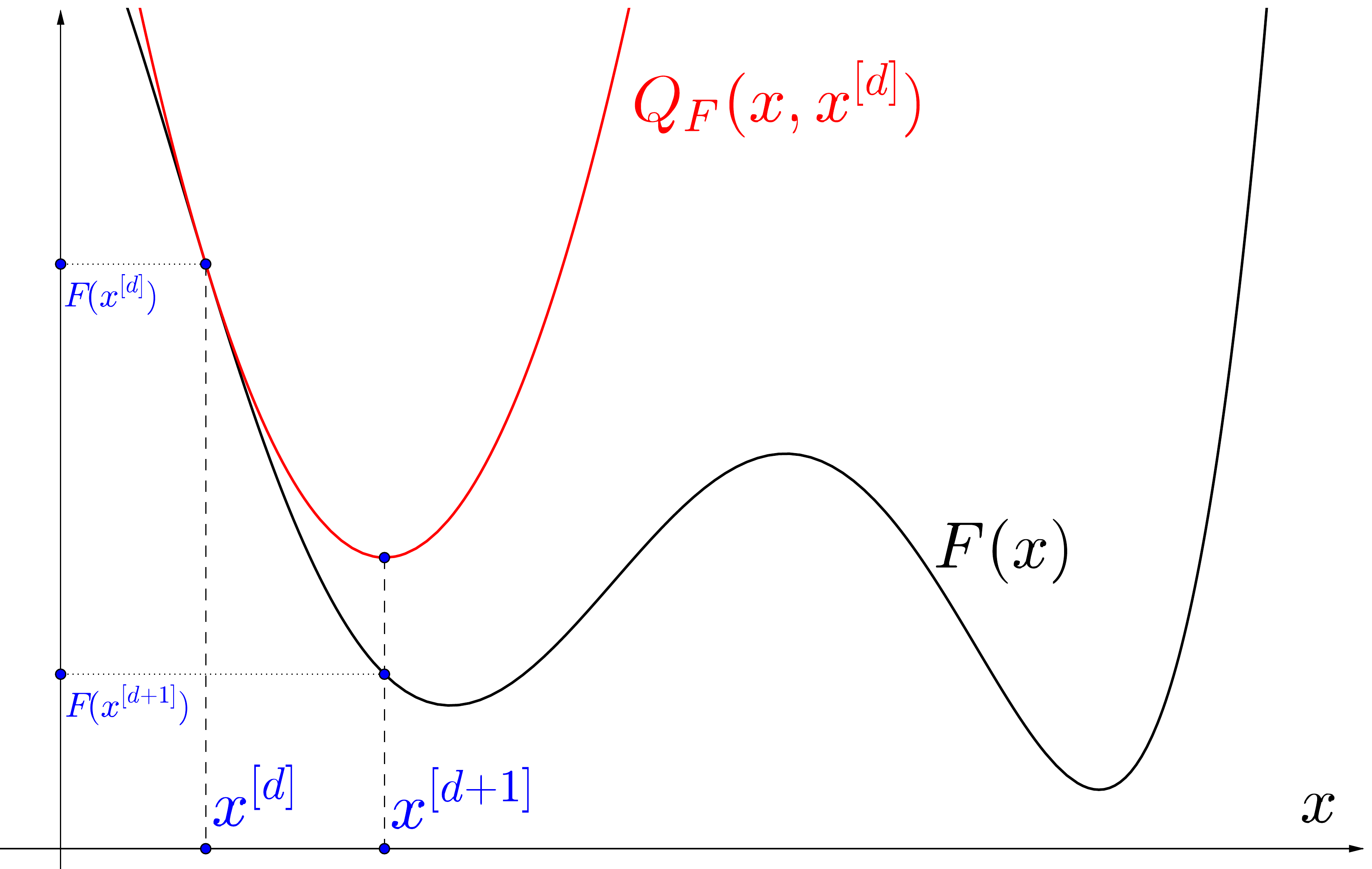}
	\caption{Visualization of the surrogate principle for non-convex $ F $ with convex 
		surrogate functional $ Q_F $ according to Lemma \ref{lem:Monotoner Abfall durch Surrogat-Funktion}}
	\label{fig:Prinzip der Minimierung einer Kostenfunktion durch eine Surrogat-Funktion}
\end{figure}

\begin{remark}[Addition of Surrogate Functionals] \label{rem:Addition von Surrogat-Funktionen}
	Let $ \Omega\subseteq \mathbb{R}^n $ be an open set, $ F, G:\Omega\rightarrow\mathbb{R} $ pointwise defined functionals  and $ Q_F, Q_G $ corresponding surrogates. Then $ Q_F + Q_G $ is a surrogate functional for $ F+G. $
\end{remark}
For each functional $F$ there typically exist a large variety of surrogate functionals and we can aim at optimizing the structure of surrogate functionals. The following additional property is the key to simple and efficient minimization schemes for surrogate functionals.

\begin{definition}[Separability of a Surrogate Functional] \label{def:Separabilitaet}
	Let $ \Omega\subseteq \mathbb{R}^N $ denote an open set, $ F: \Omega\rightarrow \mathbb{R}$ a functional and $ Q_F $ a surrogate for $F$. The surrogate $ Q_F$ is called \textbf{separable}, if there exist functions $g_i:\mathbb{R} \times \Omega \rightarrow \mathbb{R},$ such that  
	\begin{equation}\label{eq:def:Separabilität}
	Q_F(\myvec{x}, \myvec{a}) = \sum_{i=1}^{N} g_i(x_i, \myvec{a}) \quad \forall \myvec{x}, \myvec{a}\in \Omega.
	\end{equation}
\end{definition}
Lemma \ref{lem:Monotoner Abfall durch Surrogat-Funktion} above only ensures the monotonic decrease of the cost functional, which is not sufficient to guarantee convergence of the sequence $\{ \myvec{x}^{[d]}\}$ to a minimizer of $F$ or at least to a stationary point of $F$. The convergence theory for surrogate functionals is far from being complete (see also the works \cite{lange} and \cite{zhang}).\newline
Despite this lack of theoretical foundation, surrogate based minimization yields strictly decreasing sequences for a large variety of applications. In particular, surrogate based methods can be constructed such that first order optimality conditions lead to multiplicative update rules, which  - in view of applications to NMF constructions - is a very desirable property.\newline
We now turn to discussing three different construction principles for surrogate functionals.

\subsection{Jensen's Inequality} \label{subsec:Jensen's inequality}
The starting point is the well known Jensen's inequality for convex functions, see \cite{Cvetkovski12}.
\begin{lemma}[Jensen's Inequality]
	Let $ \Omega \subseteq \mathbb{R}^N $ denote a convex set, $ F:\Omega \rightarrow \mathbb{R} $ a convex function and $ \lambda_i \in [0, 1] $ non-negative numbers for $ i\in \{1, \dots, k\} $ with $ \sum_{i=1}^{k} \lambda_i = 1. $ Then for all $ \myvec{x}_i\in \Omega, $ it holds that
	\begin{equation}\label{eq:lem:Jensensche Ungleichung}
	F\left (\sum_{i=1}^{k} \lambda_i \myvec{x}_i\right ) \leq \sum_{i=1}^{k} \lambda_i F(\myvec{x}_i).
	\end{equation}
\end{lemma}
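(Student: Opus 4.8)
The plan is to prove the inequality by induction on the number of points $k$, using the defining two-point convexity inequality for $F$ as the engine and using convexity of $\Omega$ to keep the intermediate convex combinations inside the domain where $F$ is defined. For the base case $k=2$, the claim $F(\lambda_1 \myvec{x}_1 + \lambda_2 \myvec{x}_2) \leq \lambda_1 F(\myvec{x}_1) + \lambda_2 F(\myvec{x}_2)$ with $\lambda_1 + \lambda_2 = 1$ is precisely the definition of convexity of $F$, so nothing needs to be shown; the case $k=1$ is trivial since then $\lambda_1 = 1$.

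For the inductive step, I would assume the statement holds for any collection of $k-1$ weights. Given weights $\lambda_1, \dots, \lambda_k$ summing to one, I would first dispose of the degenerate case $\lambda_k = 1$, in which all remaining weights vanish and the inequality reduces to an equality. Otherwise $1 - \lambda_k > 0$, and I would renormalize by setting $\mu_i \coloneqq \lambda_i/(1-\lambda_k)$ for $i = 1, \dots, k-1$; these satisfy $\mu_i \in [0,1]$ and $\sum_{i=1}^{k-1}\mu_i = 1$. Writing $\myvec{z} \coloneqq \sum_{i=1}^{k-1}\mu_i \myvec{x}_i$, convexity of $\Omega$ guarantees $\myvec{z} \in \Omega$, so that $F(\myvec{z})$ is well defined. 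The key algebraic identity is $\sum_{i=1}^k \lambda_i \myvec{x}_i = (1-\lambda_k)\myvec{z} + \lambda_k \myvec{x}_k$, which exhibits the full convex combination as a two-point convex combination of $\myvec{z}$ and $\myvec{x}_k$.

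Applying the base case to this two-point combination yields $F\bigl(\sum_{i=1}^k \lambda_i \myvec{x}_i\bigr) \leq (1-\lambda_k)F(\myvec{z}) + \lambda_k F(\myvec{x}_k)$, and applying the induction hypothesis to $F(\myvec{z})$ gives $F(\myvec{z}) \leq \sum_{i=1}^{k-1}\mu_i F(\myvec{x}_i)$. Substituting the second bound into the first and using the identity $(1-\lambda_k)\mu_i = \lambda_i$ collapses the right-hand side to $\sum_{i=1}^k \lambda_i F(\myvec{x}_i)$, completing the induction. The only real subtlety—more a bookkeeping point than a genuine obstacle—is the renormalization step: one must exclude $\lambda_k = 1$ before dividing by $1-\lambda_k$, and one must invoke convexity of $\Omega$ to ensure the auxiliary point $\myvec{z}$ lies in the domain, so that both the base case and the induction hypothesis may legitimately be applied to it.
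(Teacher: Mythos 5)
Your proof is correct: it is the standard induction argument, with the base case $k=2$ being the definition of convexity, the renormalization $\mu_i = \lambda_i/(1-\lambda_k)$ handled properly (including the degenerate case $\lambda_k=1$), and the domain issue for the auxiliary point $\myvec{z}$ addressed via convexity of $\Omega$. Note, however, that the paper does not prove this lemma at all — it states it as the ``well known Jensen's inequality'' and refers to the literature (the citation \cite{Cvetkovski12}), since the result is classical and serves only as the starting point for the surrogate constructions in that subsection. So your argument is not an alternative to the paper's proof but a self-contained filling-in of a proof the paper deliberately omits; it is exactly the textbook argument such references contain. The only point worth tightening is your appeal to ``convexity of $\Omega$'' to conclude $\myvec{z}\in\Omega$: closure of a convex set under $(k-1)$-point convex combinations is itself proved by the same induction, so strictly speaking the statement ``$\myvec{z}\in\Omega$'' should be carried along as part of the inductive claim (or invoked as the standard fact that convex sets contain all finite convex combinations of their elements); this is bookkeeping, not a gap.
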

In this subsection we consider functionals $F$ which are derived from continuously differentiable and convex functions 
$ f:\mathbb{R}_{>0} \rightarrow \mathbb{R} $
via
\begin{equation} \label{eq:Jensensche Ungleichung:Ansatz}
\begin{aligned}
F: \Omega &\rightarrow \mathbb{R} \\
\myvec{v} &\mapsto f(\myvec{c}^\intercal \myvec{v}) \ \ .
\end{aligned}
\end{equation}
for $ \Omega \subseteq \mathbb{R}_{\geq 0}^N $ and some auxiliary variable $ \myvec{c}\in \Omega. $ This also implies, that $F$ is convex, since

\begin{align*}
F(\myvec{v})&\geq F(\tilde{\myvec{v}}) + \nabla F(\tilde{\myvec{v}})^\intercal (\myvec{v}-\tilde{\myvec{v}}) \\
\Leftrightarrow f(\myvec{c}^\intercal \myvec{v}) &\geq f(\myvec{c}^\intercal \tilde{\myvec{v}}) + f'(\myvec{c}^\intercal \tilde{\myvec{v}}) ( \myvec{c}^\intercal \myvec{v} - \myvec{c}^\intercal \tilde{\myvec{v}}).
\end{align*}
We now choose $ \lambda_i\in [0, 1] $ with $ \sum_{i=1}^{N} \lambda_i =1$ and $ \myvec{\alpha} \in \mathbb{R}^N $  and define
\begin{align}
\lambda_i &\coloneqq \dfrac{c_i b_i}{\myvec{c}^\intercal \myvec{b}} \label{eq:Jensensche Ungleichung:lambda_i}\\
\alpha_i &\coloneqq \dfrac{c_iv_i}{\lambda_i} \left ( = \dfrac{v_i \myvec{c}^\intercal \myvec{b}}{b_i} \right ) \label{eq:Jensensche Ungleichung:alpha_i}
\end{align}
for some $ \myvec{b}\in \Omega. $ This implies
\begin{equation} \label{eq:Jensensche Ungleichung:Konstruktion}
F(\myvec{v}) = f(\myvec{c}^\intercal \myvec{v})  = f\left (\sum_{i=1}^{N} \lambda_i \alpha_i \right ) \leq \sum_{i=1}^{N} \dfrac{c_i b_i}{\myvec{c}^\intercal \myvec{b}} f\left (\dfrac{\myvec{c}^\intercal \myvec{b}}{b_i} v_i \right ) \eqqcolon Q_F(\myvec{v}, \myvec{b}).
\end{equation}
The functional $ Q_F: \Omega \times \Omega \rightarrow \mathbb{R} $ defines a surrogate for $F$, which can be seen by the inequality above and by observing
\begin{equation*}
Q_F(\myvec{v}, \myvec{v}) = \sum_{i=1}^{N} \dfrac{c_i v_i}{\myvec{c}^\intercal \myvec{v}} f(\myvec{c}^\intercal \myvec{v}) = f(\myvec{c}^\intercal \myvec{v}) = F(\myvec{v}).
\end{equation*}

\subsection{Low Quadratic Bound Principle}\label{subsec:Low quadratic bound principle}
This concept is based on a Taylor expansion of $F$ in combination with a majorization of the quadratic term. This so called low quadratic bound principle (LQBP) has been introduced in \cite{lindsay} and was used in particularly for the computation of maximum-likelihood estimators. These methods do not require that $F$ itself is convex and its construction is based on the following lemma.

\begin{lemma}[Low Quadratic Bound Principle] \label{lem:Low Quadratic Bound Principle}
	Let $ \Omega \subseteq \mathbb{R}^N $ denote an open and convex set  and $ f:\Omega \rightarrow \mathbb{R} $ a twice  differentiable functional. Assume that a matrix  $ \myvec{\Lambda}(\myvec{x}) \in \mathbb{R}^{N\times N} $ exists, such that  $ \myvec{\Lambda}(\myvec{x}) - \nabla^2f(\myvec{x}) $ is positive semi-definite for all $ \myvec{x}\in \Omega. $ We then obtain a \textit{quadratic majorization}
	\begin{align}
	f(\myvec{x}) &\leq f(\myvec{a}) + \nabla f(\myvec{a})^\intercal (\myvec{x}-\myvec{a}) + \dfrac{1}{2} (\myvec{x}-\myvec{a})^\intercal \myvec{\Lambda}(\myvec{a}) (\myvec{x}-\myvec{a}) \quad \forall \myvec{x}, \myvec{a} \in \Omega \label{eq:lem:Qudratische Majorisierung} \\
	&\eqqcolon Q_f(\myvec{x}, \myvec{a}), \nonumber
	\end{align}
	and $ Q_f $ is a surrogate functional for  $ f. $
\end{lemma}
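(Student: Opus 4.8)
The surrogate property requires the two conditions of Definition \ref{def:Surrogat-Funktionen}: the tangency condition $Q_f(\myvec{x},\myvec{x}) = f(\myvec{x})$ and the majorization $Q_f(\myvec{x},\myvec{a}) \geq f(\myvec{x})$. The tangency condition is immediate: setting $\myvec{a} = \myvec{x}$ in the definition of $Q_f$ makes both the linear term $\nabla f(\myvec{a})^\intercal(\myvec{x}-\myvec{a})$ and the quadratic term $\tfrac{1}{2}(\myvec{x}-\myvec{a})^\intercal\myvec{\Lambda}(\myvec{a})(\myvec{x}-\myvec{a})$ vanish, leaving $Q_f(\myvec{x},\myvec{x}) = f(\myvec{x})$. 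All the work therefore goes into establishing the majorization \eqref{eq:lem:Qudratische Majorisierung}, which is exactly condition (i).

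The plan is to reduce the multivariate statement to a one-dimensional Taylor expansion. I would fix $\myvec{x}, \myvec{a}\in\Omega$ and exploit convexity of $\Omega$: the whole segment $\{\myvec{a}+t(\myvec{x}-\myvec{a}) : t\in[0,1]\}$ lies in $\Omega$, so I may set $\phi(t) \coloneqq f(\myvec{a}+t(\myvec{x}-\myvec{a}))$ on $[0,1]$. Since $f$ is twice differentiable, $\phi$ is twice differentiable with $\phi'(t) = \nabla f(\myvec{a}+t(\myvec{x}-\myvec{a}))^\intercal(\myvec{x}-\myvec{a})$ and $\phi''(t) = (\myvec{x}-\myvec{a})^\intercal\nabla^2 f(\myvec{a}+t(\myvec{x}-\myvec{a}))(\myvec{x}-\myvec{a})$. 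Applying the scalar Taylor theorem with Lagrange remainder to $\phi$ on $[0,1]$ gives $\phi(1) = \phi(0) + \phi'(0) + \tfrac{1}{2}\phi''(\theta)$ for some $\theta\in(0,1)$; unwinding the definitions yields
\[
f(\myvec{x}) = f(\myvec{a}) + \nabla f(\myvec{a})^\intercal(\myvec{x}-\myvec{a}) + \tfrac{1}{2}(\myvec{x}-\myvec{a})^\intercal\nabla^2 f(\myvec{\xi})(\myvec{x}-\myvec{a}),
\]
with $\myvec{\xi} = \myvec{a}+\theta(\myvec{x}-\myvec{a})\in\Omega$. The remaining step is to replace the Hessian quadratic form by the $\myvec{\Lambda}$-quadratic form: the positive semi-definiteness hypothesis gives $(\myvec{x}-\myvec{a})^\intercal\nabla^2 f(\myvec{\xi})(\myvec{x}-\myvec{a}) \leq (\myvec{x}-\myvec{a})^\intercal\myvec{\Lambda}(\myvec{\xi})(\myvec{x}-\myvec{a})$, and substituting this into the Taylor identity should recover the bound $Q_f(\myvec{x},\myvec{a})$. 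An equivalent route uses the integral remainder $\int_0^1(1-t)\phi''(t)\diff t$, bounding $\phi''$ under the integral sign and recovering the factor $\tfrac{1}{2}$ from $\int_0^1(1-t)\diff t = \tfrac{1}{2}$.

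The main obstacle I anticipate is the mismatch between the evaluation point of $\myvec{\Lambda}$ in the surrogate, namely the anchor $\myvec{a}$, and the intermediate point $\myvec{\xi}$ at which the Taylor remainder controls the Hessian. The pointwise hypothesis only dominates $\nabla^2 f(\myvec{\xi})$ by $\myvec{\Lambda}(\myvec{\xi})$, whereas the stated majorization features $\myvec{\Lambda}(\myvec{a})$, and the two need not coincide when $\myvec{\Lambda}$ genuinely varies over $\Omega$. The clean resolution is to use a single dominating matrix along the whole segment, i.e. to take $\myvec{\Lambda}$ constant (or at least uniformly dominating $\nabla^2 f$ on $\Omega$), so that $\myvec{\Lambda}(\myvec{\xi}) = \myvec{\Lambda}(\myvec{a})$ and the replacement is legitimate for every $\myvec{\xi}$ on the segment. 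This is exactly the regime in which the LQBP is applied, and it is where the strength of the semi-definiteness assumption is actually needed; reconciling this with the pointwise phrasing of the hypothesis is the one point I would treat with care in a fully rigorous write-up.
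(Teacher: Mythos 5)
Your argument follows exactly the route the paper intends --- the paper's own ``proof'' is a one-line reference to the second-order Taylor polynomial, left to the reader --- and your treatment of the tangency condition and of the Taylor step (in either Lagrange or integral-remainder form) is correct. More importantly, the obstacle you flag in your last paragraph is not a pedantic worry but a genuine defect of the lemma as stated: with the purely pointwise hypothesis that $\myvec{\Lambda}(\myvec{x}) - \nabla^2 f(\myvec{x})$ is positive semi-definite for each $\myvec{x}$, and with a genuinely varying $\myvec{\Lambda}$, the claimed majorization can fail. A one-dimensional counterexample: take $\Omega = \mathbb{R}$, $f(x) = x^4$ and $\Lambda(x) = f''(x) = 12x^2$, so the hypothesis holds (with equality) everywhere; at the anchor $a = 0$ the right-hand side of \eqref{eq:lem:Qudratische Majorisierung} equals $0$, while $f(x) > 0$ for every $x \neq 0$. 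So no proof of the literal statement can exist, and the mismatch between the intermediate point $\myvec{\xi}$ and the anchor $\myvec{a}$ that you describe is exactly where any attempt must break down.

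The repaired hypothesis you propose --- that $\myvec{\Lambda}(\myvec{a})$ dominates $\nabla^2 f(\myvec{\xi})$ for every $\myvec{\xi}$ on the segment from $\myvec{a}$ to $\myvec{x}$, in particular when $\myvec{\Lambda}$ is constant or uniformly dominating on $\Omega$ --- is the correct one, and under it your argument closes immediately: insert $\nabla^2 f(\myvec{\xi}) \preceq \myvec{\Lambda}(\myvec{a})$ into the remainder term and you obtain $Q_f(\myvec{x},\myvec{a})$. This is also the regime in which the paper actually applies the lemma: for the Frobenius discrepancy the Hessian is the constant matrix $\myvec{K}^\intercal\myvec{K}$ (resp.\ $\myvec{X}\myvec{X}^\intercal$), and the auxiliary lemma of Subsection \ref{subsec:Low quadratic bound principle} shows precisely that the diagonal matrix defined in \eqref{eq:rem:Low Quadratic Bound Principle:Lambda} dominates that constant Hessian, i.e.\ $\myvec{\Lambda}(\myvec{a}) \succeq \nabla^2 f(\myvec{x})$ for \emph{all} $\myvec{x}$, which is exactly the strengthened hypothesis your proof needs. (For the Kullback--Leibler construction via LQBP in Appendix \ref{app:sec:KLD_and_LQBP}, where the Hessian is not constant, the same caveat applies and the domination would have to be verified along the whole segment rather than only at the anchor.) In short: your proof is complete once the hypothesis is read as segment-wise (or uniform) domination, and your identification of why the pointwise phrasing is insufficient is accurate.
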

\begin{proof}
	The proof of this classical result is based on the second-order Taylor polynomial of $ f $ and shall be left to the reader. \qed
\end{proof}
The related update rule for surrogate minimization can be stated explicitly under natural assumptions on the matrix $\myvec{\Lambda}.$
\begin{corollary}\label{cor:Update zum Low Quadratic Bound Principle}
	Assume that the assumptions of Lemma \ref{lem:Low Quadratic Bound Principle} hold. In addition, assume that  $ \myvec{\Lambda} $ is a positive definite and symmetric matrix. Then, the corresponding surrogate $ Q_f $ is strictly convex in its first variable and we have from \eqref{eq:lem:Monotoner Abfall durch Surrogat-Funktion:Regel}
	\begin{align}
	\myvec{x}^{[d+1]} &= \arg \min_{\myvec{x}\in \Omega} Q_f(\myvec{x}, \myvec{x}^{[d]}) \nonumber \\
	&= 	\myvec{x}^{[d]} - \myvec{\Lambda}^{-1}(\myvec{x}^{[d]}) \nabla f(\myvec{x}^{[d]}). \label{eq:cor:Update zum Low Quadratic Bound Principle:Update}
	\end{align} \vspace*{-4ex}
\end{corollary}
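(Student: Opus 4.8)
The plan is to exploit that, for every fixed $\myvec{a}\in\Omega$, the surrogate $\myvec{x}\mapsto Q_f(\myvec{x},\myvec{a})$ defined in \eqref{eq:lem:Qudratische Majorisierung} is a quadratic polynomial in its first argument. Both assertions then reduce to elementary linear algebra, so I would not expect any serious analytic obstacle once the positive definiteness of $\myvec{\Lambda}$ is available.

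First I would establish the strict convexity. The constant term $f(\myvec{a})$ and the linear term $\nabla f(\myvec{a})^\intercal(\myvec{x}-\myvec{a})$ vanish under the second derivative in $\myvec{x}$, so the Hessian of $Q_f(\cdot,\myvec{a})$ stems entirely from the quadratic term $\tfrac{1}{2}(\myvec{x}-\myvec{a})^\intercal\myvec{\Lambda}(\myvec{a})(\myvec{x}-\myvec{a})$. Differentiating this term twice and invoking the symmetry of $\myvec{\Lambda}(\myvec{a})$, I obtain $\nabla^2_{\myvec{x}} Q_f(\myvec{x},\myvec{a})=\myvec{\Lambda}(\myvec{a})$, which is positive definite by assumption. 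A symmetric positive definite Hessian on the convex set $\Omega$ yields strict convexity of $Q_f$ in its first variable.

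Second I would identify the minimizer. By strict convexity a stationary point is the unique global minimizer, so it suffices to solve $\nabla_{\myvec{x}} Q_f(\myvec{x},\myvec{a})=0$. The gradient is $\nabla f(\myvec{a})+\myvec{\Lambda}(\myvec{a})(\myvec{x}-\myvec{a})$; since $\myvec{\Lambda}(\myvec{a})$ is positive definite and hence invertible, setting it to zero and solving for $\myvec{x}$ gives $\myvec{x}=\myvec{a}-\myvec{\Lambda}^{-1}(\myvec{a})\nabla f(\myvec{a})$. Substituting $\myvec{a}=\myvec{x}^{[d]}$ into the update rule \eqref{eq:lem:Monotoner Abfall durch Surrogat-Funktion:Regel} then produces the claimed formula \eqref{eq:cor:Update zum Low Quadratic Bound Principle:Update}.

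The only point requiring care is the constraint $\myvec{x}\in\Omega$: the computation above delivers the unconstrained minimizer over $\mathbb{R}^N$, and because $\Omega$ is merely open, this coincides with the constrained $\arg\min$ only if the stationary point actually lies in $\Omega$. Under the standing well-definedness assumption for the surrogate update (carried over from Lemma \ref{lem:Low Quadratic Bound Principle} and the preceding iteration setup) this is guaranteed, so no boundary analysis is needed; otherwise verifying membership in $\Omega$ would be the main, and essentially the only, obstacle.
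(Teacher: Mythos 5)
Your proof is correct and follows essentially the same route as the paper: compute the gradient and Hessian of the quadratic surrogate, use the symmetry and positive definiteness of $\myvec{\Lambda}(\myvec{a})$ to obtain strict convexity, and solve the first-order condition $\nabla f(\myvec{a})+\myvec{\Lambda}(\myvec{a})(\myvec{x}-\myvec{a})=0$ for the unique minimizer. Your closing observation about the openness of $\Omega$ --- that the unconstrained stationary point must actually lie in $\Omega$ for the constrained $\arg\min$ to coincide with it --- is a subtlety the paper's proof passes over silently, so it is a welcome, if minor, refinement rather than a different approach.
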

\begin{proof}
	For an arbitrary $ \alpha\in \{1, \dots, N \}, $ we have that
	\begin{align*}
	&\dfrac{\partial Q_f}{\partial x_\alpha}(\myvec{x}, \myvec{a}) = \dfrac{\partial }{\partial x_\alpha} \left ( \sum_{i=1}^{N} \dfrac{\partial f}{\partial x_i}(\myvec{a}) (x_i - a_i) + \dfrac{1}{2} \sum_{i=1}^{N} \sum_{j=1}^{N} \Lambda(\myvec{a})_{ij} \cdot (x_i - a_i) (x_j - a_j) \right ) \nonumber \\
	&= \dfrac{\partial f}{x_\alpha}(\myvec{a}) + \dfrac{1}{2} \sum_{\substack{i=1 \\ i\neq \alpha}}^{N} \Lambda(\myvec{a})_{\alpha i} (x_i - a_i) + \dfrac{1}{2} \sum_{\substack{i=1 \\ i\neq \alpha}}^{N} \Lambda(\myvec{a})_{i \alpha} (x_i - a_i) + \Lambda(\myvec{a})_{\alpha \alpha} (x_\alpha - a_\alpha) \nonumber \\
	&\overset{(\star)}{=} \dfrac{\partial f}{x_\alpha}(\myvec{a}) + \myvec{\Lambda}(\myvec{a})_{\alpha, \bullet} (\myvec{x}-\myvec{a}),
	\end{align*}
	where $ (\star) $ utilizes 	the symmetry of $ \myvec{\Lambda}. $ Hence, it holds that $ \nabla_{\myvec{x}} Q_f(\myvec{x}, \myvec{a}) = \nabla f(\myvec{a}) + \myvec{\Lambda}(\myvec{a}) (\myvec{x}-\myvec{a}).$ The Hessian matrix of $ Q_f $ then satisfies
	\begin{equation*}
	\nabla_{\myvec{x}}^2 Q_f(\myvec{x}, \myvec{a}) = \myvec{\Lambda}(\myvec{a}).
	\end{equation*}
	This implies the positive definiteness of the functional, hence, it has a uniqie minimizer, which is given by
	\begin{align*}
	\nabla_{\myvec{x}} Q_f(\myvec{x}^*_{\myvec{a}}, \myvec{a}) = 0 &= \nabla f(\myvec{a}) + \myvec{\Lambda}(\myvec{a}) (\myvec{x}^*_{\myvec{a}}-\myvec{a}) \\
	\Leftrightarrow \myvec{x}^*_{\myvec{a}} &= \myvec{a} - \myvec{\Lambda}^{-1}(\myvec{a}) \nabla f(\myvec{a}).
	\end{align*}
	This is the update rule above. \qed
\end{proof}
The computation of the inverse of $\myvec{\Lambda}$ is particularly simple if $\myvec{\Lambda}$ is a diagonal matrix. Furthermore, the diagonal structure ensures the separability of the surrogate functional mentioned in Definition \ref{def:Separabilitaet}. Therefore, we consider matrices of the form
\begin{equation}\label{eq:Low Quadratic Bound Principle:Lambda}
\Lambda (\myvec{a})_{i i} \coloneqq \dfrac{(\nabla^2 f (\myvec{a})\ a)_i + \kappa_i}{a_i},
\end{equation}
where $\kappa_i \ge 0$ has to be chosen individually depending on the considered cost function. We will see that an appropriate choice of $ \kappa_i $ will lead finally to the desired multiplicative update rules of the NMF algorithm. \newline 
The matrix $ \myvec{\Lambda}(\myvec{a}) $ in \eqref{eq:Low Quadratic Bound Principle:Lambda} fulfills the conditions in Corollary \ref{cor:Update zum Low Quadratic Bound Principle} as it can be seen by the following lemma. Therefore, if $ \myvec{\Lambda} $ is constructed as in \eqref{eq:Low Quadratic Bound Principle:Lambda}, the update rule in \eqref{eq:cor:Update zum Low Quadratic Bound Principle:Update} can be applied immediately.
\begin{lemma}
	Let $ \myvec{M}\in \mathbb{R}_{\geq 0}^{N \times N} $ denote a symmetric matrix. With $ \myvec{a}\in \mathbb{R}_{>0}^N $ and $ \kappa_i\geq 0, $  we define the diagonal matrix $ \myvec{\Lambda}, $ such that
	\begin{equation}\label{eq:rem:Low Quadratic Bound Principle:Lambda}
	\Lambda_{i i} \coloneqq \dfrac{(Ma)_i + \kappa_i}{a_i}
	\end{equation}
	for $ i=1,\dots,N. $ Then $ \myvec{\Lambda} $ and $ \myvec{\Lambda}-\myvec{M} $ are positive semi-definite.
\end{lemma}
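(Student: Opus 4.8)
The plan is to treat the two semidefiniteness assertions separately, beginning with the easier one. Since $\myvec{\Lambda}$ is diagonal, it is positive semi-definite precisely when all of its diagonal entries are non-negative. First I would note that, because $\myvec{M}\geq 0$ componentwise and $\myvec{a}\in\mathbb{R}_{>0}^N$, the entry $(Ma)_i=\sum_{j=1}^N M_{ij}a_j$ is a sum of non-negative terms, hence non-negative; combined with $\kappa_i\geq 0$ and $a_i>0$ this immediately yields $\Lambda_{ii}=((Ma)_i+\kappa_i)/a_i\geq 0$, proving that $\myvec{\Lambda}$ is positive semi-definite.

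For the second assertion I would show directly that the quadratic form $\myvec{x}^\intercal(\myvec{\Lambda}-\myvec{M})\myvec{x}$ is non-negative for every $\myvec{x}\in\mathbb{R}^N$. The first reduction is to discard the $\kappa_i$: they contribute the non-negative term $\sum_i(\kappa_i/a_i)x_i^2$ to the form, so it suffices to treat the case $\kappa_i=0$. Inserting $\Lambda_{ii}=(Ma)_i/a_i=\sum_j M_{ij}(a_j/a_i)$ into the diagonal part, the form becomes
\begin{equation*}
\myvec{x}^\intercal(\myvec{\Lambda}-\myvec{M})\myvec{x}=\sum_{i,j}M_{ij}\frac{a_j}{a_i}x_i^2-\sum_{i,j}M_{ij}x_ix_j.
\end{equation*}

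The key step is to symmetrize the first sum using $M_{ij}=M_{ji}$: swapping the summation indices gives $\sum_{i,j}M_{ij}(a_j/a_i)x_i^2=\tfrac12\sum_{i,j}M_{ij}\big((a_j/a_i)x_i^2+(a_i/a_j)x_j^2\big)$, and writing $\sum_{i,j}M_{ij}x_ix_j=\tfrac12\sum_{i,j}M_{ij}\cdot 2x_ix_j$ lets me combine everything into $\tfrac12\sum_{i,j}M_{ij}\big((a_j/a_i)x_i^2+(a_i/a_j)x_j^2-2x_ix_j\big)$. Since $a_i>0$, I can set $u_{ij}=\sqrt{a_j/a_i}\,x_i$ and $v_{ij}=\sqrt{a_i/a_j}\,x_j$, whereupon each bracket equals $(u_{ij}-v_{ij})^2\geq 0$. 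Each summand is then the product of the non-negative weight $M_{ij}$ with a perfect square, so the entire sum is non-negative, establishing that $\myvec{\Lambda}-\myvec{M}$ is positive semi-definite.

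The only genuine obstacle is organizing the symmetrization correctly so that the cross term pairs up into $(u_{ij}-v_{ij})^2$; once this perfect-square structure is exposed, the non-negativity of the entries of $\myvec{M}$ closes the argument. Notably, the proof uses neither an eigenvalue analysis nor the convexity machinery of the preceding lemmas, which makes it entirely elementary and explains why the diagonal ansatz in \eqref{eq:Low Quadratic Bound Principle:Lambda} is the natural choice for the majorizing matrix.
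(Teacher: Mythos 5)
Your proof is correct and follows essentially the same route as the paper's: expand the quadratic form $\myvec{\zeta}^\intercal(\myvec{\Lambda}-\myvec{M})\myvec{\zeta}$, discard the non-negative $\kappa_i$ contributions, symmetrize the diagonal sum using $M_{ij}=M_{ji}$, and rewrite each summand as $M_{ij}$ times the perfect square $\left(\sqrt{a_j/a_i}\,\zeta_i-\sqrt{a_i/a_j}\,\zeta_j\right)^2$. The only (harmless) difference is cosmetic: you justify the positive semi-definiteness of $\myvec{\Lambda}$ explicitly via the non-negativity of its diagonal entries, which the paper leaves as a one-line remark.
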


\begin{proof}
	Let $ \myvec{\zeta}\in \mathbb{R}^N $ denote an arbitrary vector and let $ \myvec{\delta} $ denote the Kronecker symbol. Then
	\begin{align*}
	\myvec{\zeta}^\intercal (\myvec{\Lambda}-\myvec{M}) \myvec{\zeta} &=\sum_{i, j=1}^{N} \zeta_i \delta_{ij} \dfrac{(Ma)_i + \kappa_i}{a_i} \zeta_j - \sum_{i, j=1}^{N} \zeta_i M_{ij} \zeta_j \\
	&=\sum_{i=1}^{N} \zeta_i^2 \dfrac{(Ma)_i}{a_i} + \zeta_i^2 \dfrac{\kappa_i}{a_i} - \sum_{i, j=1}^{N} \zeta_i M_{ij} \zeta_j \\
	&\geq \sum_{i, j=1}^{N} \zeta_i^2 \dfrac{a_j}{a_i} M_{ij} - \sum_{i, j=1}^{N} \zeta_i M_{ij} \zeta_j \\
	&=\sum_{i=1}^{N} \zeta_i^2 M_{ij} + \sum_{\substack{i, j=1 \\ i<j}}^{N} \left( \zeta_i^2 \dfrac{a_j}{a_i} + \zeta_j^2 \dfrac{a_i}{a_j}\right) M_{ij} - \sum_{i, j=1}^{N} \zeta_i M_{ij} \zeta_j\displaybreak \\
	&=\sum_{i, j=1}^{N} \left[ \dfrac{1}{2} \zeta_i^2 \dfrac{a_j}{a_i} M_{ij} + \dfrac{1}{2} \zeta_j^2 \dfrac{a_i}{a_j} M_{ij} - \zeta_i M_{ij} \zeta_j \right] \\
	&=\dfrac{1}{2} \sum_{i, j=1}^{N} \zeta_i^2 \dfrac{a_j}{a_i} M_{ij} + \zeta_j^2 \dfrac{a_i}{a_j} M_{ij} - 2 \sqrt{\dfrac{a_j}{a_i}} \sqrt{\dfrac{a_i}{a_j}} \zeta_i M_{ij} \zeta_j \\
	&=\dfrac{1}{2} \sum_{i, j=1}^{N} \left ( \sqrt{\dfrac{a_j}{a_i}} \zeta_i - \sqrt{\dfrac{a_i}{a_j}} \zeta_j \right )^2 M_{ij} \geq 0.
	\end{align*}
	The positive semi-definiteness of $ \myvec{\Lambda} $ follows from its diagonal structure. \qed
\end{proof}

\subsection{Further Construction Principles}\label{subsec:Further construction principles}
So far we have discussed two major construction principles based on either Jensen's inequality or on upper bounds for the quadratic term in Taylor expansions.
\cite{lange} lists further construction principles, which however will not be used for NMF constructions in the subsequent sections of this paper. For completeness we shortly list their main properties.\newline
A relaxation of the approach based on Jensen's inequality is achieved by choosing
$ \alpha_i \geq 0, i\in \{1, \dots, N \} $ such that $ \sum_{i=1}^{N} \alpha_i = 1 $ and $ \alpha_i>0 $ if $ c_i\neq 0 $, which yields
\begin{equation*}
F(\myvec{v}) = f(\myvec{c}^\intercal \myvec{v})  \leq \sum_{i=1}^{N} \alpha_i f \left ( \dfrac{c_i}{\alpha_i} (v_i - b_i) + \myvec{c}^\intercal \myvec{b} \right ) \eqqcolon Q_F(\myvec{v}, \myvec{b}).
\end{equation*}
A typical choice is
\begin{equation*}
\alpha_i \coloneqq \dfrac{\vert c_i \vert^p}{\sum_{j=1}^{n} \vert c_j \vert^p}
\end{equation*}
which leads to surrogate functionals for $p \geq 0$. This type of surrogate was originally introduced in the context of medical imaging, see \cite{depierro}, for positron emission tomography.
\newline
Another approach is based on combining arithmetic with geometic means and can be used for constructing surrogates for posynomial functions. For $ \myvec{\alpha}, \myvec{v}, \myvec{a}\in \mathbb{R}_{>0}^N,$ we obtain
\begin{align*}
F(\myvec{v}) = \prod_{i=1}^{N} v_i^{\alpha_i} \leq \left (\prod_{i=1}^{N}a_i^{\alpha_i} \right ) \sum_{i=1}^{N} \dfrac{\alpha_i}{\sum_{k=1}^{N} \alpha_i} \left ( \dfrac{v_i}{a_i} \right )^{\sum_{k=1}^{N} \alpha_i} \eqqcolon Q_F(\myvec{v}, \myvec{a})
\end{align*}
\section{Surrogates for NMF Functionals} \label{sec:surrogates_for_NMF_functionals}
In this section we apply the general construction principles of Section \ref{sec:Surrogate_functionals} to the NMF problem as stated in \eqref{eq:def:NMF-Minimumproblem:Minimumproblem}. The resulting functional $F(\myvec{K},\myvec{X})$ depends on both factors of the matrix decomposition and minimization is attempted by alternating minimization with respect to $\myvec{K}$ and $\myvec{X}$ as described in \eqref{eq:AlternatingMinimzation1} and \eqref{eq:AlternatingMinimzation2}. \newline
However, we replace the functional $F$ in each iteration by suitable surrogate functionals, which allow an explicit minimization. Hence, we avoid the minimization of $F$ itself, which even for the most simple quadaratic formulation requires to solve a high dimensional linear system.\newline
We start by considering the discrepancy terms for $\beta =2$ (Frobenius norm) and $\beta =1$ (Kullback-Leibler divergence) and determine surrogate functionals with respect to $\myvec{X}$ and $\myvec{K}$. We then add several penalty terms and develop surrogate functionals accordingly.
With regard to the construction of surrogates for the case of $ \beta=0, $ which leads to the so-called \textit{Itakura-Saito divergence}, we refer to the works \cite{FBD09,fevotte,sun}.
\subsection{Frobenius Discrepancy and Low Quadratic Bound Principle}\label{subsec:Frobenius discrepancy and LQBP}
We start by constructing a surrogate for the minimization with respect to $\myvec{X}$ for the Frobenius discrepancy
\begin{equation} \label{eq:Surrogat-Funktionen mit LQBP (Frobenius):Kostenfunktional X}
F(\myvec{X})\coloneqq \dfrac{1}{2} \Vert \myvec{Y}-\myvec{KX}\Vert_F^2.
\end{equation}
Let $\myvec{Y}_{\bullet, j}$, resp. $\myvec{X}_{\bullet, j}$, denote the column vectors of $\myvec{Y}$, resp. $\myvec{X}$. 
The separability of $F$ yields
\begin{equation}
F(\myvec{X}) = \dfrac{1}{2}\sum_{j=1}^{m} \Vert \myvec{Y}_{\bullet, j}-\myvec{K}\myvec{X}_{\bullet, j} \Vert^2 \eqqcolon \sum_{j=1}^{m} f_{\myvec{Y}_{\bullet, j}}(\myvec{X}_{\bullet, j}), \label{eq:Frobenius-Norm:Surrogat-Funktion mit LQBP:F}
\end{equation}
Hence, the minimization separates for the different $f_{\myvec{Y_{\bullet, j}}}$ terms. The Hessian of these terms is given by $$\nabla^2 f_{\myvec{Y_{\bullet, j}}}(\myvec{a})=\myvec{K}^\intercal\myvec{K}$$ and the LQBP construction principle of the previous section with $\kappa_k=0$ yields
\begin{equation*}
\Lambda_{f_{\myvec{Y_{\bullet, j}}}}(\myvec{a})_{kk} = \dfrac{(K^\intercal K a)_k}{a_k},
\end{equation*}
leading to the surrogate functionals
\begin{equation}\label{eq:Surrogat-Funktionen mit LQBP (Frobenius):Surrogat x}
Q_{f_{\myvec{Y_{\bullet, j}}}}(\myvec{x}, \myvec{a}) = f_{\myvec{Y_{\bullet, j}}}(\myvec{a}) + \nabla f_{\myvec{Y_{\bullet, j}}}(\myvec{a})^\intercal (\myvec{x}-\myvec{a}) + \dfrac{1}{2} (\myvec{x}-\myvec{a})^\intercal \myvec{\Lambda}_{f_{\myvec{Y_{\bullet, j}}}}(\myvec{a}) (\myvec{x}-\myvec{a}).
\end{equation}
An appropriate choice of $ \kappa_k $ ensures the multiplicativity of the final NMF algorithm. In the case of the Frobenius discrepancy term, we will see that suitable $ \kappa_k $ can be chosen dependent on $ \ell_1 $ regularization terms in the cost function, which are not included up to now (see Subsection \ref{subsec:Surrogates for lp-penalty terms} and Appendix \ref{app:subsec:Frobenius-norm} for more details on this issue). Due to the absent $ \ell_1 $ terms, we set $ \kappa_k=0 $ to get the desired multiplicative update rules.
Summing up the contributions of the columns of $\myvec{X}$ yields the final surrogate
\begin{equation*}
Q_F:\mathbb{R}^{p\times m} \times \mathbb{R}^{p\times m} \rightarrow \mathbb{R}, \quad (\myvec{X}, \myvec{A}) \mapsto \sum_{j=1}^{m} Q_{f_{\myvec{Y_{\bullet, j}}}}(\myvec{X}_{\bullet, j}, \myvec{A}_{\bullet, j})
\end{equation*}
The equivalent construction for $\myvec{K}$ can be obtained by regarding the rows of $\myvec{K}$ separately, which for 
\begin{equation*}
g_{\myvec{y}}: \mathbb{R}_{\geq 0}^p \rightarrow \mathbb{R}, \quad \myvec{k}\mapsto \dfrac{1}{2} \Vert \myvec{y}-\myvec{kX}\Vert^2
\end{equation*}
yields $\nabla^2 g_{\myvec{y}}(\myvec{a})=\myvec{XX}^\intercal$. Putting 
\begin{equation*}
\Lambda_{g_{\myvec{y}}}(\myvec{a})_{kk} = \dfrac{(aXX^\intercal)_k}{a_k}
\end{equation*}
leads to the surrogate
\begin{equation*}
Q_{g_{\myvec{y}}}(\myvec{k}, \myvec{a}) = g_{\myvec{y}}(\myvec{a}) + (\myvec{k}-\myvec{a}) \nabla g_{\myvec{y}}(\myvec{a}) + \dfrac{1}{2} (\myvec{k}-\myvec{a}) \myvec{\Lambda}_{g_{\myvec{y}}}(\myvec{a}) (\myvec{k}-\myvec{a})^\intercal.
\end{equation*}
We summarize this surrogate construction in the following theorem.
\begin{theorem}[Surrogate Functional for the Frobenius Norm with LQBP] \label{satz:Surrogat-Funktionen für Frobenius-Norm mit dem LQBP}
	We consider the cost functionals $ F(\myvec{X}) \coloneqq \nicefrac{1}{2} \Vert \myvec{Y}-\myvec{KX}\Vert_F^2 $ and $ G(\myvec{K}) \coloneqq \nicefrac{1}{2} \Vert \myvec{Y}-\myvec{KX}\Vert_F^2. $ Then 
	\begin{align}
	Q_{F, 1}(\myvec{X}, \myvec{A}) &= \sum_{j=1}^{m} Q_{f_{\myvec{Y}_{\bullet, j}}}(\myvec{X}_{\bullet, j}, \myvec{A}_{\bullet, j}),  \label{eq:satz:Surrogat-Funktionen für Frobenius-Norm mit dem LQBP:X}\\
	Q_{G, 1}(\myvec{K}, \myvec{A}) &= \sum_{i=1}^{n} Q_{g_{\myvec{Y}_{i, \bullet}}}(\myvec{K}_{i, \bullet}, \myvec{A}_{i, \bullet}) \label{eq:satz:Surrogat-Funktionen für Frobenius-Norm mit dem LQBP:K}
	\end{align}
	define separable and convex surrogate functionals.
\end{theorem}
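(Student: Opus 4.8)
The plan is to assemble the statement from three earlier results: the Low Quadratic Bound Principle (Lemma~\ref{lem:Low Quadratic Bound Principle}), the preceding lemma that guarantees positive semi-definiteness of the diagonal matrix $\myvec{\Lambda}$ built from a symmetric non-negative matrix, and the additivity of surrogates (Remark~\ref{rem:Addition von Surrogat-Funktionen}). I would prove the $\myvec{X}$-statement \eqref{eq:satz:Surrogat-Funktionen für Frobenius-Norm mit dem LQBP:X} in full and obtain \eqref{eq:satz:Surrogat-Funktionen für Frobenius-Norm mit dem LQBP:K} by the symmetric argument for $\myvec{K}$.

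First I would use the column-wise separation $F(\myvec{X}) = \sum_{j=1}^{m} f_{\myvec{Y}_{\bullet, j}}(\myvec{X}_{\bullet, j})$ recorded in \eqref{eq:Frobenius-Norm:Surrogat-Funktion mit LQBP:F}, reducing the task to producing a convex surrogate for each single term $f_{\myvec{Y}_{\bullet, j}}(\myvec{x}) = \tfrac{1}{2}\Vert \myvec{Y}_{\bullet, j} - \myvec{K}\myvec{x}\Vert^2$ and then adding them via Remark~\ref{rem:Addition von Surrogat-Funktionen}. For such a term the gradient is $\nabla f_{\myvec{Y}_{\bullet, j}}(\myvec{a}) = \myvec{K}^\intercal(\myvec{K}\myvec{a} - \myvec{Y}_{\bullet, j})$ and the Hessian is the constant matrix $\myvec{K}^\intercal\myvec{K}$. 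The crucial observation is that $\myvec{K}^\intercal\myvec{K}$ is symmetric and, because $\myvec{K}\geq 0$, has non-negative entries, so it is admissible as the matrix $\myvec{M}$ in the preceding $\myvec{\Lambda}$-lemma.

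Next I would choose $\myvec{M} = \myvec{K}^\intercal\myvec{K}$ and $\kappa_k = 0$, so that $\myvec{\Lambda}_{f_{\myvec{Y}_{\bullet, j}}}(\myvec{a})$ is the diagonal matrix with entries $(K^\intercal K a)_k / a_k$, well defined since $\myvec{a}\in\mathbb{R}_{>0}^p$. The $\myvec{\Lambda}$-lemma then gives that $\myvec{\Lambda}_{f_{\myvec{Y}_{\bullet, j}}}(\myvec{a})$ and $\myvec{\Lambda}_{f_{\myvec{Y}_{\bullet, j}}}(\myvec{a}) - \myvec{K}^\intercal\myvec{K}$ are both positive semi-definite. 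Because $f_{\myvec{Y}_{\bullet, j}}$ is quadratic its Hessian is constant, hence the hypothesis of Lemma~\ref{lem:Low Quadratic Bound Principle} holds globally and the quadratic majorization \eqref{eq:Surrogat-Funktionen mit LQBP (Frobenius):Surrogat x} is a bona fide surrogate for $f_{\myvec{Y}_{\bullet, j}}$. Convexity and separability are then immediate from the diagonal structure: the Hessian of $Q_{f_{\myvec{Y}_{\bullet, j}}}(\cdot, \myvec{a})$ equals the positive semi-definite diagonal matrix $\myvec{\Lambda}_{f_{\myvec{Y}_{\bullet, j}}}(\myvec{a})$, so $Q_{f_{\myvec{Y}_{\bullet, j}}}$ is convex and decomposes into a sum of one-dimensional functions of the entries $x_k$, matching Definition~\ref{def:Separabilitaet}. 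Summing over $j$ through Remark~\ref{rem:Addition von Surrogat-Funktionen} yields that $Q_{F, 1}$ is a convex surrogate for $F$, and separability persists because distinct columns involve disjoint entries of $\myvec{X}$.

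Finally I would run the identical argument for $G(\myvec{K})$, separating it into the row-wise terms $g_{\myvec{Y}_{i, \bullet}}(\myvec{k}) = \tfrac{1}{2}\Vert \myvec{Y}_{i, \bullet} - \myvec{k}\myvec{X}\Vert^2$ with constant Hessian $\myvec{X}\myvec{X}^\intercal$, symmetric and with non-negative entries since $\myvec{X}\geq 0$; taking $\myvec{M} = \myvec{X}\myvec{X}^\intercal$ and $\kappa_k = 0$ reproduces $\Lambda_{kk} = (a X X^\intercal)_k / a_k$ and the same chain of implications delivers \eqref{eq:satz:Surrogat-Funktionen für Frobenius-Norm mit dem LQBP:K}. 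I do not anticipate a genuine obstacle, since the statement is essentially a bookkeeping assembly of the earlier lemmas; the only points demanding care are the verification that $\myvec{K}^\intercal\myvec{K}$ and $\myvec{X}\myvec{X}^\intercal$ have non-negative entries, which is exactly what lets us invoke the $\myvec{\Lambda}$-lemma, and the remark that the constancy of the Hessian makes the LQBP majorization valid on all of $\Omega$ rather than only to second order.
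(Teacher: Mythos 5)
Your proposal is correct and follows essentially the same route as the paper: the paper's ``proof'' of this theorem is precisely the construction in Subsection~\ref{subsec:Frobenius discrepancy and LQBP} (column-wise separation of $F$, constant Hessian $\myvec{K}^\intercal\myvec{K}$, the diagonal $\myvec{\Lambda}$ with $\kappa_k=0$ justified by the positive semi-definiteness lemma, quadratic majorization from Lemma~\ref{lem:Low Quadratic Bound Principle}, and summation over columns, with the symmetric row-wise argument for $\myvec{K}$). Your two added observations --- that $\myvec{K}^\intercal\myvec{K}$ and $\myvec{X}\myvec{X}^\intercal$ are symmetric with non-negative entries (so the $\myvec{\Lambda}$-lemma applies), and that the constant Hessian makes the quadratic Taylor expansion exact so the majorization holds globally --- are exactly the details the paper leaves implicit, not a departure from its argument.
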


\subsection{Frobenius Discrepancy and Jensen's Inequality} \label{subsec: Frobenius_discrepancy_and_Jensens_inequality}
Again we focus on deriving a surrogate functional for $\myvec{X}$, the construction for $\myvec{K}$ will be very similar. Expanding the Frobenius discrepancy yields
\begin{equation*}
F(\myvec{X}) \coloneqq \dfrac{1}{2} \Vert \myvec{Y}-\myvec{KX}\Vert_F^2 = \dfrac{1}{2} \sum_{i=1}^{n} \sum_{j=1}^{m} (Y_{ij} - (KX)_{ij})^2.
\end{equation*}
Putting $ \myvec{v}\coloneqq \myvec{X}_{\bullet, j} \in \mathbb{R}_{\geq 0}^p$ and $ \myvec{c}\coloneqq {\myvec{K}_{i, \bullet}}^\intercal \in \mathbb{R}_{\geq 0}^p $ allows us to define

\begin{equation} \label{eq:Surrogat-Funktionen mit Jensenscher Ungleichung:X:f}
f:\mathbb{R}_{\geq 0} \rightarrow \mathbb{R} \quad \text{with} \quad f(t)\coloneqq (Y_{ij} - t)^2,
\end{equation}
such that
\begin{equation}\label{eq:Surrogat-Funktionen mit Jensenscher Ungleichung (Frobenius):Gleichung zu f}
f(\myvec{c^\intercal \myvec{v}}) = (Y_{ij} - (KX)_{ij})^2.
\end{equation}
Hence we have separated the Forbenius discrepancy suitably for applying Jensen's inequality. Following the construction principle in Subsection \ref{subsec:Jensen's inequality}, we define
\begin{align}
\lambda_k &= \dfrac{K_{ik}A_{kj}}{(KA)_{ij}}, \label{eq:Surrogat-Funktionen mit Jensenscher Ungleichung:X:lambda_i} \\
\alpha_k   &= \dfrac{K_{ik}X_{kj}}{\lambda_k}. \label{eq:Surrogat-Funktionen mit Jensenscher Ungleichung:X:alpha_i}
\end{align}
with the auxiliary variable $ \myvec{A}\in \mathbb{R}_{\geq 0}^{p\times m} $ and  $ \myvec{b}\coloneqq \myvec{A}_{\bullet, j}\in \mathbb{R}_{\geq 0}^p, $ which yields the inequality
\begin{equation*}
(Y_{ij} - (KX)_{ij})^2 \leq \sum_{k=1}^{p} \dfrac{K_{ik}A_{kj}}{(KA)_{ij}} \left(Y_{ij} - \dfrac{X_{kj}}{A_{kj}}(KA)_{ij}\right)^2.
\end{equation*}
Inserting this into the decomposition of the Frobenius discrepancy yields the surrogate $Q_{F, 2}(\myvec{X}, \myvec{A})$ by
\begin{equation*}
F(\myvec{X}) \leq \dfrac{1}{2} \sum_{i=1}^{n} \sum_{j=1}^{m} \dfrac{1}{(KA)_{ij}} \sum_{k=1}^{p} K_{ik}A_{kj} \left(Y_{ij} - \dfrac{X_{kj}}{A_{kj}}(KA)_{ij}\right)^2 \eqqcolon Q_{F, 2}(\myvec{X}, \myvec{A}),
\end{equation*}
The construction of a surrogate for $\myvec{K}$ proceeds in the same way. We summarize the results in the following theorem.

\begin{theorem}[Surrogate Functional for the Frobenius Norm with Jensen's Inequality] \label{satz:Surrogat-Funktionen für Frobenius-Norm mit der JU}
	We consider the cost functionals $ F(\myvec{X}) \coloneqq \nicefrac{1}{2} \Vert \myvec{Y}-\myvec{KX}\Vert_F^2 $ and $ G(\myvec{K}) \coloneqq \nicefrac{1}{2} \Vert \myvec{Y}-\myvec{KX}\Vert_F^2 $. Then 
	\begin{align}
	Q_{F, 2}(\myvec{X}, \myvec{A}) &\coloneqq \dfrac{1}{2} \sum_{i=1}^{n} \sum_{j=1}^{m} \dfrac{1}{(KA)_{ij}} \sum_{k=1}^{p} K_{ik}A_{kj} \left(Y_{ij} - \dfrac{X_{kj}}{A_{kj}}(KA)_{ij}\right)^2 \label{eq:satz:Surrogat-Funktionen für Frobenius-Norm mit der JU:X}\\
	Q_{G, 2}(\myvec{K}, \myvec{A}) &\coloneqq \dfrac{1}{2} \sum_{i=1}^{n} \sum_{j=1}^{m} \dfrac{1}{(AX)_{ij}} \sum_{k=1}^{p} A_{ik}X_{kj} \left( Y_{ij} - \dfrac{K_{ik}}{A_{ik}}(AX)_{ij}\right)^2 \label{eq:satz:Surrogat-Funktionen für Frobenius-Norm mit der JU:K}
	\end{align}
	define separable and convex surrogate functionals.
\end{theorem}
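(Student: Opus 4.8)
The plan is to verify the three claimed properties — the two defining surrogate conditions of Definition \ref{def:Surrogat-Funktionen}, the separability of Definition \ref{def:Separabilitaet}, and the convexity in the first argument — for $Q_{F,2}$, and then to observe that the assertion for $Q_{G,2}$ follows by the same argument after interchanging the roles of the two matrix factors. Since the construction has already been carried out in the paragraphs preceding the statement, the proof amounts to confirming that each step is legitimate and assembling the pieces.

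First I would establish the majorization inequality $Q_{F,2}(\myvec{X},\myvec{A}) \ge F(\myvec{X})$. The key point is that the $\lambda_k$ defined in \eqref{eq:Surrogat-Funktionen mit Jensenscher Ungleichung:X:lambda_i} form a genuine convex combination: they are non-negative because $\myvec{K},\myvec{A}\ge 0$, and they sum to one since $\sum_{k=1}^{p} K_{ik}A_{kj} = (KA)_{ij}$. With $\alpha_k$ as in \eqref{eq:Surrogat-Funktionen mit Jensenscher Ungleichung:X:alpha_i} one checks that $\sum_{k=1}^{p}\lambda_k\alpha_k = \sum_{k=1}^{p} K_{ik}X_{kj} = (KX)_{ij} = \myvec{c}^\intercal\myvec{v}$, so that Jensen's inequality \eqref{eq:lem:Jensensche Ungleichung}, applied to the convex function $f(t)=(Y_{ij}-t)^2$, yields the pointwise bound for each index pair $(i,j)$. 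Summing over $i$ and $j$ — and invoking the additivity of surrogates from Remark \ref{rem:Addition von Surrogat-Funktionen} — produces the inequality for the full discrepancy.

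Next I would verify the tangency condition $Q_{F,2}(\myvec{A},\myvec{A}) = F(\myvec{A})$ by substituting $\myvec{X}=\myvec{A}$: the inner factor $\tfrac{X_{kj}}{A_{kj}}$ collapses to $1$, the $k$-sum of $K_{ik}A_{kj}$ cancels the prefactor $1/(KA)_{ij}$, and one is left precisely with $\tfrac{1}{2}\sum_{i,j}(Y_{ij}-(KA)_{ij})^2 = F(\myvec{A})$. For separability I would interchange the order of summation in \eqref{eq:satz:Surrogat-Funktionen für Frobenius-Norm mit der JU:X} so that the outer sum runs over the index pairs $(k,j)$ labelling the entries $X_{kj}$; each resulting summand then depends on a single entry $X_{kj}$ (with $\myvec{A},\myvec{K},\myvec{Y}$ fixed), which is exactly the form demanded by Definition \ref{def:Separabilitaet}. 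Convexity follows entry by entry: each summand is a non-negative multiple of the square of an affine function of $X_{kj}$, hence convex, and a sum of convex functions is convex.

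Finally, the statements for $Q_{G,2}$ are obtained verbatim by regarding the rows $\myvec{K}_{i,\bullet}$ of $\myvec{K}$ in place of the columns $\myvec{X}_{\bullet,j}$ of $\myvec{X}$ and using the transposed weights $A_{ik}X_{kj}/(AX)_{ij}$. I do not anticipate a genuine obstacle: the only points demanding care are the bookkeeping of the triple sum when reorganizing it into separable form, and confirming that the quadratic coefficient of each $X_{kj}$ is non-negative so that convexity is preserved — both routine consequences of the non-negativity of all matrices involved.
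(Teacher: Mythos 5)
Your proposal is correct and follows essentially the same route as the paper: the per-entry application of Jensen's inequality with the weights $\lambda_k = K_{ik}A_{kj}/(KA)_{ij}$ and points $\alpha_k = \tfrac{X_{kj}}{A_{kj}}(KA)_{ij}$, summation over $(i,j)$, and the direct check of the tangency condition $Q_{F,2}(\myvec{A},\myvec{A})=F(\myvec{A})$, with the $\myvec{K}$-case obtained by symmetry. Your explicit verification of separability (reordering the triple sum over $(k,j)$) and of convexity (non-negative multiples of squares of affine functions) fills in details the paper asserts without elaboration, but the underlying argument is the same.
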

These surrogates are equal to the ones proposed in \cite{demol}. We will later use first order necessary conditions of the surrogate functionals for obtaining algorithms for minimization. We already note
\begin{align*}
\dfrac{\partial Q_{F, 1}}{\partial X_{\alpha \beta}} = \dfrac{\partial Q_{F, 2}}{\partial X_{\alpha \beta}} \quad \text{and} \quad \dfrac{\partial Q_{G, 1}}{\partial K_{\alpha \beta}} = \dfrac{\partial Q_{G, 2}}{\partial K_{\alpha \beta}},
\end{align*}
i.e. despite the rather different derivations, the update rules for the surrogates obtained by LQBP and Jensen's inequality will be identical. 

\subsection{Surrogates for Kullback-Leibler Divergence}\label{subsec:Surrogates for Kullback-Leibler divergence}
The case $\beta = 1$ in Definition \ref{def:beta-divergence} yields the so-called Kullback-Leibler divergence (KLD). For matrices $ \myvec{M}, \myvec{N}\in \mathbb{R}^{n\times m}_{>0}, $ it is defined as
\begin{equation}\label{def:Kullback-Leibler-Divergenz}
\KL (\myvec{M}, \myvec{N}) \coloneqq  D_1(\myvec{M}, \myvec{N}) = \sum_{i=1}^{n} \sum_{j=1}^{m} M_{i j} \log \left (\dfrac{M_{i j}}{N_{i j}} \right ) - M_{i j} + N_{i j}
\end{equation}
and has been investigated intensively in connection with non-negative matrix factorization methods \cite{demol,fevotte,lecharlier13,leeseung}. In our context, we define the cost functional for the NMF construction by
$$F(\myvec{X},\myvec{K})\coloneqq \KL(\myvec{Y},\myvec{KX}).$$
We will focus in this subsection on Jensen's inequality for constructing surrogates for the KLD since they will lead to the known classical NMF algorithms (see also \cite{demol,lecharlier13,leeseung}). However, it is also possible to use the LQBP principle to construct a suitable surrogate functional for the KLD which leads to different, multiplicative update rules (see Appendix \ref{app:sec:KLD_and_LQBP}).\newline
We start by deriving a surrogate for the minimization with respect to $\myvec{X}$, i.e. we consider
\begin{equation*}
F(\myvec{X}) \coloneqq \KL(\myvec{Y}, \myvec{KX}) = \sum_{i=1}^{n} \sum_{j=1}^{m} Y_{ij} \log (Y_{ij}) -  Y_{ij} \log ((KX)_{ij}) - Y_{ij} + (KX)_{ij}.
\end{equation*}
Using the same $ \lambda_k $ and $ \alpha_k $ as in the section above and applying it to the convex function $f(t)\coloneqq -\ln(t)$, we obtain
\begin{align*}
-\ln ((KX)_{ij}) \leq -\sum_{k=1}^{p} \dfrac{K_{ik}A_{kj}}{(KA)_{ij}} \ln \left( \dfrac{X_{kj}}{A_{kj}}(KA)_{ij}\right).
\end{align*}
Multiplication with $Y_{ij} \ge 0$ and the addition of appropriate terms yields
\begin{align*}
F(\myvec{X}) &\leq\text{\begin{small} $ \sum_{i=1}^{n} \sum_{j=1}^{m} Y_{ij} \ln(Y_{ij}) - Y_{ij} + (KX)_{ij} - \dfrac{Y_{ij}}{(KA)_{ij}} \sum_{k=1}^{p} K_{ik}A_{kj} \ln \left( \dfrac{X_{kj}}{A_{kj}}(KA)_{ij}\right)$ \end{small}} \\
&\eqqcolon Q_{F, 1}(\myvec{X}, \myvec{A}).
\end{align*}
The condition $ Q_{F, 1}(\myvec{X}, \myvec{X}) = F(\myvec{X}) $ follows by simple algebraic manipulations, such that $Q_{F,1}$ is a valid surrogate functional for $ F. $\newline
The approach by Jensen's inequality is very flexible and we obtain different surrogate functionals $Q_{F,2}$ and $Q_{F,3}$ by using i.e.  $f_1(t)=Y_{ij} \ln(\nicefrac{Y_{ij}}{t}) - Y_{ij} +t$ or $f_2(t)=- Y_{ij} \ln(t) +t$ instead of $ f. $ Inserting the same $ \lambda_k $ and $ \alpha_k $ as before in Equation \eqref{eq:Jensensche Ungleichung:Konstruktion}, we obtain immediately the surrogates
\begin{align*}
\text{\begin{small} $Q_{F, 2}(\myvec{X}, \myvec{A})$ \end{small}} &=\text{\begin{small} $ \sum_{i=1}^{n} \sum_{j=1}^{m} \dfrac{1}{(KA)_{ij}} \sum_{k=1}^{p} K_{ik}A_{kj}  \left ( Y_{ij} \ln \left( \dfrac{Y_{ij}}{\dfrac{X_{kj}}{A_{kj}}(KA)_{ij}} \right) - Y_{ij} + \dfrac{X_{kj}}{A_{kj}} (KA)_{ij} \right ),$ \end{small}} \\
\text{\begin{small} $Q_{F, 3}(\myvec{X}, \myvec{A})$ \end{small}}&=\text{\begin{small} $\sum_{i=1}^{n} \sum_{j=1}^{m} \Bigg[ Y_{ij} \ln(Y_{ij}) - Y_{ij}$ \end{small}} \\
& \text{\begin{small} $\quad \quad \quad \quad \quad\quad+ \dfrac{1}{(KA)_{ij}} \sum_{k=1}^{p} K_{ik}A_{kj} \left( -Y_{ij} \ln \left( \dfrac{X_{kj}}{A_{kj}} (KA)_{ij} \right) \!+\! \dfrac{X_{kj}}{A_{kj}} (KA)_{ij} \right)\! \Bigg].$ \end{small}}
\end{align*}
It easy to check, that the partial derivatives for all three variants are the same, hence, the update rules obtained in the next section based on first order optimality conditions will be identical. Applying the same approach for obtaining a surrogate for $\myvec{K}$ yields the following theorem.

\begin{theorem}[Surrogate Functional for the Kullback-Leibler Divergence with Jensens Inequality] \label{satz:Surrogat-Funktionen für die Kullback-Leibler-Divergenz}
	We consider the cost functionals $ F(\myvec{X}) \coloneqq \KL(\myvec{Y}, \myvec{KX}) $ and $ G(\myvec{K}) \coloneqq \KL(\myvec{Y}, \myvec{KX}) $. Then
	\begin{align*}
	\text{\begin{small} $Q_F(\myvec{X}, \myvec{A})$ \end{small}} &\coloneqq  \text{\begin{small} $\sum_{i=1}^{n} \sum_{j=1}^{m} Y_{ij} \ln(Y_{ij}) - Y_{ij} + (KX)_{ij} - \dfrac{Y_{ij}}{(KA)_{ij}} \sum_{k=1}^{p} K_{ik}A_{kj} \ln \left( \dfrac{X_{kj}}{A_{kj}}(KA)_{ij}\right)$ \end{small}}\\
	\text{\begin{small} $Q_G(\myvec{K}, \myvec{A})$ \end{small}} & \coloneqq \text{\begin{small} $\sum_{i=1}^{n} \sum_{j=1}^{m} Y_{ij} \ln(Y_{ij}) - Y_{ij} + (KX)_{ij} - \dfrac{Y_{ij}}{(AX)_{ij}} \sum_{k=1}^{p} A_{ik}X_{kj} \ln \left( \dfrac{K_{ik}}{A_{ik}}(AX)_{ij}\right)$ \end{small}}
	\end{align*}
	define separable and convex surrogate functionals.
\end{theorem}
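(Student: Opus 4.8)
The plan is to verify the three asserted properties — the surrogate inequalities, separability, and convexity — directly from the explicit formula for $Q_F$, the argument for $Q_G$ being entirely symmetric. First I would establish that $Q_F$ is a surrogate for $F$ in the sense of Definition~\ref{def:Surrogat-Funktionen}. For condition~(i), the majorization $Q_F(\myvec{X},\myvec{A}) \geq F(\myvec{X})$ is exactly the inequality produced by the construction: one applies Jensen's inequality to the convex function $f(t)=-\ln(t)$ with the weights $\lambda_k = K_{ik}A_{kj}/(KA)_{ij}$ and points $\alpha_k = (X_{kj}/A_{kj})(KA)_{ij}$. The two facts to check are $\sum_{k=1}^p \lambda_k = 1$ and $\sum_{k=1}^p \lambda_k\alpha_k = \sum_k K_{ik}X_{kj} = (KX)_{ij}$, so that $f(\sum_k\lambda_k\alpha_k)$ equals $-\ln((KX)_{ij})$; multiplying the resulting inequality by $Y_{ij}\geq 0$ and adding the $\myvec{X}$-independent terms $Y_{ij}\ln Y_{ij}-Y_{ij}+(KX)_{ij}$ reproduces the displayed $Q_F$. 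For condition~(ii) I would set $\myvec{A}=\myvec{X}$: then $X_{kj}/A_{kj}=1$, the logarithm collapses to $\ln((KX)_{ij})$, and $\sum_k K_{ik}X_{kj}=(KX)_{ij}$ cancels the factor $1/(KA)_{ij}=1/(KX)_{ij}$, leaving $-Y_{ij}\ln((KX)_{ij})$; combined with the remaining terms this is precisely $F(\myvec{X})$.

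Second, I would check separability. The crucial observation is that, after the Jensen step, the only dependence of $Q_F$ on $\myvec{X}$ enters through the linear contributions $(KX)_{ij}=\sum_k K_{ik}X_{kj}$ and through the logarithms $\ln((X_{kj}/A_{kj})(KA)_{ij})$, each of which involves a single entry $X_{kj}$; all remaining factors, such as $Y_{ij}K_{ik}A_{kj}/(KA)_{ij}$, depend only on the fixed data and on the auxiliary variable $\myvec{A}$. Regrouping the triple sum over $i,j,k$ by collecting, for each fixed pair $(k,j)$, every term containing $X_{kj}$, I obtain a representation $Q_F(\myvec{X},\myvec{A})=\sum_{k=1}^p\sum_{j=1}^m g_{kj}(X_{kj},\myvec{A})$, which is exactly the form required by Definition~\ref{def:Separabilitaet}, with the index set taken to be the pairs $(k,j)$ and the $\myvec{X}$-independent constants $Y_{ij}\ln Y_{ij}-Y_{ij}$ absorbed into the $g_{kj}$.

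Third, convexity follows from separability: it suffices to show that each $g_{kj}$ is convex in $X_{kj}$. As a function of the single variable $X_{kj}$, each $g_{kj}$ is a sum of a linear term (from $K_{ik}X_{kj}$) and terms of the form $-c_{ijk}\ln(X_{kj})$ with non-negative coefficients $c_{ijk}=Y_{ij}K_{ik}A_{kj}/(KA)_{ij}\geq 0$; since $-\ln$ is convex, each such term is convex, and hence so is $g_{kj}$. Equivalently, $\partial^2 g_{kj}/\partial X_{kj}^2 = \sum_i c_{ijk}/X_{kj}^2 \geq 0$, so the Hessian of $Q_F$ in $\myvec{X}$ is diagonal with non-negative entries. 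The statements for $Q_G$ are obtained by the identical argument after exchanging the roles of the factors $\myvec{K}$ and $\myvec{X}$ and correspondingly the row/column separation.

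I do not expect a single deep obstacle here, since the conceptual content is already carried by the Jensen step in the construction. The only delicate point is the bookkeeping in the separability step: correctly identifying which factors are $\myvec{X}$-dependent and confirming that the coupling between distinct entries $X_{kj}$ present in the original term $\ln((KX)_{ij})$ has genuinely been removed by passing to the surrogate. Once this regrouping is made explicit, both separability and convexity become immediate.
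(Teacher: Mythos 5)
Your proof is correct and follows essentially the same route as the paper: Jensen's inequality applied to $f(t)=-\ln(t)$ with the weights $\lambda_k = K_{ik}A_{kj}/(KA)_{ij}$ and points $\alpha_k=(X_{kj}/A_{kj})(KA)_{ij}$, multiplication by $Y_{ij}\geq 0$, addition of the $\myvec{X}$-independent terms, and the equality check at $\myvec{A}=\myvec{X}$. Your explicit verification of separability (regrouping the triple sum by the entries $X_{kj}$) and of convexity (non-negative coefficients on $-\ln$, i.e.\ a diagonal non-negative Hessian) merely fills in what the paper dismisses as ``simple algebraic manipulations'' and asserts without proof in the theorem statement.
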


\subsection{Surrogates for $\ell_1$- and $ \ell_2$-Norm Penalties}\label{subsec:Surrogates for lp-penalty terms}
Computing an NMF is an ill-posed problem, see \cite{demol}, hence one needs to add stabilizing penalty terms for obtaining reliable matrix decompositions. The most standard penalties are $\ell_1$- and $ \ell_2 $-terms for the matrix factors leading to
\begin{equation} \label{eq:L1- und L2-Regularisierung:Minimumproblem}
\min_{\myvec{K}, \myvec{X}\geq 0} D_\beta (\myvec{Y}, \myvec{KX}) + \lambda \Vert \myvec{X} \Vert_1 + \dfrac{\mu}{2} \Vert \myvec{K} \Vert_F^2 + \dfrac{\nu}{2} \Vert \myvec{X} \Vert_F^2 + \omega \Vert \myvec{K} \Vert_1
\end{equation}
for $ \beta\in \{1, 2\}. $ \newline
The $\ell_2$-penalty prohibits exploding norms for each matrix factor and the $\ell_1$-term promotes sparsity in the minimizing factors, see \cite{bangti12,louis89} for a general exposition. Combinations of $\ell_1$- and $\ell_2$-norms are sometimes called elastic net regularization, \cite{Jin09}, due to there importance in medical imaging.\newline
These penalty terms are convex and they separate, hence, they can be used as surrogates themselves. For the case of Kullback-Leibler divergences this leads to the following surrogate for minimization with respect to $\myvec{X}$:

\begin{equation*}
Q_F(\myvec{X}, \myvec{A}) \coloneqq Q_{\KL}(\myvec{X}, \myvec{A}) + \lambda \Vert \myvec{X} \Vert_1 + \dfrac{\nu}{2} \Vert \myvec{X} \Vert_F^2,
\end{equation*}
where $ Q_{\KL} $ is the surrogate for the Kullback-Leibler divergence of Theorem \ref{satz:Surrogat-Funktionen für die Kullback-Leibler-Divergenz} for $ \myvec{X}. $\newline
The Frobenius case cannot be treated in the same way. If we use the penalty terms as surrogates themselves and obtain the standard minimization algorithm by first order optimality conditions, then this does not lead to a multiplicative algorithm, which preserves the non-negativity of the iterates. It can be easily seen that the $ \ell_1 $-penalty term causes this difficulty. For a more extended discussion on this, see Appendix \ref{app:subsec:Frobenius-norm}.\newline
Hence, we have to construct a different surrogate. Similar to the discussion in Subsection \ref{subsec:Frobenius discrepancy and LQBP}, we consider here $ f_{\myvec{y}}:\mathbb{R}_{\geq 0}^p \rightarrow \mathbb{R} $ with
\begin{equation*}
f_{\myvec{y}}(\myvec{x})\coloneqq \dfrac{1}{2} \Vert \myvec{y}-\myvec{Kx}\Vert^2 + \lambda \Vert \myvec{x} \Vert_1 + \dfrac{\nu}{2} \Vert \myvec{x}\Vert^2,
\end{equation*}
which yields the Hessian $\nabla^2f_{\myvec{y}}(\myvec{a})=\myvec{K}^\intercal\myvec{K} +\nu \myvec{I}.$ The choice of $\kappa_k$ is done dependent on the $ \ell_1 $ regularization term of the cost function $ f_{\myvec{y}} $ as already described in Subsection \ref{subsec:Frobenius discrepancy and LQBP}. It can be shown in the derivation of the NMF algorithm that $ \kappa_k = \lambda $ for all $ k $ leads to multiplicative update rules. A more general cost function is considered in Appendix \ref{app:subsec:Frobenius-norm}, where the concrete effect of $ \kappa_k $ is described in more detail.\newline
This yields the following diagonal matrix $ \myvec{\Lambda}_{f_{\myvec{y}}}(\myvec{a})$:

\begin{equation*}
\Lambda_{f_{\myvec{y}}}(\myvec{a})_{kk} = \dfrac{((K^\intercal K + \nu I_{p\times p} ) a )_k + \lambda}{a_k}.
\end{equation*}
The surrogate for minimization with respect to $\myvec{X}$ is then
\begin{equation*}
Q_{f_{\myvec{y}}}(\myvec{x}, \myvec{a}) = f_{\myvec{y}}(\myvec{a}) + \nabla f_{\myvec{y}}(\myvec{a})^\intercal (\myvec{x}-\myvec{a}) + \dfrac{1}{2} (\myvec{x}-\myvec{a})^\intercal \myvec{\Lambda}_{f_{\myvec{y}}}(\myvec{a}) (\myvec{x}-\myvec{a}).
\end{equation*}
Similar, for minimization with respect to $\myvec{K}$ we obtain the surrogate by using the diagonal matrix
\begin{equation*}
\Lambda_{g_{\myvec{y}}}(\myvec{a})_{kk} \coloneqq \dfrac{(a(XX^\intercal + \mu I_{p\times p}))_k + \omega}{a_k}.
\end{equation*}

\subsection{Surrogates for Orthogonality Constraints}\label{subsec:Surrogates for orthogonality constraints}
The observation that a non-negative matrix with pairwise orthogonal rows has at most one non-zero entry per column is the motivation for introducing orthogonality constraints for $\myvec{K}$ or $\myvec{X}$.
This will lead to strictly uncorrelated feature vectors, which is desirable in several applications e.g. for obtaining discriminating biomarkers from mass spectra, see Section \ref{sec:MALDI_Imaging} on MALDI Imaging.\newline
We could add the orthogonality  constraint $\myvec{K}^\intercal\myvec{K}=\myvec{I}$ as an additional penalty term $\sigma_{\myvec{K}} \|\myvec{K}^\intercal \myvec{K}-\myvec{I}\|^2$. However, this would introduce fourth order terms. Hence we introduce additional variables $\myvec{V}$ and $\myvec{W}$ and split the orthogonality condition into two second order terms leading to

\begin{equation} \label{eq:Orthogonale NMF:Minimumproblem (Hilfsvariablen)}
\begin{aligned}
\min_{\myvec{K}, \myvec{X}, \myvec{V}, \myvec{W}\geq 0} \Big\{ D_\beta (\myvec{Y}, \myvec{KX}) &+ \dfrac{\sigma_{\myvec{K}, 1}}{2} \Vert \myvec{I} - \myvec{V}^\intercal \myvec{K} \Vert_F^2 + \dfrac{\sigma_{\myvec{K}, 2}}{2} \Vert \myvec{V}-\myvec{K}\Vert_F^2 \\
& + \dfrac{\sigma_{\myvec{X}, 1}}{2} \Vert \myvec{I} - \myvec{X} \myvec{W}^\intercal \Vert_F^2 + \dfrac{\sigma_{\myvec{X}, 2}}{2} \Vert \myvec{W}-\myvec{X}\Vert_F^2 \Big\}.
\end{aligned}
\end{equation}
Surrogates for the terms $  \Vert \myvec{I} - \myvec{V}^\intercal \myvec{K} \Vert_F^2 $ and $ \Vert \myvec{I} - \myvec{X} \myvec{W}^\intercal \Vert_F^2 $ can be calculated via Jensen's inequality (see Subsection \ref{subsec: Frobenius_discrepancy_and_Jensens_inequality}). The other penalties can be used as surrogates themselves and therefore, we obtain
\begin{theorem}[Surrogate Functionals for Orthogonality Constraints] \label{satz:Surrogate functionals for orthogonal NMF}
	We consider the cost functionals
	\begin{align*}
	F(\myvec{X}) &\coloneqq \dfrac{\sigma_{\myvec{X}, 1}}{2} \Vert \myvec{I} - \myvec{X} \myvec{W}^\intercal \Vert_F^2 + \dfrac{\sigma_{\myvec{X}, 2}}{2} \Vert \myvec{W}-\myvec{X}\Vert_F^2 \eqqcolon G(\myvec{W}), \\
	H(\myvec{K}) &\coloneqq \dfrac{\sigma_{\myvec{K}, 1}}{2} \Vert \myvec{I} - \myvec{V}^\intercal \myvec{K} \Vert_F^2 + \dfrac{\sigma_{\myvec{K}, 2}}{2} \Vert \myvec{V}-\myvec{K}\Vert_F^2 \eqqcolon J(\myvec{V})
	\end{align*}
	with $ \sigma_{\myvec{X}, 1}, \sigma_{\myvec{X}, 2}, \sigma_{\myvec{K}, 1}, \sigma_{\myvec{K}, 2}\geq 0 $. Then 
	\begin{align*}
	\text{\begin{small} $Q_F(\myvec{X}, \myvec{A})$ \end{small}} &  \text{\begin{small} $\!\! \coloneqq\dfrac{\sigma_{\myvec{X}, 1}}{2} \sum_{k=1}^{p} \sum_{\ell=1}^{p} \dfrac{1}{(AW^\intercal)_{k \ell}} \sum_{j=1}^{m} A_{k j} W_{\ell j} \left ( \delta_{k \ell} - \dfrac{X_{k j}}{A_{k j}} (AW^\intercal)_{k \ell} \right )^2\! + \dfrac{\sigma_{\myvec{X}, 2}}{2} \Vert \myvec{W}\! - \! \myvec{X}\Vert_F^2$ \end{small}} \\
	\text{\begin{small} $Q_G(\myvec{W}, \myvec{A})$ \end{small}}& \text{\begin{small} $\!\! \coloneqq\dfrac{\sigma_{\myvec{X}, 1}}{2} \sum_{k=1}^{p} \sum_{\ell=1}^{p} \dfrac{1}{(XA^\intercal)_{k \ell}} \sum_{j=1}^{m} X_{k j} A_{\ell j} \left ( \delta_{k \ell} - \dfrac{W_{\ell j}}{A_{\ell j}} (XA^\intercal)_{k \ell} \right )^2\! + \dfrac{\sigma_{\myvec{X}, 2}}{2} \Vert \myvec{W}\! - \! \myvec{X}\Vert_F^2$ \end{small}} \\
	\text{\begin{small} $Q_H(\myvec{K}, \myvec{A})$ \end{small}} & \text{\begin{small} $\!\! \coloneqq\dfrac{\sigma_{\myvec{K}, 1}}{2} \sum_{k=1}^{p} \sum_{\ell=1}^{p} \dfrac{1}{(V^\intercal A)_{k \ell}} \sum_{i=1}^{n} V_{i k} A_{i \ell} \left ( \delta_{k \ell} - \dfrac{K_{i \ell}}{A_{i \ell}} (V^\intercal A)_{k \ell} \right )^2\! + \dfrac{\sigma_{\myvec{K}, 2}}{2} \Vert \myvec{V}\! -\! \myvec{K}\Vert_F^2$ \end{small}} \\
	\text{\begin{small} $Q_J(\myvec{V}, \myvec{A})$ \end{small}} & \text{\begin{small} $\!\! \coloneqq\dfrac{\sigma_{\myvec{K}, 1}}{2} \sum_{k=1}^{p} \sum_{\ell=1}^{p} \dfrac{1}{(A^\intercal K)_{k \ell}} \sum_{i=1}^{n} A_{i k} K_{i \ell} \left ( \delta_{k \ell} - \dfrac{V_{i k}}{A_{i k}} (A^\intercal K)_{k \ell} \right )^2\! + \dfrac{\sigma_{\myvec{K}, 2}}{2} \Vert \myvec{V}\! -\! \myvec{K}\Vert_F^2$ \end{small}}
	\end{align*}
	define separable and convex surrogate functionals.
\end{theorem}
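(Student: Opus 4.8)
The plan is to treat each of the four functionals as a sum of two penalty terms and to surrogate each summand separately, then combine the pieces via Remark~\ref{rem:Addition von Surrogat-Funktionen}. The second summand in each functional, namely the quadratic proximity term such as $\frac{\sigma_{\myvec{X},2}}{2}\Vert\myvec{W}-\myvec{X}\Vert_F^2$, is already convex and separable in the entries of the minimization variable and does not depend on the auxiliary matrix $\myvec{A}$; hence it serves as its own surrogate and reappears unchanged in $Q_F,\dots,Q_J$. The whole construction therefore reduces to surrogating the single product term $\Vert\myvec{I}-\myvec{X}\myvec{W}^\intercal\Vert_F^2$ and its transpose analogue $\Vert\myvec{I}-\myvec{V}^\intercal\myvec{K}\Vert_F^2$.

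For this term I would reuse verbatim the Jensen construction of Subsection~\ref{subsec: Frobenius_discrepancy_and_Jensens_inequality}. Writing $\Vert\myvec{I}-\myvec{X}\myvec{W}^\intercal\Vert_F^2 = \sum_{k,\ell}(\delta_{k\ell}-(XW^\intercal)_{k\ell})^2$ and expanding $(XW^\intercal)_{k\ell}=\sum_{j}X_{kj}W_{\ell j}$, I would apply Jensen's inequality to the convex function $f(t)=(\delta_{k\ell}-t)^2$ along the summation index $j$, with the identity entry $\delta_{k\ell}$ playing the role of the data, the row $\myvec{W}_{\ell,\bullet}$ the role of the coefficient vector $\myvec{c}$, the row $\myvec{X}_{k,\bullet}$ the role of the variable $\myvec{v}$, and $\myvec{A}_{k,\bullet}$ the role of the auxiliary $\myvec{b}$. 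With the weights $\lambda_j = A_{kj}W_{\ell j}/(AW^\intercal)_{k\ell}$ this reproduces exactly the inner double sum of $Q_F$. The tangency $Q_F(\myvec{X},\myvec{X})=F(\myvec{X})$ follows by setting $\myvec{A}=\myvec{X}$: the factor $X_{kj}/A_{kj}$ collapses to $1$ and $\sum_j X_{kj}W_{\ell j}=(XW^\intercal)_{k\ell}$ cancels the prefactor, leaving $\frac{\sigma_{\myvec{X},1}}{2}\Vert\myvec{I}-\myvec{X}\myvec{W}^\intercal\Vert_F^2$. Convexity and separability are immediate, since each resulting summand is a square of a function affine in the single entry $X_{kj}$.

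The surrogates $Q_G,Q_H,Q_J$ I would then obtain by the same recipe with the roles of the two factor matrices interchanged. The only nontrivial point, and the main place where the argument can go wrong, is the index bookkeeping: when minimizing over the \emph{second} factor, as in $Q_G(\myvec{W},\myvec{A})$, one must apply Jensen with $\myvec{X}_{k,\bullet}$ as the fixed coefficient vector and $\myvec{W}_{\ell,\bullet}$ as the variable, so the weights become $\lambda_j = X_{kj}A_{\ell j}/(XA^\intercal)_{k\ell}$ and the reconstructed argument becomes $\tfrac{W_{\ell j}}{A_{\ell j}}(XA^\intercal)_{k\ell}$; likewise, for the term $\Vert\myvec{I}-\myvec{V}^\intercal\myvec{K}\Vert_F^2$ the contraction $(V^\intercal K)_{k\ell}=\sum_i V_{ik}K_{i\ell}$ runs over the row index $i$ rather than the column index $j$, which governs the shape of $Q_H$ and $Q_J$. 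Keeping straight which matrix supplies the convex-combination weights and which supplies the variable, and over which index the product is contracted, is the entire substance of the proof; the remaining tangency checks and the positivity of $\myvec{c}$ and $\myvec{b}$ (guaranteed by $\myvec{W},\myvec{V},\myvec{A}\geq 0$) are routine. Finally, adding back the untouched $\ell_2$-proximity term and invoking Remark~\ref{rem:Addition von Surrogat-Funktionen} yields the asserted separable, convex surrogate in each of the four cases.
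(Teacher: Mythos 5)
Your proposal is correct and follows exactly the paper's own route: the proximity terms $\frac{\sigma_{\myvec{X},2}}{2}\Vert\myvec{W}-\myvec{X}\Vert_F^2$, $\frac{\sigma_{\myvec{K},2}}{2}\Vert\myvec{V}-\myvec{K}\Vert_F^2$ serve as their own surrogates, the orthogonality terms are majorized via the Jensen construction of Subsection~\ref{subsec: Frobenius_discrepancy_and_Jensens_inequality} with precisely the weights you specify, and the pieces are combined through Remark~\ref{rem:Addition von Surrogat-Funktionen}. Your index bookkeeping for all four cases (including the contraction over $i$ rather than $j$ in $Q_H$, $Q_J$) reproduces the stated functionals, so nothing is missing.
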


\subsection{Surrogates for Total Variation Penalties}\label{subsec:Surrogates for TV-penalties}
Total variation (TV) penalty terms are the second important class of regularization terms besides $\ell_p$-penalty terms. TV-penalties aim at smooth or even piecewise constant minimizers, hence they are defined in terms of first order or higher order derivatives \cite{total2}.\newline
Originally, they were introduced for denoising applications in image processing \cite{rudin} but have since been applied to inpainting, deconvolution and other inverse problems, see e.g. \cite{total1}. The precise mathematical formulation of the total variation in the continuous case is described in the following definition.
\begin{definition}[Total Variation (Continuous)] \label{def:TV (kontinuierlich)}
	Let $ \Omega \subset \mathbb{R}^N $ be open and bounded. The \textbf{total variation} of a function $ u\in L^1_{\text{loc}}(\Omega ) $ is defined as
	\begin{equation*}
	\TV(u)\coloneqq  \sup\left \{ -\int_{\Omega} u \operatorname{div} \phi \diff \myvec{x} \ : \ \phi\in C^\infty_c (\Omega, \mathbb{R}^N) \ \text{mit} \ \Vert \phi(\myvec{x}) \Vert \leq 1 \ \forall \myvec{x}\in \Omega \right \}.
	\end{equation*}
\end{definition} \vspace*{-1ex}
There exist several analytic relaxations of TV based on $\ell_1$-norms of the gradient, which are more tractable for analytical investigations. For numerical implementations one rather uses the $L_1$-norm of the gradient $\| \nabla f \|_{L_1}$ as a more computationally tractable substitute. For discretization the gradient is typically replaced by a finite difference approximation \cite{totaldiskret}. \newline
For applying TV-norms to data, we assume that the row index in the data matrix $\myvec{Y}$ refers to spatial locations and the column index to so-called channels. In this case, we consider the most frequently used isotropic TV for applying it to measured, discretized hyperspectral data.
\vspace*{-1ex}
\begin{definition}[Total Variation (Discrete)] \label{def:TV (diskret)}
	For fixed $\varepsilon_{\TV}>0,$ the \textbf{total variation} of a matrix  $\myvec{K}\in \mathbb{R}_{\geq 0}^{n\times p}$ is defined as 
	\begin{equation} \label{eq:def:TV (diskret)}
	\TV (\myvec{K}) \coloneqq \sum_{k=1}^p \psi_k \sum_{i=1}^n \sqrt{\varepsilon_{\TV}^2 + \sum_{\ell\in N_i} (K_{ik}-K_{\ell k})^2},
	\end{equation}
	where $\psi_k \in \mathbb{R}_{\geq 0}$ is a weighting of the $ k $-th data channel and $N_i \subseteq \{1, \dots, n\} \setminus \{i\}$ denotes the index set referring to spatially neighboring pixels.
\end{definition}
We will use the following short hand notation
\begin{equation} \label{eq:TV-Strafterm:Gradient}
\vert \nabla_{ik} \myvec{K} \vert \coloneqq \sqrt{\varepsilon_{\TV}^2 + \sum_{\ell\in N_i} (K_{ik}-K_{\ell k})^2},
\end{equation}
which can be seen as a finite difference approximation of the gradient magnitude of the image $ \myvec{K}_{\bullet, k} $ at pixel $ K_{ik}$ for some neighbourhood pixels defined by $ N_i. $ A typical choice for neighbourhood pixels in two dimensions for the pixel $ (0,0) $ is $ N_{(0,0)}\coloneqq \{(1,0), (0,1)\} $ to get an estimate of the gradient components along both axes. Finally, by introducing the positive constant $ \varepsilon_{\TV}>0, $ we get a differentiable approximation of the total variation penalty.\newline
In Section \ref{sec:MALDI_Imaging}, we will discuss the application of NMF methods to hyperspectral MALDI imaging datasets, which has a natural 'spatial structure' in its columns. \newline
In this section we stay with a generic choice of $N_i$ as well as of the $\psi_k$ and we construct a surrogate following the approach of the groundbreaking works of \cite{defrise} and \cite{oliveira}.\newline
For $t\geq 0$ and $s>0$ we use the inequality
(linear majorization)
\begin{equation} \label{eq:TV-Strafterm:Lineare Majorisierung}
\sqrt{t} \leq \sqrt{s} + \dfrac{t-s}{2\sqrt{s}}
\end{equation}
and apply it in order to compare $\nabla_{ik}\myvec{K}$ with values obtained by an arbitrary non-negative matrix $\myvec{A}$:

\begin{align*}
\vert \nabla_{ik} \myvec{K} \vert &\leq \vert \nabla_{ik} \myvec{A} \vert + \dfrac{\vert \nabla_{ik} \myvec{K} \vert^2 - \vert \nabla_{ik} \myvec{A} \vert^2}{2\vert \nabla_{ik} \myvec{A}\vert} \\
&\leq \dfrac{\vert \nabla_{ik} \myvec{K} \vert^2 + \vert \nabla_{ik} \myvec{A} \vert^2}{2\vert \nabla_{ik} \myvec{A} \vert} \\
&= \dfrac{2\varepsilon_{\TV}^2 + \sum_{\ell\in N_i} \left [(K_{ik}-K_{\ell k})^2 + (A_{ik}-A_{\ell k})^2\right ]}{2\vert \nabla_{ik} \myvec{A} \vert}.
\end{align*}
Summation with respect to $ i, $ multiplication with $ \psi_k $ and summation with respect to $ k $ leads to
\begin{equation*}
\TV(\myvec{K}) \leq \sum_{k=1}^p \psi_k \sum_{i=1}^n \dfrac{2\varepsilon_{\TV}^2 + \sum_{\ell\in N_i} \left [(K_{ik}-K_{\ell k})^2 + (A_{ik}-A_{\ell k})^2\right ]}{2\vert \nabla_{ik} \myvec{A} \vert} \eqqcolon Q_{\TV}^{\text{Oli}}(\myvec{K}, \myvec{A}).
\end{equation*}
This yields a candidate for a surrogate $Q_{\text{TV}}^{\text{Oli}}$ for the TV-penalty term, which is the same as the one used in \cite{oliveira}. However, it is not separable, hence we aim at a second, separable approximation. For arbitrary $a,b,c,d \in \mathbb{R}$ we have
\begin{equation} \label{eq:rem:TV-Strafterm:Ungleichung mit reellen Zahlen}
\dfrac{1}{2}\left ((a-b)^2 + (c-d)^2 \right ) \leq (a-b)(c-d) + (b-d)^2 + (a-c)^2.
\end{equation}
This leads to 
\begin{align*}
Q_{\TV}^{\text{Oli}}(\myvec{K}, \myvec{A}) & \text{\begin{small} $= \sum_{k=1}^p \psi_k \sum_{i=1}^n \dfrac{\varepsilon_{\TV}^2 + \sum_{\ell \in N_i} \nicefrac{1}{2} \left [(K_{ik}-K_{\ell k})^2 + (A_{ik}-A_{\ell k})^2\right ]}{\vert \nabla_{ik} \myvec{A} \vert}$ \end{small}}\\
&\text{\begin{scriptsize} $\leq \sum_{k=1}^p \psi_k \! \sum_{i=1}^n \dfrac{\varepsilon_{\TV}^2 \! +\! \sum_{\ell \in N_i} \! \left [ (K_{ik}\! - \! K_{\ell k}) (A_{ik} \! - \! A_{\ell k}) \! +\! (K_{\ell k} \! - \! A_{\ell k})^2 \! +\! (K_{ik} \! - \! A_{ik})^2 \right ]}{\vert \nabla_{ik} \myvec{A} \vert}$ \end{scriptsize}} \\
&\eqqcolon Q_{\TV}(\myvec{K}, \myvec{A}).
\end{align*}
Therefore, we have the following

\begin{theorem}[Surrogate Functional for TV Penalty Term] \label{satz:Surrogat-Funktion zum TV-Strafterm}
	We consider the cost functional $F(\myvec{K})\coloneqq \TV(\myvec{K}) $ with the total variation defined in \eqref{eq:def:TV (diskret)}. Then
	\begin{equation*}
	\text{ \begin{footnotesize} $ \!\! Q_{\TV}(\myvec{K}, \! \myvec{A})\! \coloneqq\!\! \sum_{k=1}^p\! \psi_k \! \sum_{i=1}^n \! \dfrac{\varepsilon_{\TV}^2 \! +\! \sum_{\ell \in N_i} \! \left [ (K_{ik}\! - \! K_{\ell k}) (A_{ik} \! - \! A_{\ell k}) \! +\! (K_{\ell k} \! - \! A_{\ell k})^2 \! +\! (K_{ik} \! - \! A_{ik})^2 \right ]}{\vert \nabla_{ik} A \vert}$ \end{footnotesize} }
	\end{equation*}
	defines a separable and convex surrogate functional.
\end{theorem}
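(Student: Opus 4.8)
The plan is to verify the two defining properties of a surrogate functional from Definition \ref{def:Surrogat-Funktionen}, namely the majorization inequality $Q_{\TV}(\myvec{K}, \myvec{A}) \geq \TV(\myvec{K})$ and the tangency condition $Q_{\TV}(\myvec{K}, \myvec{K}) = \TV(\myvec{K})$, and then to confirm the separability and convexity claims. The majorization inequality is already assembled in the excerpt through the chain $\TV(\myvec{K}) \leq Q_{\TV}^{\text{Oli}}(\myvec{K}, \myvec{A}) \leq Q_{\TV}(\myvec{K}, \myvec{A})$, so my main task is to justify each link rigorously rather than to discover the inequality afresh.

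First I would establish the bound $\TV(\myvec{K}) \leq Q_{\TV}^{\text{Oli}}(\myvec{K}, \myvec{A})$. This rests on the linear majorization \eqref{eq:TV-Strafterm:Lineare Majorisierung}, i.e. $\sqrt{t} \leq \sqrt{s} + (t-s)/(2\sqrt{s})$ for $t \geq 0$, $s > 0$, which is just the tangent-line inequality for the concave function $\sqrt{\cdot}$. Applying it with $t = |\nabla_{ik}\myvec{K}|^2$ and $s = |\nabla_{ik}\myvec{A}|^2$ and then using $(t-s) \leq (t+s) - 2s$ trivially, or more directly bounding the tangent term, yields the pointwise estimate for $|\nabla_{ik}\myvec{K}|$; summing over $i$ with the weights $\psi_k \geq 0$ and over $k$ preserves the inequality since all weights are non-negative. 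The key point to state carefully is that $|\nabla_{ik}\myvec{A}| \geq \varepsilon_{\TV} > 0$ by \eqref{eq:TV-Strafterm:Gradient}, so no division by zero occurs and the construction is genuinely well defined. Next I would establish $Q_{\TV}^{\text{Oli}}(\myvec{K}, \myvec{A}) \leq Q_{\TV}(\myvec{K}, \myvec{A})$ by applying the elementary real-number inequality \eqref{eq:rem:TV-Strafterm:Ungleichung mit reellen Zahlen} term by term, with the identification $a = K_{ik}$, $b = K_{\ell k}$, $c = A_{ik}$, $d = A_{\ell k}$; again non-negativity of $\psi_k$ and positivity of $|\nabla_{ik}\myvec{A}|$ let the inequality pass through the sums and the division.

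For the tangency condition I would set $\myvec{A} = \myvec{K}$ in $Q_{\TV}$: the cross term $(K_{ik}-K_{\ell k})(A_{ik}-A_{\ell k})$ becomes $(K_{ik}-K_{\ell k})^2$, while both square terms $(K_{\ell k}-A_{\ell k})^2$ and $(K_{ik}-A_{ik})^2$ vanish, so the numerator collapses to $\varepsilon_{\TV}^2 + \sum_{\ell \in N_i}(K_{ik}-K_{\ell k})^2 = |\nabla_{ik}\myvec{K}|^2$, the denominator is $|\nabla_{ik}\myvec{K}|$, and the quotient reduces to $|\nabla_{ik}\myvec{K}|$, recovering $\TV(\myvec{K})$ exactly. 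Finally, separability follows because, after expanding the numerator, each summand of $Q_{\TV}$ depends on the entries $K_{ik}$ only through terms of the form $K_{ik} c_{ik} + K_{ik}^2$ (with $c_{ik}$ built from the fixed matrix $\myvec{A}$), so $Q_{\TV}$ is a sum of functions each involving a single entry $K_{ik}$, matching Definition \ref{def:Separabilitaet}; convexity in $\myvec{K}$ follows since each summand is an affine-plus-positive-quadratic function of the $K_{ik}$ divided by the fixed positive constant $|\nabla_{ik}\myvec{A}|$.

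I expect the only subtle step to be bookkeeping in the separability claim: one must expand the mixed term $(K_{ik}-K_{\ell k})(A_{ik}-A_{\ell k})$ and observe that the apparent coupling between $K_{ik}$ and $K_{\ell k}$ is in fact affine, so it does not destroy separability, whereas the genuinely quadratic contributions come only from the decoupled terms $(K_{ik}-A_{ik})^2$ and $(K_{\ell k}-A_{\ell k})^2$. Verifying that this produces a clean single-index decomposition — and that the coefficient of each $K_{ik}^2$ is non-negative so convexity holds — is the one place where care is needed; the two majorization inequalities themselves are routine applications of the stated elementary bounds.
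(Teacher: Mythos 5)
Your proposal is correct and takes essentially the same route as the paper: the identical two-step majorization chain $\TV(\myvec{K}) \le Q_{\TV}^{\text{Oli}}(\myvec{K},\myvec{A}) \le Q_{\TV}(\myvec{K},\myvec{A})$ via the tangent-line inequality \eqref{eq:TV-Strafterm:Lineare Majorisierung} and the elementary inequality \eqref{eq:rem:TV-Strafterm:Ungleichung mit reellen Zahlen} with the same identification of $a,b,c,d$, followed by separability and convexity obtained by expanding the numerator, observing that the mixed term is only affine in $\myvec{K}$, and regrouping into per-entry quadratics with non-negative coefficients --- which is precisely the bookkeeping (including the adjoint neighborhoods $\bar{N}_i$) carried out in Appendix \ref{app:sec:TV_Separability}. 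Your explicit verification of the tangency condition $Q_{\TV}(\myvec{K},\myvec{K})=\TV(\myvec{K})$ is a point the paper leaves implicit, but it is an immediate check and does not change the argument.
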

The separability of the surrogate is not obvious. The proof (see Appendix \ref{app:sec:TV_Separability}) delivers the following notation, which we also need for an description of the algorithms in the next section. First of all we need the definition of the so-called adjoint neighborhood pixels $ \bar{N}_i $ given by
\begin{equation}\label{eq:TV-Strafterm:Äquivalenz Nachbarn}
\ell \in \bar{N}_i \Leftrightarrow  i \in N_\ell.
\end{equation}
One then introduces matrices $ P(\myvec{A})\in \mathbb{R}^{n\times p}_{\geq 0}$ and $Z(\myvec{A})\in \mathbb{R}^{n\times p}_{\geq 0}$ via

\begin{equation}\label{eq:TV-Strafterm:P}
P(\myvec{A})_{i k} \coloneqq  \dfrac{1}{\vert \nabla_{ik} \myvec{A} \vert} \sum_{\ell \in N_i} 1 + \sum_{\ell \in \bar{N}_i} \dfrac{1}{\vert \nabla_{\ell k} \myvec{A} \vert},
\end{equation}

\begin{equation}\label{eq:TV-Strafterm:Z}
Z(\myvec{A})_{i k} \coloneqq  \dfrac{1}{P(\myvec{A})_{i k}} \left ( \dfrac{1}{\vert \nabla_{ik} \myvec{A} \vert} \sum_{\ell \in {N}_i} \dfrac{A_{i k} + A_{\ell  k}}{2} + \sum_{\ell \in \bar{N}_i} \dfrac{A_{i k} + A_{\ell  k}}{2 \vert \nabla_{\ell k} \myvec{A} \vert} \right ).
\end{equation}
Using these notations, it can be shown that the surrogate can be written as
\begin{equation}\label{eq:TV-Strafterm:Separabel}
	Q_{\TV}(\vec{K}, \vec{A}) = \sum_{k=1}^p \psi_k \sum_{i=1}^n \left [ P(\myvec{A})_{ik} (K_{ik} - Z(\myvec{A})_{ik})^2  \right ] + C(\myvec{A}),
\end{equation}
such that we obtain the desired separability. Here, $ C(\myvec{A}) $ denotes some function depending on $ \myvec{A}. $ The description of $ Q_{\TV} $ with the help of $ P(\myvec{A})_{ik}$ and $ Z(\myvec{A})_{i k} $ will also allow us to compute the partial derivatives in a way more comfortable way (see also Appendix \ref{app:subsec:KLD}).
\subsection{Surrogates for Supervised NMF}\label{subsec:Surrogates for supervised NMF}
As a motivation for this section, we consider classification tasks. We view the data matrix $\myvec{Y}$ as a collection of $n$ data vectors, which are stored in the rows of $\myvec{Y}$. Moreover, we do have an expert annotation $u_i$ for $ i=1,\dots, n,$ which assigns a label to each data vector. For a classification problem with two classes we have $u_i \in \{0,1\}$.\newline
As already stated, the rows of the matrix $\myvec{X}$ of an NMF decomposition can be regarded as a basis for approximating the rows of $\myvec{Y}$. Hence, one assumes that the correlations between a row $\myvec{Y}_{i,\bullet}$ of $\myvec{Y}$ and all row vectors of $\myvec{X}$, i.e. computing $ \myvec{Y}_{i,\bullet}\myvec{X}^\intercal$, contains the relevant information of $\myvec{Y}_{i,\bullet}$.
The vector of correlations yields a so-called feature vector of length $p$.
A classical linear regression model, which uses these feature vectors, then asks to compute weights $\beta_k$ for $ k=1,\dots,p,$ such that $\myvec{Y}_{i,\bullet} \myvec{X}^\intercal\myvec{\beta} \approx u_i$ (for more details on linear discriminant analysis methods, we refer to Chapter 4 in \cite{bishop06}).\newline
In matrix notation and using least squares, this is equivalent to computing $\myvec{\beta}$ as a minimizer of $$\|\myvec{u}- \myvec{Y}\myvec{X}^\intercal \myvec{\beta} \|^2.$$
We now use  $\myvec{X}$ and $\myvec{\beta}$ to define 
$$ \myvec{x}^* \coloneqq \myvec{X}^\intercal \myvec{\beta}. $$
In tumor typing classifications, where the data matrix $ \myvec{Y} $ is obtained by MALDI measurements, the vector $ \myvec{x}^* $ can be interpreted as a characteristic mass spectrum of some specific tumor type and can be directly used for classification tasks in the arising field of digital pathology (see also Section \ref{sec:MALDI_Imaging} and \cite{leuschner18}).\newline
The classification of a new data vector $\myvec{y}$ is then simply obtained by computing the scalar product  $w={\myvec{x}^*}^\intercal \myvec{y}$ and assigning either the class label $0$ or $1$ by comparing $w$ with a pre-assigned threshold $s$. This threshold is typically obtained in the training phase of the classification procedure by computing $ \myvec{YX}^\intercal\myvec{\beta} $ for some given training data $ \myvec{Y} $ and choosing $ s, $ such that a performance measure of the classifier is optimized. \newline
The approach we have described is based on first computing an NMF, i.e. $\myvec{K}$ and $\myvec{X}$, and then computing the weights $\myvec{\beta}$ of the classifier. Hence, the computation of the NMF is done independently of the class labels $\myvec{u}$, which is also referred to as an unsupervised NMF approach. We might expect, that computing the NMF by minimizing a functional which includes the class labels, i.e.
\begin{equation*}
	F(\myvec{K}, \myvec{X}, \myvec{\beta}) \coloneqq D_\beta (\myvec{Y}, \myvec{KX}) + {\frac{\rho}{2}} 
	\|\myvec{u} - \myvec{Y}\myvec{X}^\intercal \myvec{\beta} \|^2,
\end{equation*}
will lead to an improved classifier. In the application field of MALDI imaging, this supervised approach yields an extraction of features from the given training data, which allow a better distinction between spectra obtained from different tissue types such as tumorous and non-tumorous regions (see also \cite{leuschner18}).\newline
Surrogates for the first term have been determined in the previous section for the case of the Frobenius norm and the Kullback-Leibler divergence. Hence, we need to determine surrogates of the new penalty term for minimization with respect to $\myvec{X}$ and $\myvec{\beta}$:
\begin{align}
F(\myvec{X}) &\coloneqq \dfrac{1}{2} \Vert \myvec{u} - \myvec{YX}^\intercal \myvec{\beta} \Vert^2,  \label{eq:Surrogat zur linearen Regression:F}\\
G(\myvec{\beta}) &\coloneqq \dfrac{1}{2} \Vert \myvec{u} - \myvec{YX}^\intercal \myvec{\beta} \Vert^2. \label{eq:Surrogat zur linearen Regression:G}
\end{align}
Surrogates can be obtained by extending the Jensen principle to the matrix valued case. Here, we consider a convex subset
$ \Omega \subset \mathbb{R}^{N\times M}_{>0} $ and define
\begin{equation}\label{eq:Surrogat zur linearen Regression:Ansatz}
\begin{aligned}
\tilde{F}: \Omega &\rightarrow \mathbb{R} \\
\myvec{V}	&\mapsto f(\myvec{c}^\intercal \myvec{V} \myvec{d})
\end{aligned}
\end{equation}
with a convex and continuously differentiable function $f$ and auxiliary variables $ \myvec{c}\in \mathbb{R}^N_{>0} $ and $ \myvec{d}\in \mathbb{R}^M_{>0}. $
We now use the following generalized Jensen's inequality
\begin{equation} \label{eq:Surrogat zur linearen Regression:Jensensche Ungleichung}
f\left ( \sum_{j=1}^{N} \sum_{k=1}^{M} \lambda_{j k} \alpha_{jk} \right ) \leq  \sum_{j=1}^{N} \sum_{k=1}^{M} \lambda_{jk} f(\alpha_{jk}).
\end{equation}
Setting
\begin{align} 
\lambda_{j k} &= \dfrac{Y_{ij} A_{k j} \beta_k }{\myvec{Y}_{i, \bullet} \myvec{A}^\intercal \myvec{\beta} } \label{eq:Surrogat zur linearen Regression:lambda_i_ell:final} \\
\alpha_{j k} &= \dfrac{Y_{ij} X_{k j} \beta_k }{\lambda_{j k}}. \label{eq:Surrogat zur linearen Regression:alpha_i_ell:final}
\end{align}
for some $ i\in \{1, \dots,n \} $ yields by inserting $ \lambda_{j k} $ and $ \alpha_{j k} $ into $ \eqref{eq:Surrogat zur linearen Regression:Jensensche Ungleichung} $
\begin{equation*}
F(\myvec{X}) \leq \dfrac{1}{2} \sum_{i=1}^{n} \dfrac{1}{(YA^\intercal \beta)_i} \sum_{j=1}^{m} \sum_{k=1}^{p} Y_{ij} A_{k j} \beta_k \left ( u_i -  \dfrac{X_{kj}}{A_{kj}} (YA^\intercal \beta)_i \right )^2 \coloneqq Q_F(\myvec{X}, \myvec{A}).
\end{equation*}
The computation of a surrogate for minimization with respect to $\myvec{\beta}$ proceeds analogously. We summarize the results in the following theorem.

\begin{theorem}[Surrogate Functionals for Linear Regression] \label{satz:Surrogat-Funktionen zur linearen Regression (SPS)}
	Let $ F(\myvec{X}) \coloneqq  \nicefrac{1}{2}\Vert \myvec{u} - \myvec{YX}^\intercal \myvec{\beta} \Vert^2 $ und $ G(\myvec{\beta}) \coloneqq  \nicefrac{1}{2}\Vert \myvec{u} - \myvec{YX}^\intercal \myvec{\beta} \Vert^2 $ denote a cost functional with repect to $\myvec{X}$ and $\myvec{\beta}$. Then
	\begin{align}
	Q_F(\myvec{X}, \myvec{A}) &\coloneqq \dfrac{1}{2} \sum_{i=1}^{n} \dfrac{1}{(YA^\intercal \beta)_i} \sum_{j=1}^{m} \sum_{k=1}^{p} Y_{ij} A_{k j} \beta_k \left ( u_i -  \dfrac{X_{kj}}{A_{kj}} (YA^\intercal \beta)_i \right )^2 \label{eq:satz:Surrogat-Funktionen zur linearen Regression (SPS):X}\\
	Q_G(\myvec{\beta}, \myvec{a}) &\coloneqq \dfrac{1}{2} \sum_{i=1}^{n} \dfrac{1}{(YX^\intercal a)_i} \sum_{k=1}^{p} (YX^\intercal)_{ik} a_k \left( u_i - \dfrac{\beta_k}{a_k} (YX^\intercal a)_i \right)^2 \label{eq:satz:Surrogat-Funktionen zur linearen Regression (SPS):K}
	\end{align}
	define separable and convex surrogate functionals.
\end{theorem}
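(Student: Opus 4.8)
The plan is to verify the two defining properties (i) and (ii) of Definition~\ref{def:Surrogat-Funktionen} directly from the generalized Jensen inequality~\eqref{eq:Surrogat zur linearen Regression:Jensensche Ungleichung}, and then to read off separability and convexity from the explicit form of $Q_F$ and $Q_G$. Throughout I would work under the standing positivity assumption of Section~\ref{sec:Surrogate_functionals} (the entries of $\myvec{Y}$, of the auxiliary matrix $\myvec{A}$, and of $\myvec{\beta}$ strictly positive), so that no denominator vanishes.

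First I would fix a row index $i$ and regard the scalar prediction $(\myvec{YX}^\intercal\myvec{\beta})_i = \sum_{j=1}^m\sum_{k=1}^p Y_{ij}X_{kj}\beta_k$ as the argument of the convex map $f(t)=(u_i-t)^2$. This fits the template~\eqref{eq:Surrogat zur linearen Regression:Ansatz} with $\myvec{V}=\myvec{X}^\intercal$, $\myvec{c}=\myvec{Y}_{i,\bullet}^\intercal$ and $\myvec{d}=\myvec{\beta}$. With $\lambda_{jk}$ and $\alpha_{jk}$ as in~\eqref{eq:Surrogat zur linearen Regression:lambda_i_ell:final} and~\eqref{eq:Surrogat zur linearen Regression:alpha_i_ell:final}, the algebraic facts to check are that the $\lambda_{jk}$ form a convex combination, i.e. $\lambda_{jk}\ge 0$ and $\sum_{j,k}\lambda_{jk}=1$ (immediate from positivity), and that $\lambda_{jk}\alpha_{jk}=Y_{ij}X_{kj}\beta_k$, whence $\sum_{j,k}\lambda_{jk}\alpha_{jk}=(\myvec{YX}^\intercal\myvec{\beta})_i$. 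Since $\alpha_{jk}=\tfrac{X_{kj}}{A_{kj}}(\myvec{YA}^\intercal\myvec{\beta})_i$, applying~\eqref{eq:Surrogat zur linearen Regression:Jensensche Ungleichung}, summing over $i$, and multiplying by $\tfrac12$ yields $F(\myvec{X})\le Q_F(\myvec{X},\myvec{A})$, which is property (i).

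Next I would establish the tangency condition (ii) by setting $\myvec{A}=\myvec{X}$: every $\alpha_{jk}$ then collapses to the single value $(\myvec{YX}^\intercal\myvec{\beta})_i$, independent of $j,k$, so Jensen holds with equality and $Q_F(\myvec{X},\myvec{X})=\tfrac12\sum_i f((\myvec{YX}^\intercal\myvec{\beta})_i)=F(\myvec{X})$. Separability and convexity then follow by inspection. For fixed $\myvec{A}$ and $\myvec{\beta}$, the summand $\left(u_i-\tfrac{X_{kj}}{A_{kj}}(\myvec{YA}^\intercal\myvec{\beta})_i\right)^2$ depends only on the single entry $X_{kj}$, so collecting all contributions carrying a fixed pair $(k,j)$ rearranges $Q_F$ into a sum $\sum_{k,j}g_{kj}(X_{kj},\myvec{A})$ in the sense of Definition~\ref{def:Separabilitaet}; and since each $g_{kj}$ is a sum of squares with the non-negative weights $\tfrac12\,Y_{ij}A_{kj}\beta_k/(\myvec{YA}^\intercal\myvec{\beta})_i$, it is convex in $X_{kj}$, hence $Q_F$ is convex.

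Finally, the statement for $Q_G$ is obtained by the same argument applied to the single-index decomposition $(\myvec{YX}^\intercal\myvec{\beta})_i=\sum_{k=1}^p (\myvec{YX}^\intercal)_{ik}\beta_k$, i.e. taking the scalar weights $\lambda_k=(\myvec{YX}^\intercal)_{ik}a_k/(\myvec{YX}^\intercal\myvec{a})_i$ and $\alpha_k=\tfrac{\beta_k}{a_k}(\myvec{YX}^\intercal\myvec{a})_i$. I expect the construction to be essentially routine, since the bound itself was already derived above the theorem; the only genuine points of care are the convex-combination bookkeeping for the double index $(j,k)$, which silently uses strict positivity of all factors to keep $\lambda_{jk}\in[0,1]$, and the observation that separability is \emph{not} automatic from the Jensen bound but requires recognizing that each summand isolates a single matrix entry.
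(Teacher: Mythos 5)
Your proposal is correct and follows essentially the same route as the paper: the generalized Jensen inequality \eqref{eq:Surrogat zur linearen Regression:Jensensche Ungleichung} with the weights $\lambda_{jk}$ and points $\alpha_{jk}$ from \eqref{eq:Surrogat zur linearen Regression:lambda_i_ell:final}--\eqref{eq:Surrogat zur linearen Regression:alpha_i_ell:final} for $Q_F$, and the single-index (vector) Jensen construction of Subsection~\ref{subsec:Jensen's inequality} for $Q_G$. You additionally spell out the tangency condition $Q_F(\myvec{X},\myvec{X})=F(\myvec{X})$, separability, and convexity, which the paper treats as immediate; these verifications are accurate as written.
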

A big advantage of linear regression models are their simplicity and manageability. However, they are by far not the optimal approach to approximate the binary output data $ \myvec{u} $ with a continuous input. Logistic regression models offer a way more natural method for binary classification tasks. Together with the supervised NMF as a feature extraction method, this overall workflow leads in \cite{leuschner18} to excellent classification results and outperformed classical approaches.\newline
However, the proposed model is based on a gradient descent approach, such that the non-negativity of the iterates can only be guaranteed by a projection step. Appropriate surrogate functionals for this workflow are still ongoing research and could lead to even better outcomes (see also \cite{ZKY05,zhang}).
\section{Surrogate Based NMF Algorithms} \label{sec:Surrogate_based_NMF_algorithms}
In the previous section we have defined surrogate functionals for various NMF cost functions. Besides the necessary surrogate properties we also expect that the minimization of these surrogates is straightforward and can be computed efficiently.\newline
In our case we demand additionally, that the minimization schemes based on solving the first order optimality conditions leads to a separable algorithm and that it only requires multiplicative updates, which automatically preserve the non-negativity of its iterates.
Let us start with denoting the most general functional with Kullback-Leibler divergence, the Frobenius case follows similarly.\newline
For constructing non-negative matrix factorizations, we incorporate $\ell_2$-, sparsity-, orthogonality-, TV-constraints and also the penalty terms coming from the supervised NMF. Of course, in most applications one only uses a subset of these constraints for stabilization and for enhancing certain properties. These algorithms can readily be obtained from the general case by putting the respective regularization parameters to zero. The corresponding update rules are classical results and can be found in numerous works \cite{demol,lecharlier13,leeseung}.
\begin{definition}[NMF Problem] \label{def:NMF-Problem_KLD}
	For $\myvec{Y}\in \mathbb{R}_{\geq 0}^{n\times m}, \ \myvec{K}, \myvec{V} \in \mathbb{R}_{\geq 0}^{n\times p}, \ \myvec{X}, \myvec{W} \in \mathbb{R}_{\geq 0}^{p\times m} $, $ \myvec{\beta}\in \mathbb{R}_{\geq 0}^{p} $ 
	and a set of regularization parameters $ \lambda, \mu, \nu, \omega, \tau, \sigma_{\myvec{K}, 1}, \sigma_{\myvec{K}, 2}, \sigma_{\myvec{X}, 1}, \linebreak \sigma_{\myvec{X}, 2}, \rho \geq 0, $ 
	we define the NMF minimization problem by
	\begin{align*}
	\min_{\myvec{K}, \myvec{X}, \myvec{V}, \myvec{W}, \myvec{\beta} \geq 0}
	& \bigg\{\KL(\myvec{Y}, \myvec{KX}) + \lambda {\Vert \myvec{X} \Vert}_1 + \frac{\mu}{2} {\Vert \myvec{K} \Vert}_F^2 + \frac{\nu}{2} {\Vert \myvec{X} \Vert}_F^2 + \omega {\Vert \myvec{K} \Vert}_1 + \dfrac{\tau}{2} \TV(\myvec{K}) \\
	&+ \dfrac{\sigma_{\myvec{K}, 1}}{2} \Vert \myvec{I} - \myvec{V}^\intercal \myvec{K} \Vert_F^2 + \dfrac{\sigma_{\myvec{K}, 2}}{2} \Vert \myvec{V}-\myvec{K}\Vert_F^2 + \dfrac{\sigma_{\myvec{X}, 1}}{2} \Vert \myvec{I} - \myvec{X} \myvec{W}^\intercal \Vert_F^2 \\
	&+ \dfrac{\sigma_{\myvec{X}, 2}}{2} \Vert \myvec{W}-\myvec{X}\Vert_F^2 + \dfrac{\rho}{2} \Vert \myvec{u} - \myvec{YX}^\intercal \myvec{\beta} \Vert^2 \bigg\}.
	\end{align*}
\end{definition}%
The choice of the various regularization parameters occuring in Definition \ref{def:NMF-Problem_KLD} is often based on heuristic approaches. We will not focus on that issue in this work and refer instead to \cite{IJT11} and the references therein, where two methods are investigated for the general case of multi-parameter Tikhonov regularization.\newline
The algorithms studied in this section will start with positive initializations for $\myvec{K},\myvec{X},\myvec{V},\myvec{W}$ and $ \myvec{\beta}. $ These matrices are updated alternatingly, i.e. all matrices except one matrix are kept fixed and only the selected matrix is updated by solving the respective first order optimality condition.\newline
We will focus in this section on the derivation of the update rules of $\myvec{K}$ (see also Appendix \ref{app:subsec:KLD}). The iteration schemes for the other matrices follow analogously.\newline
For that, we only have to consider those terms in the general functional which depend on $\myvec{K}$, i.e. we aim at determining a minimizer for
\begin{equation*}
\text{\begin{small} $F(\myvec{K}) \coloneqq \KL(\myvec{Y}, \myvec{KX}) + \frac{\mu}{2} {\Vert \myvec{K} \Vert}_F^2 + \omega {\Vert \myvec{K} \Vert}_1 + \dfrac{\tau}{2} \TV(\myvec{K}) + \dfrac{\sigma_{\myvec{K}, 1}}{2} \Vert \myvec{I}\! -\! \myvec{V}^\intercal \myvec{K} \Vert_F^2 + \dfrac{\sigma_{\myvec{K}, 2}}{2} \Vert \myvec{V}\!-\!\myvec{K}\Vert_F^2.$ \end{small}}
\end{equation*}
Instead of minimizing this functional we exchange it with the previously constructed surrogate functionals, which leads to

\begin{equation*}
Q_F(\myvec{K}, \myvec{A}) \coloneqq Q_{\text{KL}}(\myvec{K}, \myvec{A}) + \frac{\mu}{2} {\Vert \myvec{K} \Vert}_F^2 + \omega {\Vert \myvec{K} \Vert}_1 + \dfrac{\tau}{2} Q_{\text{TV}}(\myvec{K}, \myvec{A}) + Q_{\text{Orth}}(\myvec{K}, \myvec{A})
\end{equation*}
with the surrogates $ Q_{\text{KL}} $ for the Kullback-Leibler divergence in Theorem \ref{satz:Surrogat-Funktionen für die Kullback-Leibler-Divergenz}, $ Q_{\text{TV}} $ for the TV penalty term in Theorem \ref{satz:Surrogat-Funktion zum TV-Strafterm} and $ Q_{\text{Orth}} $ for the orthogonality penalty terms in Theorem \ref{satz:Surrogate functionals for orthogonal NMF}.\newline
The computation of the partial derivatives leads to a system of equations
\begin{equation} \label{eq:NMF-Problem mit KLD:PartielleAbleitungQF}
\dfrac{\partial Q_F}{\partial K_{\xi \zeta}}(\myvec{K}, \myvec{A}) =0
\end{equation}
for $ \xi\in \{1, \dots, n\} $ and $ \zeta\in \{1, \dots, p\} $. This leads to a system of quadratic equations
\begin{align*}
&K_{\xi \zeta}^2 \left (\mu + \tau \psi_\zeta P_{\xi \zeta}(\myvec{A}) + \dfrac{\sigma_{\myvec{K}, 1}}{A_{\xi \zeta}} (VV^\intercal A)_{\xi \zeta} + \sigma_{\myvec{K}, 2}\right ) \\
+&K_{\xi \zeta} \left ( \sum_{j=1}^{m} X_{\zeta j} + \omega - \tau \psi_\zeta P(\myvec{A})_{\xi \zeta} Z(\myvec{A})_{\xi \zeta} - (\sigma_{\myvec{K}, 1} + \sigma_{\myvec{K}, 1}) V_{\xi \zeta}  \right ) \\
=&A_{\xi \zeta} \sum_{j=1}^{m} \dfrac{Y_{\xi j}}{(AX)_{\xi j}} X_{\zeta j}.
\end{align*}
Solving for $ K_{\xi \zeta} $ and denoting the Hadamard product by $\circ$ as well as the matrix division for each entry separately by a fraction line, yields the following update rule for $\myvec{K}.$ (Note that the notation for $P(\myvec{A})$ and $Z(\myvec{A})$ was introduced in the section on TV-regularization above.)
\begin{align*}
\text{\begin{small} $\myvec{K}^{[d+1]}$ \end{small}} &\text{\begin{small} $=\left [ \underbrace{\left ( \dfrac{\myvec{K}^{[d]}}{\mu \myvec{1}_{n\times p} + \tau \myvec{\Psi} \circ P(\myvec{K}^{[d]}) + \sigma_{\myvec{K}, 1} \dfrac{\myvec{VV}^\intercal \myvec{K}^{[d]}}{\myvec{K}^{[d]}} + \sigma_{\myvec{K}, 2} \myvec{1}_{n\times p} } \right ) \circ \left ( \dfrac{\myvec{Y}}{\myvec{K}^{[d]}\myvec{X}}\myvec{X}^\intercal \right )}_{\eqqcolon \myvec{\Theta}^{[d]}} \right .$ \end{small}} \\
&\text{\begin{small} $\left . +\dfrac{1}{4} \left ( \dfrac{\myvec{1}_{n\times m}\myvec{X}^\intercal + \omega\myvec{1}_{n\times p} - \tau \myvec{\Psi} \circ P(\myvec{K}^{[d]}) \circ Z(\myvec{K}^{[d]}) - (\sigma_{\myvec{K}, 1} + \sigma_{\myvec{K}, 2}) \myvec{V} }{\mu \myvec{1}_{n\times p} + \tau \myvec{\Psi} \circ P(\myvec{K}^{[d]}) + \sigma_{\myvec{K}, 1} \dfrac{\myvec{VV}^\intercal \myvec{K}^{[d]}}{\myvec{K}^{[d]}} + \sigma_{\myvec{K}, 2} \myvec{1}_{n\times p}} \right )^2 \ \right ]^{\nicefrac{1}{2}}$ \end{small}} \\
&\text{\begin{small} $-\dfrac{1}{2} \underbrace{\left ( \dfrac{\myvec{1}_{n\times m}\myvec{X}^\intercal + \omega\myvec{1}_{n\times p} - \tau \myvec{\Psi} \circ P(\myvec{K}^{[d]}) \circ Z(\myvec{K}^{[d]}) - (\sigma_{\myvec{K}, 1} + \sigma_{\myvec{K}, 2}) \myvec{V} }{\mu \myvec{1}_{n\times p} + \tau \myvec{\Psi} \circ P(\myvec{K}^{[d]}) + \sigma_{\myvec{K}, 1} \dfrac{\myvec{VV}^\intercal \myvec{K}^{[d]}}{\myvec{K}^{[d]}} + \sigma_{\myvec{K}, 2} \myvec{1}_{n\times p}} \right )}_{\eqqcolon \myvec{\Phi}^{[d]}}$ \end{small}}
\end{align*}
In the above update rule, $ \myvec{1}_{n\times p} $ denotes an $ n\times p $ matrix with ones in every entry and $ \myvec{\Psi}\in \mathbb{R}^{n\times p}_{\geq 0} $ is defined as
\begin{equation*}
	\myvec{\Psi} \coloneqq  
	\begin{pmatrix}
	\psi_1 & \psi_2 & \cdots  & \psi_p \\
	\psi_1 & \psi_2 & \cdots  & \psi_p \\
	\vdots & \vdots &  & \vdots \\
	\psi_1 & \psi_2  & \cdots  & \psi_p  \\
	\end{pmatrix}.
\end{equation*}
Details on the derivation can be found in Appendix \ref{app:subsec:KLD}.\newline
The partial derivatives with repect to $\myvec{X}$ are computed similarly. Defining

\begin{align*}
\text{\begin{small} $\myvec{\Lambda}^{[d]}$ \end{small}}&\text{\begin{small} $\coloneqq \left ( \dfrac{\myvec{X}^{[d]}}{\nu \myvec{1}_{p\times m} + \sigma_{\myvec{X}, 1} \dfrac{\myvec{X}^{[d]}\myvec{W}^\intercal \myvec{W}}{\myvec{X}^{[d]}} + \sigma_{\myvec{X}, 2} \myvec{1}_{p\times m} + \rho \dfrac{\myvec{\beta\beta}^\intercal \myvec{X}^{[d]}\myvec{Y}^\intercal \myvec{Y}}{\myvec{X}^{[d]}} } \right ) \circ \left ( \myvec{K}^\intercal \dfrac{\myvec{Y}}{\myvec{K} \myvec{X}^{[d]}} \right ) $ \end{small}}\\
\myvec{\Gamma}^{[d]} &\coloneqq \dfrac{\myvec{K}^\intercal \myvec{1}_{n\times m} + \lambda \myvec{1}_{p\times m} - (\sigma_{\myvec{X}, 1} + \sigma_{\myvec{X}, 2})\myvec{W} - \rho \myvec{\beta u}^\intercal \myvec{Y} }{\nu \myvec{1}_{p\times m} + \sigma_{\myvec{X}, 1} \dfrac{\myvec{X}^{[d]}\myvec{W}^\intercal \myvec{W}}{\myvec{X}^{[d]}} + \sigma_{\myvec{X}, 2} \myvec{1}_{p\times m} + \rho \dfrac{\myvec{\beta\beta}^\intercal \myvec{X}^{[d]}\myvec{Y}^\intercal \myvec{Y}}{\myvec{X}^{[d]}} }
\end{align*}
leads to the update
\begin{equation*}
\myvec{X}^{[d+1]} = \sqrt{\myvec{\Lambda}^{[d]} + \dfrac{1}{4} \myvec{\Gamma}^{[d]} \circ \myvec{\Gamma}^{[d]}} - \dfrac{1}{2} \myvec{\Gamma}^{[d]}.
\end{equation*}
The updates for $\myvec{V}$, $\myvec{W}$ are straight forward and we obtain the following theorem.

\begin{theorem}[Alternating Algorithm for NMF Problem in Definition \ref{def:NMF-Problem_KLD}] \label{satz:Alternierender Algorithmus für das NMF-Problem}
	The initializations $\myvec{K}^{[0]}, \myvec{V}^{[0]} \in \mathbb{R}_{> 0}^{n\times p},\ \myvec{X}^{[0]}, \myvec{W}^{[0]} \in \mathbb{R}_{> 0}^{p\times m}, \myvec{\beta}^{[0]}\in \mathbb{R}_{> 0}^{p} $ and the iterative updates
	\begingroup
	\allowdisplaybreaks
	\begin{align}
	\myvec{V}^{[d+1]} &= \dfrac{(\sigma_{\myvec{K}, 1} + \sigma_{\myvec{K}, 2}) \myvec{K}^{[d]}}{\sigma_{\myvec{K}, 1} \dfrac{\myvec{K}^{[d]} {\myvec{K}^{[d]}}^\intercal \myvec{V}^{[d]} }{\myvec{V}^{[d]}} + \sigma_{\myvec{K}, 2} \myvec{1}_{n\times p}} \label{eq:satz:Alternierender Algorithmus für das NMF-Problem:UpdateV}\\
	\myvec{K}^{[d+1]} &= \sqrt{\myvec{\Theta}^{[d]} + \dfrac{1}{4} \myvec{\Phi}^{[d]} \circ \myvec{\Phi}^{[d]}} - \dfrac{1}{2} \myvec{\Phi}^{[d]} \label{eq:satz:Alternierender Algorithmus für das NMF-Problem:UpdateK}\\
	\myvec{W}^{[d+1]} &=\dfrac{(\sigma_{\myvec{X}, 1} + \sigma_{\myvec{X}, 2}) \myvec{X}^{[d]}}{\sigma_{\myvec{X}, 1} \dfrac{\myvec{W}^{[d]} {\myvec{X}^{[d]}}^\intercal \myvec{X}^{[d]} }{\myvec{W}^{[d]}} + \sigma_{\myvec{X}, 2} \myvec{1}_{p\times m}} \label{eq:satz:Alternierender Algorithmus für das NMF-Problem:UpdateW} \\
	\myvec{X}^{[d+1]} &= \sqrt{\myvec{\Lambda}^{[d]} + \dfrac{1}{4} \myvec{\Gamma}^{[d]} \circ \myvec{\Gamma}^{[d]}} - \dfrac{1}{2} \myvec{\Gamma}^{[d]} \label{eq:satz:Alternierender Algorithmus für das NMF-Problem:UpdateX}\\
	\myvec{\beta}^{[d+1]} &= \dfrac{\myvec{X}^{[d+1]}\myvec{Y}^\intercal \myvec{u}}{\myvec{X}^{[d+1]}\myvec{Y}^\intercal \myvec{Y}{\myvec{X}^{[d+1]}}^\intercal \myvec{\beta}^{[d]}} \circ \myvec{\beta}^{[d]} \label{eq:satz:Alternierender Algorithmus für das NMF-Problem:UpdateBeta}
	\end{align}
	\endgroup
	lead to a monotonically decrease of the cost functional
	\begin{align*}
	F(\myvec{K}, \myvec{X}, \myvec{V}, \myvec{W}, \myvec{\beta}) &\coloneqq \KL(\myvec{Y}, \myvec{KX}) + \lambda {\Vert \myvec{X} \Vert}_1 + \frac{\mu}{2} {\Vert \myvec{K} \Vert}_F^2 + \frac{\nu}{2} {\Vert \myvec{X} \Vert}_F^2 + \omega {\Vert \myvec{K} \Vert}_1 \\
	&+ \dfrac{\tau}{2} \TV(\myvec{K}) + \dfrac{\sigma_{K, 1}}{2} \Vert \myvec{I} - \myvec{V}^\intercal \myvec{K} \Vert_F^2 + \dfrac{\sigma_{K, 2}}{2} \Vert \myvec{V}-\myvec{K}\Vert_F^2 \\
	&+ \dfrac{\sigma_{X, 1}}{2} \Vert \myvec{I}- \myvec{X} \myvec{W}^\intercal \Vert_F^2 + \dfrac{\sigma_{X, 2}}{2} \Vert \myvec{W}-\myvec{X}\Vert_F^2 + \dfrac{\rho}{2} \Vert \myvec{u} - \myvec{YX}^\intercal \myvec{\beta} \Vert^2.
	\end{align*}
\end{theorem}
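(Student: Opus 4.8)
The plan is to reduce the claim to a family of single-block monotonicity statements and then to verify, block by block, that each displayed update is exactly the minimizer of an associated surrogate functional. Since the scheme updates only one of $\myvec{K},\myvec{X},\myvec{V},\myvec{W},\myvec{\beta}$ at a time while freezing the others, and since $F$ restricted to one block differs from the corresponding single-variable cost functional only by an additive constant, it suffices to show that $F$ does not increase under each individual update. Chaining the five resulting inequalities across one sweep then yields $F(\myvec{K}^{[d+1]},\myvec{X}^{[d+1]},\myvec{V}^{[d+1]},\myvec{W}^{[d+1]},\myvec{\beta}^{[d+1]}) \le F(\myvec{K}^{[d]},\myvec{X}^{[d]},\myvec{V}^{[d]},\myvec{W}^{[d]},\myvec{\beta}^{[d]})$, which is the assertion.

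I would treat the $\myvec{K}$-update first. Collecting the $\myvec{K}$-dependent terms of $F$, namely $\KL(\myvec{Y},\myvec{KX}) + \frac{\mu}{2}\Vert\myvec{K}\Vert_F^2 + \omega\Vert\myvec{K}\Vert_1 + \frac{\tau}{2}\TV(\myvec{K}) + \frac{\sigma_{\myvec{K},1}}{2}\Vert\myvec{I}-\myvec{V}^\intercal\myvec{K}\Vert_F^2 + \frac{\sigma_{\myvec{K},2}}{2}\Vert\myvec{V}-\myvec{K}\Vert_F^2$, I build a surrogate $Q_F(\myvec{K},\myvec{A})$ by Remark \ref{rem:Addition von Surrogat-Funktionen}: I add the Kullback--Leibler surrogate $Q_G$ of Theorem \ref{satz:Surrogat-Funktionen für die Kullback-Leibler-Divergenz}, the TV surrogate $Q_{\TV}$ of Theorem \ref{satz:Surrogat-Funktion zum TV-Strafterm}, the orthogonality surrogate $Q_H$ of Theorem \ref{satz:Surrogate functionals for orthogonal NMF}, and---since they are already convex and separable---keep the $\ell_1$- and $\ell_2$-penalties as their own surrogates. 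The sum $Q_F$ is again convex and separable, so by Lemma \ref{lem:Monotoner Abfall durch Surrogat-Funktion} it remains only to check that the stated $\myvec{K}^{[d+1]}$ equals $\arg\min_{\myvec{K}\ge 0} Q_F(\myvec{K},\myvec{K}^{[d]})$.

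Because $Q_F$ is separable and convex, this minimization decouples into independent scalar problems, one for each entry $K_{\xi\zeta}$, and each of these is strictly convex on $(0,\infty)$ since the leading coefficient below is strictly positive. I would set $\partial Q_F/\partial K_{\xi\zeta}=0$, using the closed forms of $P(\myvec{A})$ and $Z(\myvec{A})$ from the TV section to differentiate $Q_{\TV}$, and multiply through by $K_{\xi\zeta}$ to arrive at the displayed quadratic in $K_{\xi\zeta}$. After the indicated normalization its coefficients are collected into $\myvec{\Theta}^{[d]}$ and $\myvec{\Phi}^{[d]}$, and the relevant root is $\sqrt{\myvec{\Theta}^{[d]} + \frac14 \myvec{\Phi}^{[d]}\circ\myvec{\Phi}^{[d]}} - \frac12\myvec{\Phi}^{[d]}$. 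The decisive observation is that the leading coefficient is strictly positive while the constant term $-A_{\xi\zeta}\sum_j \frac{Y_{\xi j}}{(AX)_{\xi j}} X_{\zeta j}$ is nonpositive; hence the positive root of the quadratic is automatically nonnegative, so the unconstrained stationary point already lies in the feasible orthant and is therefore the constrained minimizer. This simultaneously shows that the update preserves nonnegativity, indeed positivity given positive initializations.

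The remaining blocks are handled identically. For $\myvec{X}$ one sums the Kullback--Leibler, orthogonality and linear-regression surrogates (Theorems \ref{satz:Surrogat-Funktionen für die Kullback-Leibler-Divergenz}, \ref{satz:Surrogate functionals for orthogonal NMF} and \ref{satz:Surrogat-Funktionen zur linearen Regression (SPS)}) together with the $\ell_1$- and $\ell_2$-terms, and the same quadratic analysis produces the update governed by $\myvec{\Lambda}^{[d]}$ and $\myvec{\Gamma}^{[d]}$. For $\myvec{V}$ and $\myvec{W}$ only the orthogonality surrogates enter and for $\myvec{\beta}$ only the regression surrogate of Theorem \ref{satz:Surrogat-Funktionen zur linearen Regression (SPS)}; here the absence of $\ell_1$- and extra quadratic terms collapses the quadratic to a purely multiplicative rule, matching the displayed formulas. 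I expect the principal obstacle to be the bookkeeping of the partial derivatives---in particular differentiating $Q_{\TV}$ through $P$ and $Z$ and the orthogonality surrogates $Q_H,Q_J$ correctly---and confirming in each case that the closed-form root is the genuine global minimizer over the nonnegative orthant, the well-definedness of the $\arg\min$ following from the strict convexity of the scalar subproblems, rather than merely a stationary point.
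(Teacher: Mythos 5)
Your proposal is correct and follows essentially the same route as the paper: the paper likewise assembles the block surrogates by summing the component surrogates (Kullback--Leibler, TV, orthogonality, regression, with the $\ell_1$- and $\ell_2$-penalties kept as their own surrogates, which is admissible in the KL case), solves the entrywise first-order optimality conditions to obtain the quadratic equations whose positive roots are the displayed updates, and invokes Lemma \ref{lem:Monotoner Abfall durch Surrogat-Funktion} blockwise, chaining the inequalities over one sweep (see Appendix \ref{app:subsec:KLD}). Your explicit verification that the positive root of each scalar quadratic lies in the nonnegative orthant, so that the unconstrained stationary point is indeed the constrained minimizer and positivity of the iterates is preserved, is a point the paper's derivation passes over silently but does not change the argument.
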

It is easy to see that the classical, regularized NMF algorithms described in \cite{demol,lecharlier13,leeseung} can be regained by putting the corresponding regularization parameters to zero. In the case of $ \ell_1 $- and $ \ell_2 $-regularized NMF, this leads to the cost function
\begin{equation*}
F(\myvec{K}, \myvec{X}) = \KL(\myvec{Y}, \myvec{KX}) + \lambda {\Vert \myvec{X} \Vert}_1 + \frac{\mu}{2} {\Vert \myvec{K} \Vert}_F^2 + \frac{\nu}{2} {\Vert \myvec{X} \Vert}_F^2 + \omega {\Vert \myvec{K} \Vert}_1.
\end{equation*}
The classical update rule for $ \myvec{X} $ is obtained by setting
\begin{align*}
\tilde{\myvec{\Lambda}}^{[d]} &\coloneqq \myvec{X}^{[d]} \circ \left ( {\myvec{K}^{[d+1]}}^\intercal \dfrac{\myvec{Y}}{\myvec{K}^{[d+1]} \myvec{X}^{[d]}} \right ), \\ 
\tilde{\myvec{\Gamma}}^{[d]} &\coloneqq {\myvec{K}^{[d+1]}}^\intercal \myvec{1}_{n\times m} + \lambda \myvec{1}_{p\times m},
\end{align*}
which - in connection with the update rule for $\myvec{X}$ of the previous theorem - leads to
\begin{align}
\myvec{X}^{[d+1]} &= \sqrt{\dfrac{1}{\nu}\tilde{\myvec{\Lambda}}^{[d]} + \dfrac{1}{4\nu^2} \tilde{\myvec{\Gamma}}^{[d]} \circ \tilde{\myvec{\Gamma}}^{[d]}} - \dfrac{1}{2\nu} \tilde{\myvec{\Gamma}}^{[d]} \nonumber \\
&=\dfrac{2\tilde{\myvec{\Lambda}}^{[d]}}{\tilde{\myvec{\Gamma}}^{[d]} + \sqrt{4\nu \tilde{\myvec{\Lambda}}^{[d]} + \tilde{\myvec{\Gamma}}^{[d]} \circ \tilde{\myvec{\Gamma}}^{[d]}}}, \label{eq:satz:Alternierender Algorithmus für das NMF-Problem:UpdateX:Umgeformt}
\end{align}
which is the update rule described in \cite{demol}.\newline
By the same approach and with the surrogate functionals derived in Section \ref{sec:surrogates_for_NMF_functionals}, we obtain the update rules for the Frobenius discrepancy term, i.e. we consider the functional
\begin{align*}
F(\myvec{K}, \myvec{X}, \myvec{V}, \myvec{W}, \myvec{\beta}) &\coloneqq \dfrac{1}{2} \Vert \myvec{Y}-\myvec{KX}\Vert_F^2 + \lambda {\Vert \myvec{X} \Vert}_1 + \frac{\mu}{2} {\Vert \myvec{K} \Vert}_F^2 + \frac{\nu}{2} {\Vert \myvec{X} \Vert}_F^2 + \omega {\Vert \myvec{K} \Vert}_1 \\
&+ \dfrac{\tau}{2} \TV(\myvec{K}) + \dfrac{\sigma_{K, 1}}{2} \Vert \myvec{I} - \myvec{V}^\intercal \myvec{K} \Vert_F^2 + \dfrac{\sigma_{K, 2}}{2} \Vert \myvec{V}-\myvec{K}\Vert_F^2 \\
&+ \dfrac{\sigma_{X, 1}}{2} \Vert \myvec{I}- \myvec{X} \myvec{W}^\intercal \Vert_F^2 + \dfrac{\sigma_{X, 2}}{2} \Vert \myvec{W}-\myvec{X}\Vert_F^2 + \dfrac{\rho}{2} \Vert \myvec{u} - \myvec{YX}^\intercal \myvec{\beta} \Vert^2
\end{align*}
A monotone decrease of this functional is obtained by the following iteration in combination with the update rules for $\myvec{V},\myvec{W},\myvec{\beta}$ as in Theorem \ref{satz:Alternierender Algorithmus für das NMF-Problem} (see also Appendix \ref{app:subsec:Frobenius-norm} for more details on the derivation of these algorithms.)
\begingroup
\allowdisplaybreaks
\begin{align*}
\text{\begin{scriptsize} $ \myvec{K}^{[d+1]} $ \end{scriptsize}} &\text{\begin{scriptsize} $\!\!=\! \dfrac{{\myvec{YX}^{[d]}}^\intercal + \tau \myvec{\Psi} \circ P(\myvec{K}^{[d]})\circ Z(\myvec{K}^{[d]}) + (\sigma_{\myvec{K}, 1} + \sigma_{\myvec{K}, 2})\myvec{V}^{[d+1]}}{\tau \myvec{\Psi} \circ P(\myvec{K}^{[d]}) + \sigma_{\myvec{K}, 2}\myvec{1}_{n\times p} + \dfrac{\myvec{K}^{[d]}\myvec{X}^{[d]}{\myvec{X}^{[d]}}^\intercal \!+\! \mu \myvec{K}^{[d]} + \omega \myvec{1}_{n\times p} + \sigma_{\myvec{K}, 1}\myvec{V}^{[d+1]}{\myvec{V}^{[d+1]}}^\intercal\myvec{K}^{[d]} }{\myvec{K}^{[d]}} } $ \end{scriptsize}} \\
\text{\begin{scriptsize} $ \myvec{X}^{[d+1]} $ \end{scriptsize}} &\text{\begin{scriptsize} $\!\!=\!\dfrac{{\myvec{K}^{[d+1]}}^\intercal \myvec{Y} + (\sigma_{\myvec{X}, 1} + \sigma_{\myvec{X}, 2})\myvec{W}^{[d+1]} + \rho \myvec{\beta}^{[d]}\myvec{u}^\intercal \myvec{Y} }{\sigma_{\myvec{X}, 2} \myvec{1}_{p\times m} \!+\! \dfrac{{\myvec{K}^{[d+1]}}^\intercal \myvec{K}^{[d+1]}\myvec{X}^{[d]} \!+\! \nu\myvec{X}^{[d]} \!+\! \lambda \myvec{1}_{p\times m} \!+\! \rho \myvec{\beta}^{[d]} {\myvec{\beta}^{[d]}}^\intercal \myvec{X}^{[d]} \myvec{Y}^\intercal \myvec{Y} \!+\! \sigma_{\myvec{X}, 1} \myvec{X}^{[d]} {\myvec{W}^{[d+1]}}^\intercal \myvec{W}^{[d+1]}   }{\myvec{X}^{[d]}} }.    $ \end{scriptsize}}
\end{align*}
\endgroup
\section{MALDI Imaging} \label{sec:MALDI_Imaging}
As a test case we analyse MALDI imaging data (matrix assisted laser desorption/ionization) of a rat brain. MALDI imaging is a comparatively novel modality, which unravels the molecular landscape of tissue slices and allows a subsequent proteomic or metabolic analysis \cite{AG:01,CTG:97,Ketal:14}. Clustering this data reveals for example different metabolic regions of the tissue, which can be used for supporting pathological diagnosis of tumors.\newline
The data used in this paper was obtained by a MALDI imaging experiment, see Figure \ref{fig:MALDI-Imaging} for a schematic experimental setup.\newline
In our numerical experiments, we used a classical rat brain dataset which has been used in several data processing papers before \cite{AB:13,Aetal:10,Ketal:14}. It constitutes a standard test set for hyperspectral data analysis.\newline
The tissue slice was scanned at 20185 positions. At each position a full mass spectrum with 2974 m/z (mass over charge) values was collected. I.e. instead of three color channels, as it is usual in image processing, this data has 2974 channels, each  channel containing the spatial distribution of molecules having the same m/z value.\newline\begin{figure}
	\centering
	\includegraphics[width=\textwidth]{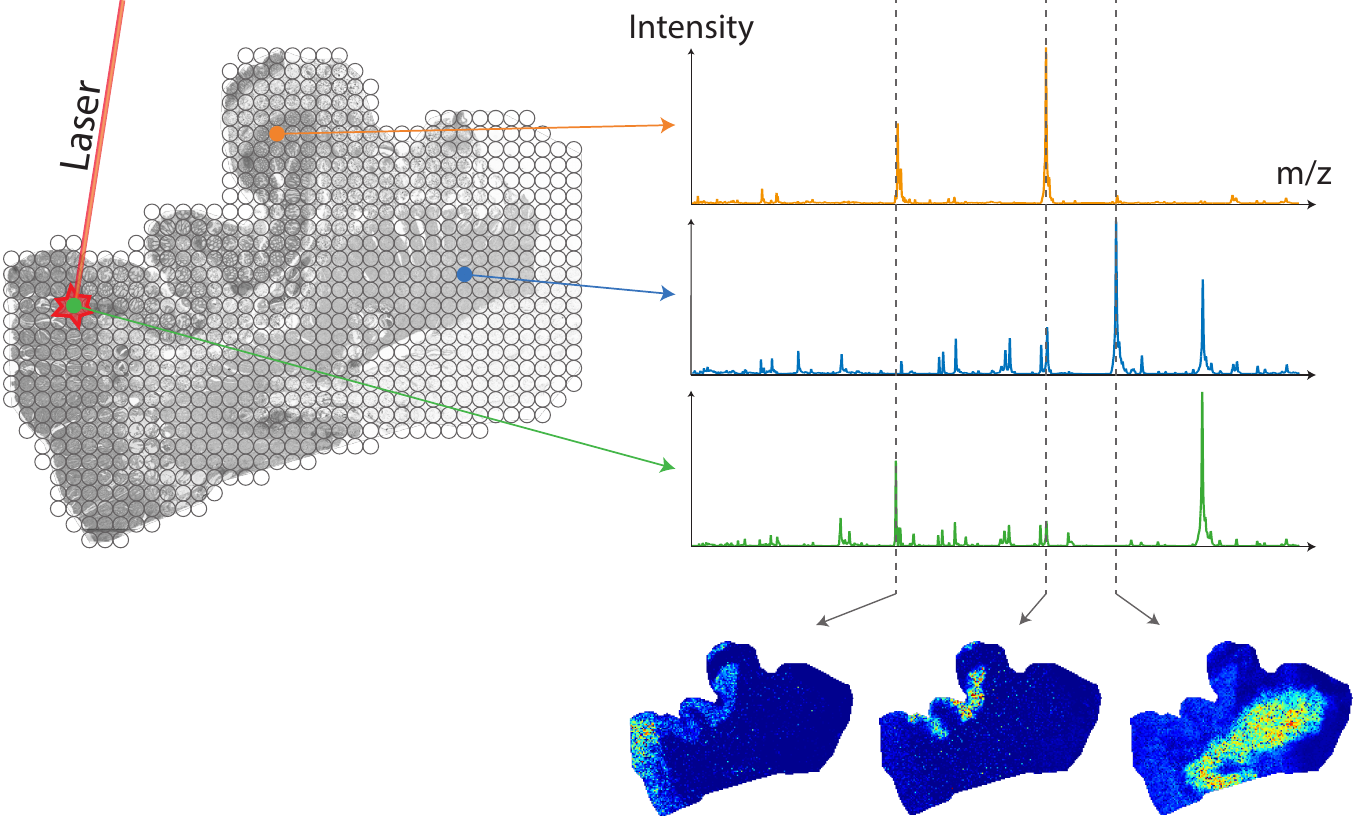}
	\caption{Structure of MALDI imaging data: A mass spectrum is obtained at different positions of a tissue slice. The full data set is a data cube, which can be visualized with different perspectives. Fixing a position of the tissue slice gives the mass spectrum at this position. Fixing a particular molecular weight reveals the distribution of molecules across the tissue slice with this weight.}
	\label{fig:MALDI-Imaging}
\end{figure}The following numerical examples were obtained with the multiplicative algorithms described in the previous section. We just illustrate the effect of the different penalty terms for some selected functionals. One can either display the columns of $\myvec{K}$ as the pseudo channels of the NMF decomposition or the rows of $\myvec{X}$ as pseudo spectra characterizing the different metabolic processes present in the tissue slice, see the Figures \ref{fig:numeric1}-\ref{fig:numeric4}.\newline
Both ways of visualization do have their respective value. Looking at the pseudo spectra in connection with orthogonality constraints leads to a clustering of the spectra and to a subdivision of the tissue slice in regions with potentially different metabolic activities, see \cite{Ketal:14}. Considering instead the different pseudo spectra, which were constructed in order to have a bases which allows a low dimensional approximaton of the data set, is the basis for subsequent proteomic analysis. E.g. one may target pseudospectra where the related pseudo channels are concentrated in regions, which were annotated by pathological experts. Mass values which are dominant in those spectra may stem from proteins/peptides relating to biomarkers as indicators for certain diseases. Hence, classification schemes based on NMF decompositions have been widely investigated \cite{leuschner18,Phon-Amnuaisuk13,TCP11}.

\begin{figure}
	\centering
	\includegraphics[width=0.8\textwidth]{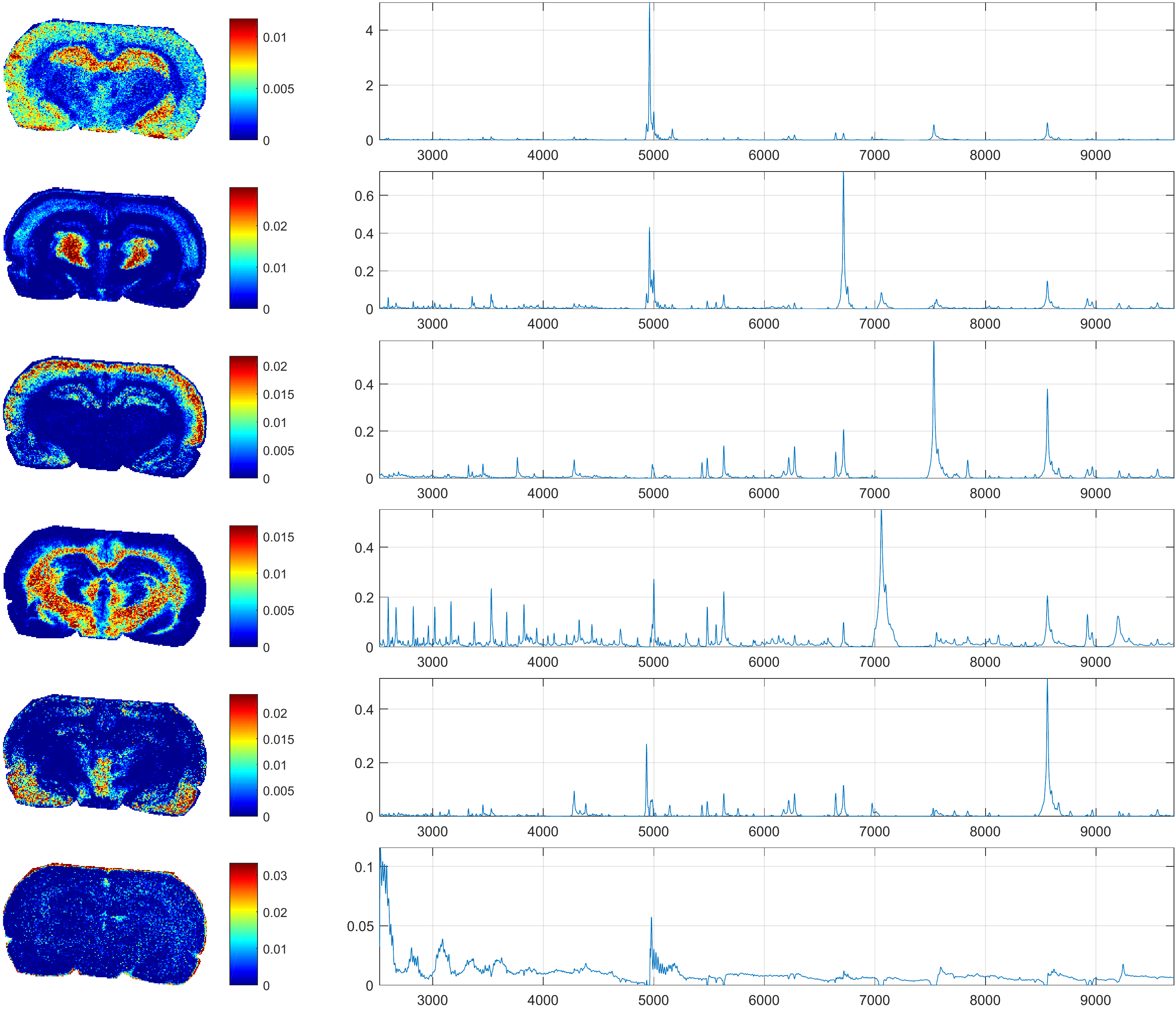}
	\caption{NMF of the rat brain dataset for $ p=6. $ Orthogonality constraints on the channels with $\sigma_{\myvec{K}, 1}=1$ and $\sigma_{\myvec{K}, 2}=1.$}
	\label{fig:numeric1}
\end{figure}

\begin{figure}
	\centering
	\includegraphics[width=0.8\textwidth]{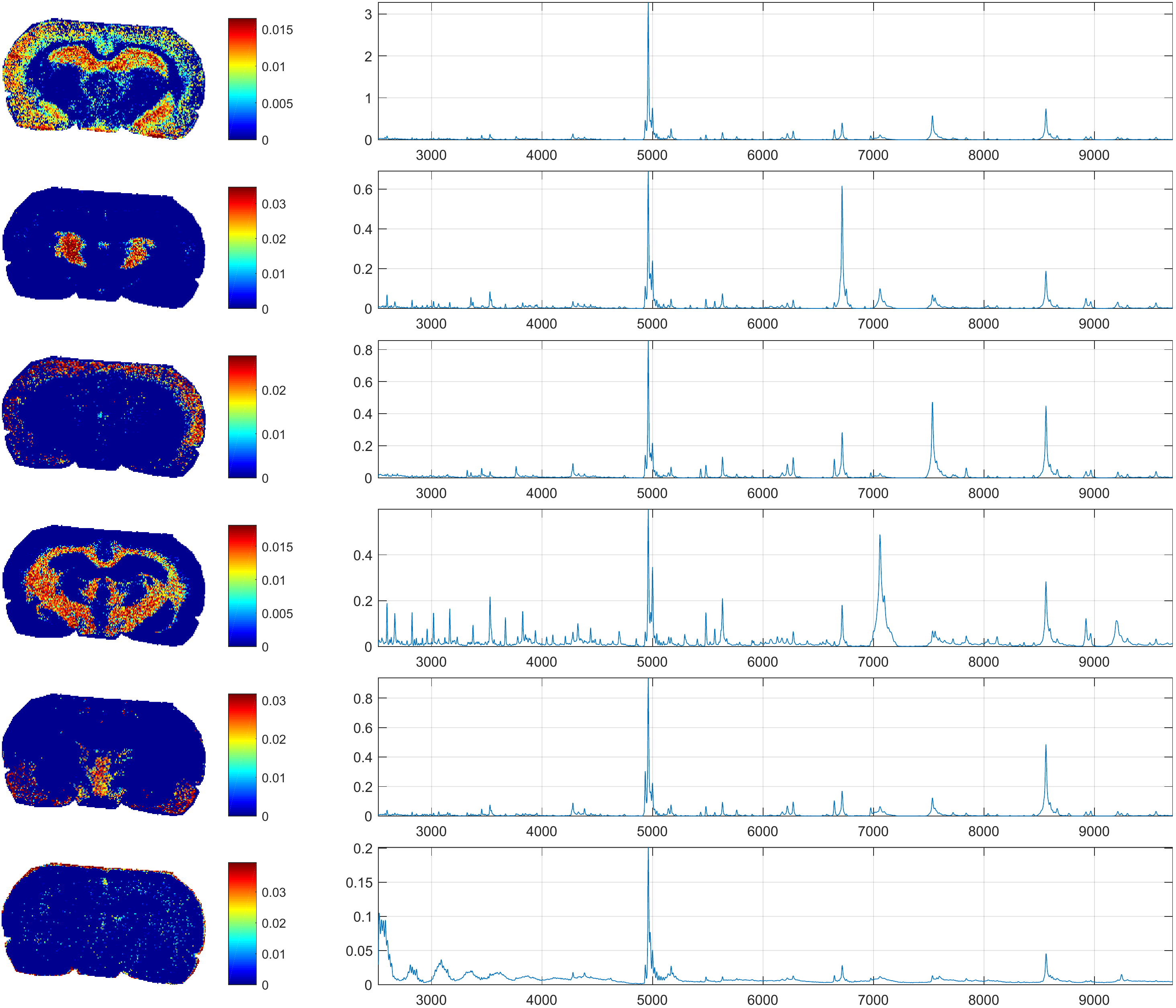}
	\caption{NMF of the rat brain dataset for $ p=6. $ Orthogonality constraints on the channels with $\sigma_{\myvec{K}, 1}=200$ and $\sigma_{\myvec{K}, 2}=200.$}
	\label{fig:numeric2}
\end{figure}

\begin{figure}
	\centering
	\includegraphics[width=0.8\textwidth]{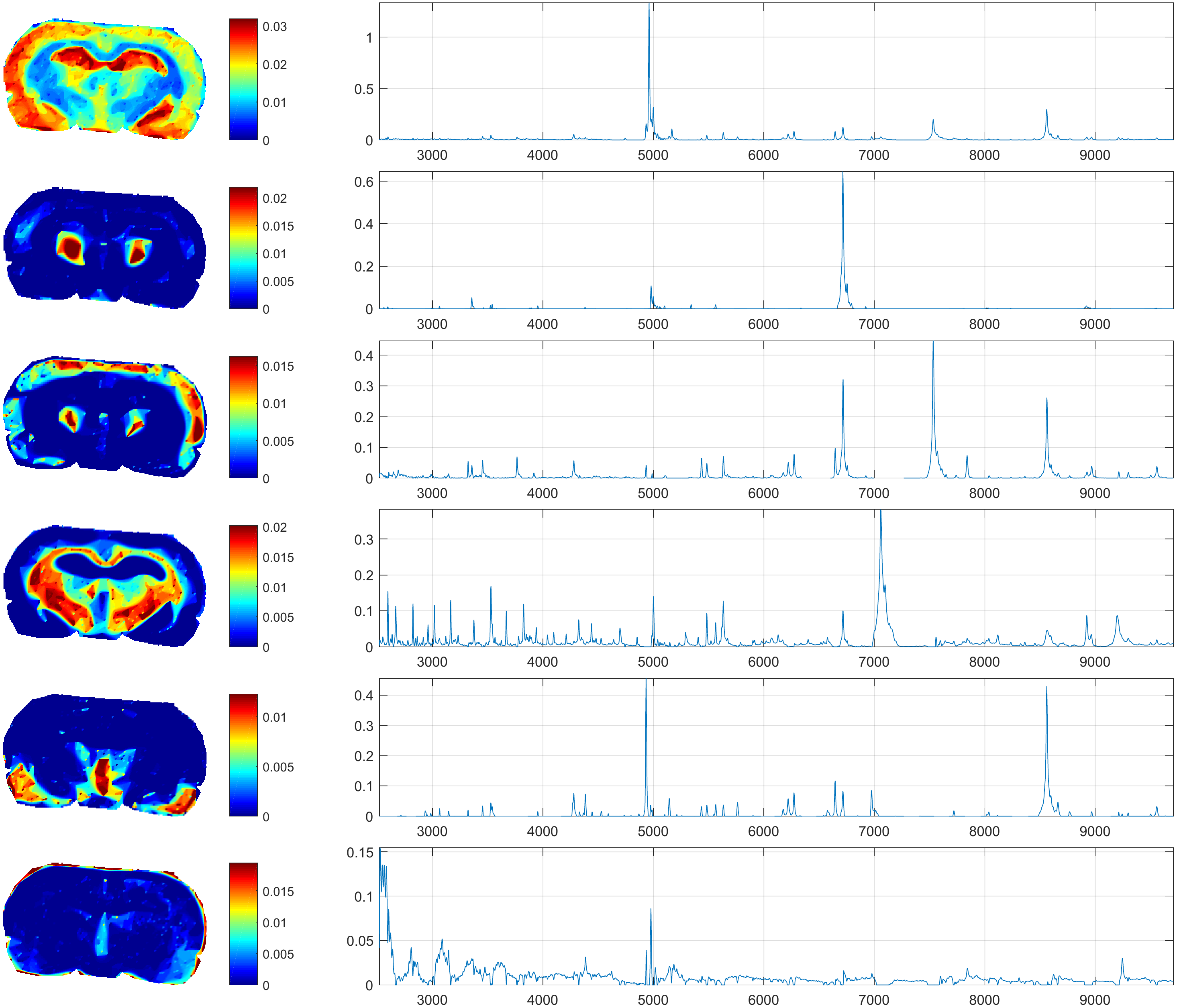}
	\caption{NMF of the rat brain dataset for $ p=6. $ Orthogonality constraints on the channels with $\sigma_{\myvec{K}, 1}=1$ and $\sigma_{\myvec{K}, 2}=1$ and $ \TV $-penalty term with $ \tau=0.4 $ and $ \varepsilon_{\TV}= 10^{-7}. $}
	\label{fig:numeric3}
\end{figure}

\begin{figure}
	\centering
	\includegraphics[width=0.8\textwidth]{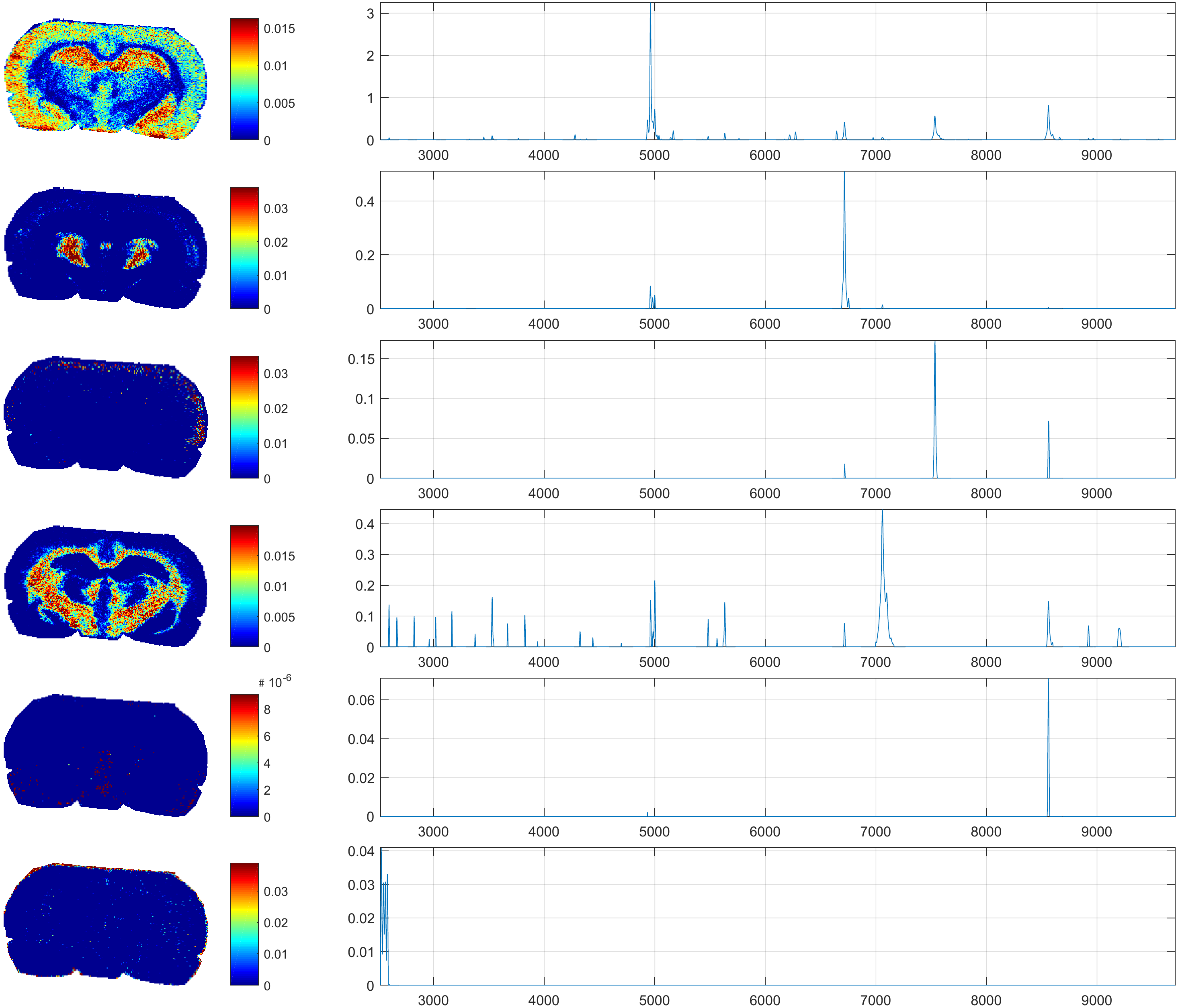}
	\caption{NMF of the rat brain dataset for $ p=6. $ Orthogonality constraints on the channels with $\sigma_{\myvec{K}, 1}=10$ and $\sigma_{\myvec{K}, 2}=10$ and sparsity penalty term on $ \myvec{X} $ with $ \lambda = 0.06. $ The sparsity penalty term has in connection with the orthogonality constraint a comparatively strong influence on the NMF computation: The sparsity in the spectra increases significantly and thus their biological interpretability, whereas the anatomic structure in the pseudo channels diminishes.}
	\label{fig:numeric4}
\end{figure}
\section{Conclusion}
In this paper, we investigated methods based on surrogate minimization approaches for the solution of NMF problems. The interest in NMF methods is related to its importance for several machine learning tasks. Application for large data sets require that the resulting algorithms are very efficient and that iteration schemes only need simple matrix-vector multiplications.\newline
The state of the art for constructing appropriate surrogates are based on case-by-case studies for the different, considered NMF models. In this paper, we embedded the different approaches in a general framework, which allowed us to analyze several extensions to the NMF cost functional, including $ \ell_1 $- and $ \ell_2 $-regularization, orthogonality constraints, total variation penalties as well as extensions, which leaded to supervised NMF concepts.\newline
Secondly, we analyzed surrogates in the context of the related  iteration schemes, which are based on first order optimality conditions. The requirement of separability as well as the need of having multiplicative updates, which preserve non-negativity without additional projections, were analyzed. This resulted in a general description of algorithms for alternating minimization of constrained NMF functionals. The potential of these methods is confirmed by numerical tests using hyperspectral data from a MALDI imaging experiment.\newline
Several further directions of research would be of interest. First of all, besides the most widely used penalty terms discussed in this paper further penalty terms, e.g. higher order TV-terms, could be considered. Secondly, construction principles for more general discrepancy terms could be analyzed (see also \cite{fevotte}).\newline
Potentially more importantly, this paper contains only very first results for combining NMF constructions directly with subsequent classification tasks. The question of an appropriate surrogate functional for the supervised NMF model with logistic regression used in \cite{leuschner18} remains unanswered and also the comparison with algorithmic alternatives such as ADMM methods needs to be explored.

\begin{acknowledgements}
The authors want to thank Christine DeMol: her excellent presentations at several conferences and our joint discussions were the starting point for this paper. \newline
The authors would also like to thank the German Science Foundation. The work on this paper was funded by project no. GRK 2224/1 (Reasearch traning group $\pi^3$ parameter identification: analysis, algorithms, applications).\newline
The results in this paper are based on the Master thesis of the first author.
\end{acknowledgements}

\bibliographystyle{spmpsci}      
\bibliography{ms}   

\newpage
\begin{appendices}
\section{Details on the Derivation of the Algorithms in Section \ref{sec:Surrogate_based_NMF_algorithms}} \label{app:sec:derivation_of_the_update_rules}
In this section, we give a more detailed derivation of the algorithms presented in Section \ref{sec:Surrogate_based_NMF_algorithms}. We start with the less complex case of the Frobenius norm as discrepancy term and then turn to the Kullback-Leibler divergence. To cover both aspects, we derive the update rules of $ \myvec{X} $ for the Frobenius discrepancy term and of $ \myvec{K} $ in the case of the KLD. We will also take a closer look at the effect of $ \kappa $ in equation \eqref{eq:Low Quadratic Bound Principle:Lambda} with respect to the LQBP construction principle.

\subsection{Frobenius Norm} \label{app:subsec:Frobenius-norm}
We consider the general cost function described in Section \ref{sec:Surrogate_based_NMF_algorithms} for the case of the Frobenius norm. To compute the update rules for $ \myvec{X},$ it is enough to examine the function
\begin{align*}
F(\myvec{X}) &\coloneqq \underbrace{\dfrac{1}{2}\Vert \myvec{Y}-\myvec{KX}\Vert_F^2 + \lambda \Vert \myvec{X} \Vert_1}_{\eqqcolon F_1(\myvec{X})} + \frac{\nu}{2} {\Vert \myvec{X} \Vert}_F^2 \\
&+ \dfrac{\sigma_{\myvec{X}, 1}}{2} \Vert \myvec{I}- \myvec{X} \myvec{W}^\intercal \Vert_F^2 + \dfrac{\sigma_{\myvec{X}, 2}}{2} \Vert \myvec{W}-\myvec{X}\Vert_F^2 + \dfrac{\rho}{2} \Vert \myvec{u} - \myvec{YX}^\intercal \myvec{\beta} \Vert^2,
\end{align*}
where all terms independent from $ \myvec{X} $ are omitted. Based on Remark \ref{rem:Addition von Surrogat-Funktionen} and following the discussion of Section \ref{subsec:Surrogates for lp-penalty terms}, the construction of a surrogate for $ F $ can be done separately for $ F_1 $ and the remaining penalty terms.\newline
The construction of a surrogate for $ F_1 $ with the LQBP principle as it has been done similarly in Subsection \ref{subsec:Surrogates for lp-penalty terms} is essential. If we would use instead a surrogate for the discrepancy term $ \nicefrac{1}{2} \Vert \myvec{Y}-\myvec{KX}\Vert_F^2 $ from Subsection \ref{subsec:Frobenius discrepancy and LQBP} or \ref{subsec: Frobenius_discrepancy_and_Jensens_inequality} and take the $ \ell_1 $-penalty term $\lambda \Vert \myvec{X} \Vert_1 $ as surrogate itself, it is easy to see that this would not lead to multiplicative update rules. It is the $ \ell_1 $-penalty term which causes the difficulty. Computing the first order optimality condition for the corresponding surrogate $ \tilde{Q}_F(\myvec{X}, \myvec{A}) \eqqcolon \lambda \Vert \myvec{X} \Vert_1 + \hat{Q}_F(\myvec{X}, \myvec{A}) $ with respect to $ \myvec{X} $ would lead to
\begin{equation*}
	0 = \dfrac{\partial \tilde{Q}_F}{\partial X_{\xi \zeta}}(\myvec{X}, \myvec{A}) = \lambda + \dfrac{\partial \hat{Q}_F}{\partial X_{\xi \zeta}}(\myvec{X}, \myvec{A}),
\end{equation*}
where the second term on the right hand side does not depend on $ \lambda. $ Hence, we get a sign in front of $ \lambda $ by solving the equation for $ X_{\xi \zeta} $ and we will not obtain multiplicative updates for $ \myvec{X}. $\newline
A correct surrogate is obtained by using the LQBP principle to $ F_1 $ and leads to
\begin{equation*}
	Q_F(\myvec{X}, \myvec{A}) \coloneqq Q_{F_1}(\myvec{X}, \myvec{A}) + \dfrac{\nu}{2} \Vert \myvec{X} \Vert_F^2 + Q_{\text{Orth}}(\myvec{X}, \myvec{A}) + Q_{\text{LR}}(\myvec{X}, \myvec{A})
\end{equation*}
with
\begin{align*}
	Q_{F_1}(\myvec{X}, \myvec{A}) = \sum_{j=1}^m & \ f_{\myvec{Y}_{\bullet,j}}(\myvec{A}_{\bullet, j}) + \nabla f_{\myvec{Y}_{\bullet,j}}(\myvec{A}_{\bullet, j})^\intercal (\myvec{X}_{\bullet, j} - \myvec{A}_{\bullet, j})\\
	&+ \dfrac{1}{2} (\myvec{X}_{\bullet, j} - \myvec{A}_{\bullet, j})^\intercal \myvec{\Lambda}_{f_{\myvec{Y}_{\bullet,j}}}(\myvec{A}_{\bullet, j})(\myvec{X}_{\bullet, j} - \myvec{A}_{\bullet, j}),
\end{align*}
where $ f_{\myvec{Y}_{\bullet,j}}:\mathbb{R}_{\geq 0}^p \to \mathbb{R} $ is defined as
\begin{equation*}
	f_{\myvec{Y}_{\bullet,j}}(\myvec{x}) \coloneqq \dfrac{1}{2} \Vert \myvec{Y}_{\bullet,j}-\myvec{Kx}\Vert^2 + \lambda \Vert \myvec{x} \Vert_1
\end{equation*}
and with the diagonal matrix
\begin{equation*}
	\Lambda_{f_{\myvec{Y}_{\bullet,j}}}(\myvec{A}_{\bullet, j})_{kk} = \dfrac{(\nabla^2 f_{{\myvec{Y}_{\bullet,j}}}(\myvec{A}_{\bullet, j})\ A_{\bullet, j})_k + \kappa_k}{A_{k j}} = \dfrac{(K^\intercal K  A_{\bullet, j})_k + \kappa_k}{A_{k j}}.
\end{equation*}
The functionals $ Q_{\text{Orth}}$ resp. $ Q_{\text{LR}}$ are the surrogates obtained from Theorem \ref{satz:Surrogate functionals for orthogonal NMF} resp. Theorem \ref{satz:Surrogat-Funktionen zur linearen Regression (SPS)}. It will turn out that an appropriate choice of $ \kappa_k $ will ensure a multiplicative NMF algorithm.\newline
The computation of the first order optimality condition for $ Q_F $ leads to
\begin{align*}
	0 = \dfrac{\partial Q_F}{\partial X_{\xi \zeta}}(\myvec{X}, \myvec{A}) &= (K^\intercal KA)_{\xi \zeta} - (K^\intercal Y)_{\xi \zeta} + \lambda + \dfrac{(K^\intercal KA)_{\xi \zeta} + \kappa_\xi}{A_{\xi, \zeta}}(X_{\xi \zeta} - A_{\xi \zeta})\\
	&+\sigma_{\myvec{X}, 1} \sum_{k=1}^p W_{k\zeta}\left ( \dfrac{X_{\xi \zeta}}{A_{\xi \zeta}} (AW^\intercal)_{\xi k} - \delta_{\xi k} \right ) + \sigma_{\myvec{X}, 2} (X_{\xi \zeta} - W_{\xi \zeta})\\
	&+\rho \beta_\xi \sum_{i=1}^n Y_{i \zeta} \left ( \dfrac{X_{\xi \zeta}}{A_{\xi \zeta}} (YA^\intercal\beta)_i - u_i \right ) + \nu X_{\xi \zeta}.
\end{align*}
One can see immediately, that the choice of $ \kappa_\xi \coloneqq \lambda $ for all $ \xi \in \{1, \dots, p\} $ is appropriate to get rid of the problematic term $ \lambda. $ Hence, we obtain
\begin{align*}
	0 &= - (K^\intercal Y)_{\xi \zeta} + \dfrac{X_{\xi \zeta}}{A_{\xi \zeta}} \left ((K^\intercal KA)_{\xi \zeta} + \lambda \right )\\
	&+\sigma_{\myvec{X}, 1} \sum_{k=1}^p W_{k\zeta}\left ( \dfrac{X_{\xi \zeta}}{A_{\xi \zeta}} (AW^\intercal)_{\xi k} - \delta_{\xi k} \right ) + \sigma_{\myvec{X}, 2} (X_{\xi \zeta} - W_{\xi \zeta})\\
	&+\rho \beta_\xi \sum_{i=1}^n Y_{i \zeta} \left ( \dfrac{X_{\xi \zeta}}{A_{\xi \zeta}} (YA^\intercal\beta)_i - u_i \right ) + \nu X_{\xi \zeta}.
\end{align*}
Reordering the terms leads to
\begin{align*}
	&(K^\intercal Y)_{\xi \zeta} + (\sigma_{\myvec{X}, 1} + \sigma_{\myvec{X}, 2}) W_{\xi \zeta} + \rho \beta_\xi (Y^\intercal u)_\zeta \\
	&= \dfrac{X_{\xi \zeta}}{A_{\xi \zeta}} \big( (K^\intercal KA)_{\xi \zeta} + \nu A_{\xi \zeta} + \lambda + \rho \beta_\xi (Y^\intercal Y A^\intercal \beta)_\zeta + \sigma_{\myvec{X}, 1} (AW^\intercal W)_{\xi \zeta} + \sigma_{\myvec{X}, 2}A_{\xi\zeta} \big).
\end{align*}
Solving for $ X_{\xi \zeta} $ and extending the equation to the whole matrix $ \myvec{X} $ yields finally
\begin{align*}
	\myvec{X} = \myvec{A} \circ \dfrac{\myvec{K}^\intercal \myvec{Y} + (\sigma_{\myvec{X}, 1} + \sigma_{\myvec{X}, 2}) \myvec{W} + \rho \myvec{\beta u}^\intercal \myvec{Y}}{\myvec{K}^\intercal \myvec{KA} + (\sigma_{\myvec{X}, 2} + \nu) \myvec{A} + \lambda \myvec{1}_{p\times m} + \rho \myvec{\beta\beta}^\intercal \myvec{AY}^\intercal\myvec{Y} + \sigma_{\myvec{X}, 1}\myvec{AW}^\intercal\myvec{W}}.
\end{align*}
By exploiting the surrogate minimization principle as described in Lemma \ref{lem:Monotoner Abfall durch Surrogat-Funktion}, we get finally the update rule for $ \myvec{X} $ presented in Section \ref{sec:Surrogate_based_NMF_algorithms}.
\subsection{Kullback-Leibler Divergence} \label{app:subsec:KLD}
We take Equation \eqref{eq:NMF-Problem mit KLD:PartielleAbleitungQF} as our starting point. The computation of the first order optimality condition gives
\begin{align*}
	\dfrac{\partial Q_F}{\partial K_{\xi \zeta}}(\myvec{K}, \myvec{A}) &= \sum_{j=1}^m \left [X_{\zeta j} - \dfrac{Y_{\xi j}}{(AX)_{\xi j}} A_{\xi \zeta} X_{\zeta j} \dfrac{1}{K_{\xi \zeta}}\right ] + \mu K_{\xi \zeta} + \omega + \sigma_{\myvec{K}, 2} (K_{\xi \zeta} - V_{\xi \zeta})\\
	 &+ \tau \psi_\zeta P(\myvec{A})_{\xi \zeta} (K_{\xi \zeta} - Z(\myvec{A})_{\xi \zeta}) + \sigma_{\myvec{K}, 1} \sum_{k=1}^p V_{\xi k} \left ( \dfrac{K_{\xi \zeta}}{A_{\xi \zeta}} (V^\intercal A)_{k\zeta} - \delta_{k\zeta} \right )\\
	 &=0.
\end{align*}
Multiplying on both sides with $ K_{\xi \zeta} $ and sorting the terms already gives the system of quadratic equations mentioned in Section \ref{sec:Surrogate_based_NMF_algorithms}, namely
\begin{align*}
&K_{\xi \zeta}^2 \left (\mu + \tau \psi_\zeta P(\myvec{A})_{\xi \zeta} + \dfrac{\sigma_{\myvec{K}, 1}}{A_{\xi \zeta}} (VV^\intercal A)_{\xi \zeta} + \sigma_{\myvec{K}, 2}\right ) \\
+&K_{\xi \zeta} \left ( \sum_{j=1}^{m} X_{\zeta j} + \omega - \tau \psi_\zeta P(\myvec{A})_{\xi \zeta} Z(\myvec{A})_{\xi \zeta} - (\sigma_{\myvec{K}, 1} + \sigma_{\myvec{K}, 1}) V_{\xi \zeta}  \right ) \\
=&A_{\xi \zeta} \sum_{j=1}^{m} \dfrac{Y_{\xi j}}{(AX)_{\xi j}} X_{\zeta j}.
\end{align*}
Taking into account that
\begin{equation*}
	\sum_{j=1}^{m} \dfrac{Y_{\xi j}}{(AX)_{\xi j}} X_{\zeta j} = \left ( \dfrac{Y}{AX} X^\intercal \right)_{\xi \zeta} \quad \text{and} \quad \sum_{j=1}^{m} X_{\zeta j} = (1_{n\times m}X^\intercal)_{\xi \zeta},
\end{equation*}
we obtain the explicit solution of $ K_{\xi \zeta} $ by completing the square and get
\begin{align*}
	K_{\xi \zeta} &= \left [  \dfrac{A_{\xi \zeta}}{\mu + \tau \psi_\zeta P(\myvec{A})_{\xi \zeta} + \dfrac{\sigma_{\myvec{K}, 1}}{A_{\xi \zeta}} (VV^\intercal A)_{\xi \zeta} + \sigma_{\myvec{K}, 2}} \left ( \dfrac{Y}{AX} X^\intercal \right)_{\xi \zeta} \right.\\
	&\left. + \dfrac{1}{4} \left ( \dfrac{(1_{n\times m}X^\intercal)_{\xi \zeta} + \omega - \tau \psi_\zeta P(\myvec{A})_{\xi \zeta} Z(\myvec{A})_{\xi \zeta} - V_{\xi \zeta}(\sigma_{\myvec{K}, 1} + \sigma_{\myvec{K}, 1})}{\mu + \tau \psi_\zeta P(\myvec{A})_{\xi \zeta} + \dfrac{\sigma_{\myvec{K}, 1}}{A_{\xi \zeta}} (VV^\intercal A)_{\xi \zeta} + \sigma_{\myvec{K}, 2}} \right )^2 \ \right ]^{\nicefrac{1}{2}}\\
	&-\dfrac{1}{2} \left ( \dfrac{(1_{n\times m}X^\intercal)_{\xi \zeta} + \omega - \tau \psi_\zeta P(\myvec{A})_{\xi \zeta} Z(\myvec{A})_{\xi \zeta} - V_{\xi \zeta}(\sigma_{\myvec{K}, 1} + \sigma_{\myvec{K}, 1})}{\mu + \tau \psi_\zeta P(\myvec{A})_{\xi \zeta} + \dfrac{\sigma_{\myvec{K}, 1}}{A_{\xi \zeta}} (VV^\intercal A)_{\xi \zeta} + \sigma_{\myvec{K}, 2}} \right ).
\end{align*}
This equation holds for arbitrary $ \xi\in \{1, \dots, n \} $ and $ \zeta\in \{ 1,\dots,k \}. $ We therefore can extend this relation to the whole matrix $ \myvec{K} $ and obtain
\begin{align*}
	\text{\begin{normalsize} $\myvec{K}$ \end{normalsize}} &\text{\begin{normalsize} $=\left [ \left ( \dfrac{\myvec{A}}{\mu \myvec{1}_{n\times p} + \tau \myvec{\Psi} \circ P(\myvec{A}) + \sigma_{\myvec{K}, 1} \dfrac{\myvec{VV}^\intercal \myvec{A}}{\myvec{A}} + \sigma_{\myvec{K}, 2} \myvec{1}_{n\times p} } \right ) \circ \left ( \dfrac{\myvec{Y}}{\myvec{A}\myvec{X}}\myvec{X}^\intercal \right )\right .$ \end{normalsize}} \\
	&\text{\begin{normalsize} $\left . +\dfrac{1}{4} \left ( \dfrac{\myvec{1}_{n\times m}\myvec{X}^\intercal + \omega\myvec{1}_{n\times p} - \tau \myvec{\Psi} \circ P(\myvec{A}) \circ Z(\myvec{A}) - (\sigma_{\myvec{K}, 1} + \sigma_{\myvec{K}, 2}) \myvec{V} }{\mu \myvec{1}_{n\times p} + \tau \myvec{\Psi} \circ P(\myvec{A}) + \sigma_{\myvec{K}, 1} \dfrac{\myvec{VV}^\intercal \myvec{A}}{\myvec{A}} + \sigma_{\myvec{K}, 2} \myvec{1}_{n\times p}} \right )^2 \ \right ]^{\nicefrac{1}{2}}$ \end{normalsize}} \\
	&\text{\begin{normalsize} $-\dfrac{1}{2} \left ( \dfrac{\myvec{1}_{n\times m}\myvec{X}^\intercal + \omega\myvec{1}_{n\times p} - \tau \myvec{\Psi} \circ P(\myvec{A}) \circ Z(\myvec{A}) - (\sigma_{\myvec{K}, 1} + \sigma_{\myvec{K}, 2}) \myvec{V} }{\mu \myvec{1}_{n\times p} + \tau \myvec{\Psi} \circ P(\myvec{A}) + \sigma_{\myvec{K}, 1} \dfrac{\myvec{VV}^\intercal \myvec{A}}{\myvec{A}} + \sigma_{\myvec{K}, 2} \myvec{1}_{n\times p}} \right ),$ \end{normalsize}}
\end{align*}
which is exactly the described update rule in Section \ref{sec:Surrogate_based_NMF_algorithms}.
\section{Kullback-Leibler Divergence Discrepancy and LQBP} \label{app:sec:KLD_and_LQBP}
In this Section, we will use the LQBP construction principle to derive a multiplicative algorithm for the cost function
\begin{equation*}
	F(\myvec{X}) \coloneqq \KL(\myvec{Y}, \myvec{KX}) = \sum_{j=1}^m \KL(\myvec{Y}_{\bullet, j},\myvec{K}\myvec{X}_{\bullet, j}) \eqqcolon \sum_{j=1}^m f_{\myvec{Y}_{\bullet, j}}(\myvec{X}_{\bullet, j})
\end{equation*}
Similar to the approach in Appendix $ \ref{app:subsec:KLD}, $ we define according to the LQBP principle the surrogate
\begin{align*}
Q_{F}(\myvec{X}, \myvec{A}) = \sum_{j=1}^m & \ f_{\myvec{Y}_{\bullet,j}}(\myvec{A}_{\bullet, j}) + \nabla f_{\myvec{Y}_{\bullet,j}}(\myvec{A}_{\bullet, j})^\intercal (\myvec{X}_{\bullet, j} - \myvec{A}_{\bullet, j})\\
&+ \dfrac{1}{2} (\myvec{X}_{\bullet, j} - \myvec{A}_{\bullet, j})^\intercal \myvec{\Lambda}_{f_{\myvec{Y}_{\bullet,j}}}(\myvec{A}_{\bullet, j})(\myvec{X}_{\bullet, j} - \myvec{A}_{\bullet, j})
\end{align*}
with the diagonal matrix
\begin{equation*}
\Lambda_{f_{\myvec{Y}_{\bullet,j}}}(\myvec{A}_{\bullet, j})_{kk} = \dfrac{(\nabla^2 f_{{\myvec{Y}_{\bullet,j}}}(\myvec{A}_{\bullet, j})\ A_{\bullet, j})_k + \kappa_k}{A_{k j}}.
\end{equation*}
It follows for the partial derivatives of $ f $
\begin{align*}
\dfrac{\partial f_{\myvec{Y}_{\bullet, \zeta}}}{\partial X_{\beta \zeta}}(\myvec{X}_{\bullet, \zeta}) &= -\sum_{i=1}^n \dfrac{Y_{i\zeta} K_{i\beta}}{(KX)_{i\zeta}} + K_{i\beta},\\
\dfrac{\partial^2 f_{\myvec{Y}_{\bullet, \zeta}}}{\partial X_{\alpha \zeta}\partial X_{\beta \zeta}}(\myvec{X}_{\bullet, \zeta}) &=\sum_{i=1}^n \dfrac{Y_{i\zeta} K_{i\alpha} K_{i\beta}}{(KX)_{i\zeta}^2}.
\end{align*}
The first order optimality condition of the surrogate functional leads then to
\begin{align*}
0 = \dfrac{\partial Q_F}{\partial X_{\xi \zeta}}(\myvec{X}, \myvec{A}) &= -\sum_{i=1}^n \dfrac{Y_{i\zeta} K_{i\xi}}{(KA)_{i\zeta}} + \sum_{i=1}^n K_{i\xi}\\
&+ \dfrac{\sum_{i=1}^n \dfrac{Y_{i\zeta} K_{i\xi}}{(KA)_{i\zeta}} + \kappa_\xi}{A_{\xi \zeta}}(X_{\xi\zeta} - A_{\xi\zeta}).
\end{align*}
Setting $ \kappa_\xi \coloneqq \sum_{i=1}^n K_{i\xi}$ and solving for $ X_{\xi \zeta} $ leads finally to the multiplicative update rule
\begin{equation*}
	\myvec{X}^{[d+1]} = \dfrac{2 \myvec{X}^{[d]}}{{\myvec{K}^{[d+1]}}^\intercal \dfrac{\myvec{Y}}{\myvec{K}^{[d+1]}\myvec{X}^{[d]}} + {\myvec{K}^{[d+1]}}^\intercal \myvec{1}_{n\times m}} \circ {\myvec{K}^{[d+1]}}^\intercal \dfrac{\myvec{Y}}{\myvec{K}^{[d+1]}\myvec{X}^{[d]}},
\end{equation*}
which differs from the classical update rule for the KLD described in \cite{demol,lecharlier13,leeseung}.
\section{Surrogate of the TV Penalty - Separability} \label{app:sec:TV_Separability}
In this section, we will prove the separability of the surrogate functional 
\begin{equation}\label{eq:prf:Separabilität von QTV:QTV}
	\begin{aligned}
		\text{ \begin{footnotesize} $Q_{\TV}(\myvec{K}, \myvec{A}) = \sum_{k=1}^p \psi_k  \sum_{i=1}^n \dfrac{1}{\vert \nabla_{ik} A \vert} \bigg( \varepsilon_{\TV}^2 + \sum_{\ell\in N_i} \mkern-25mu $ \end{footnotesize} } & \text{ \begin{footnotesize} $K_{ik}^2 + K_{\ell k}^2 - K_{ik} ( A_{ik} + A_{\ell k} ) $ \end{footnotesize} }  \\
		&\text{ \begin{footnotesize} $ - K_{\ell k} ( A_{ik} + A_{\ell k} ) + A_{ik}^2 + A_{\ell k}^2 \bigg).$ \end{footnotesize} }
	\end{aligned}
\end{equation}
described in Theorem \ref{satz:Surrogat-Funktion zum TV-Strafterm}. Furthermore, we choose an arbitrary $ s \in \{ 1, \dots, n \}$ and $ t\in \{1, \dots, p\}. $ The aim is now to find all terms in \eqref{eq:prf:Separabilität von QTV:QTV} with $ K_{st}^2$ and $ K_{st}. $\newline
To find all quadratic terms $ K_{st}^2$ in \eqref{eq:prf:Separabilität von QTV:QTV}, we see that we have to fix the index $ k, $ such that $ k=t. $ The remaining indices in \eqref{eq:prf:Separabilität von QTV:QTV}, which have to be analyzed, are $ i $ and $ \ell. $\newline
For the case $ i=s,$ we find that the preceding coefficient is
\begin{equation*}
	\dfrac{\psi_t}{\vert \nabla_{st} \myvec{A} \vert} \sum_{r\in N_s} 1.
\end{equation*}
The case $ \ell=s $ can only occur for those indices $ i, $ which satisfy $ s\in N_i. $ The definition of the adjoint neighbourhood pixels gives
\begin{equation*} 
\forall i: \ s\in N_i \quad \Leftrightarrow \quad \forall i: \ i\in \bar{N}_s.
\end{equation*}
Therefore, the corresponding preceding coefficient is here
\begin{equation*}
	\psi_t \sum_{r\in \bar{N}_s} \dfrac{1}{\vert \nabla_{rt} A \vert}.
\end{equation*}
Altogether, we obtain for the quadratic terms $ K_{st}^2$ the coefficient
\begin{equation*} 
	\tilde{P}_{s t}(\vec{A}) \coloneqq \psi_t \left ( \dfrac{1}{\vert \nabla_{st} \myvec{A} \vert} \sum_{r\in N_s} 1 + \sum_{r\in \bar{N}_s} \dfrac{1}{\vert \nabla_{rt} \myvec{A} \vert} \right ),
\end{equation*}
such that $ \tilde{P}_{s t}(\myvec{A}) \cdot K_{s t}^2 $ takes all quadratic terms of the matrix entries of $ \myvec{K} $ in the surrogate functional into account.\newline
The same can be done with the linear terms $ K_{st}, $ which leads to the coefficient
\begin{equation*}
	\tilde{Z}_{s t}(\vec{A}) \coloneqq -\psi_t \left ( \dfrac{1}{\vert \nabla_{st} A \vert} \sum_{r\in {N}_s} \left [ A_{s t} + A_{r t} \right ] + \sum_{r\in \bar{N}_s} \dfrac{A_{s t} + A_{r t}}{\vert \nabla_{rt} A \vert} \right ).
\end{equation*}
Therefore, the surrogate $ Q_{\text{TV}} $ can be written as
\begin{equation*}
	Q_{\TV}(\vec{K}, \vec{A}) = \sum_{t=1}^p \sum_{s=1}^n  \left [\tilde{P}_{s t}(\vec{A}) \cdot K_{s t}^2 + \tilde{Z}_{s t}(\vec{A}) \cdot K_{s t} \right ] \ + \tilde{C}(\vec{A})
\end{equation*}
for some function $ \tilde{C}, $ which only depends on $ \myvec{A}.$ This shows the separability of the surrogate.

\end{appendices}

\end{document}